\documentclass{article} 
\usepackage{iclr2023_conference,times}


\usepackage[utf8]{inputenc} 
\usepackage[T1]{fontenc}    
\usepackage{hyperref}       
\usepackage{url}            
\usepackage{booktabs}       
\usepackage{amsfonts}       
\usepackage{nicefrac}       
\usepackage{microtype}      
\usepackage{xcolor}         

\usepackage{microtype}
\usepackage{graphicx}
\usepackage{booktabs} 

\usepackage{hyperref}



\usepackage{amsfonts}       
\usepackage{nicefrac}       
\usepackage{microtype}      
\usepackage{enumitem}
\usepackage{mathtools}
\usepackage{makecell}
\usepackage{multirow}
\usepackage{wasysym}
\usepackage{multirow}

\usepackage{caption}
\usepackage{subcaption}
\usepackage{wrapfig}
\usepackage{units}

\usepackage[ruled,vlined]{algorithm2e}

\usepackage{amsmath,amssymb,amsthm}
\newtheorem{theorem}{Theorem}
\newtheorem{lemma}{Lemma}

\newtheorem{corollary}{Corollary}

\usepackage{enumitem}
\setlist{leftmargin=*,itemsep=0pt}


\DeclareMathOperator*{\argsup}{arg\,sup}
\DeclareMathOperator*{\arginf}{arg\,inf}

\title{Neural Optimal Transport}


\author{Alexander Korotin \\
Skolkovo Institute of Science and Technology\\
Artificial Intelligence Research Institute\\
Moscow, Russia \\
\texttt{a.korotin@skoltech.ru} \\
\And
Daniil Selikhanovych \\
Skolkovo Institute of Science and Technology \\
Moscow, Russia \\
\texttt{selikhanovychdaniil@gmail.com} \\
\AND
Evgeny Burnaev \\
Skolkovo Institute of Science and Technology\\
Artificial Intelligence Research Institute\\
Moscow, Russia \\
\texttt{e.burnaev@skoltech.ru}
}

%

\iclrfinalcopy

\begin{document}

\maketitle

\vspace{-6mm}\begin{abstract}
We present a novel neural-networks-based algorithm to compute optimal transport maps and plans for strong and weak transport costs. To justify the usage of neural networks, we prove that they are universal approximators of transport plans between probability distributions. We evaluate the performance of our optimal transport algorithm on toy examples and on the unpaired image-to-image translation.
\end{abstract}

\begin{figure}[!h]
\vspace{-2mm}
\hspace{-9mm}
\begin{subfigure}[b]{0.54\linewidth}
\centering
\includegraphics[width=0.995\linewidth]{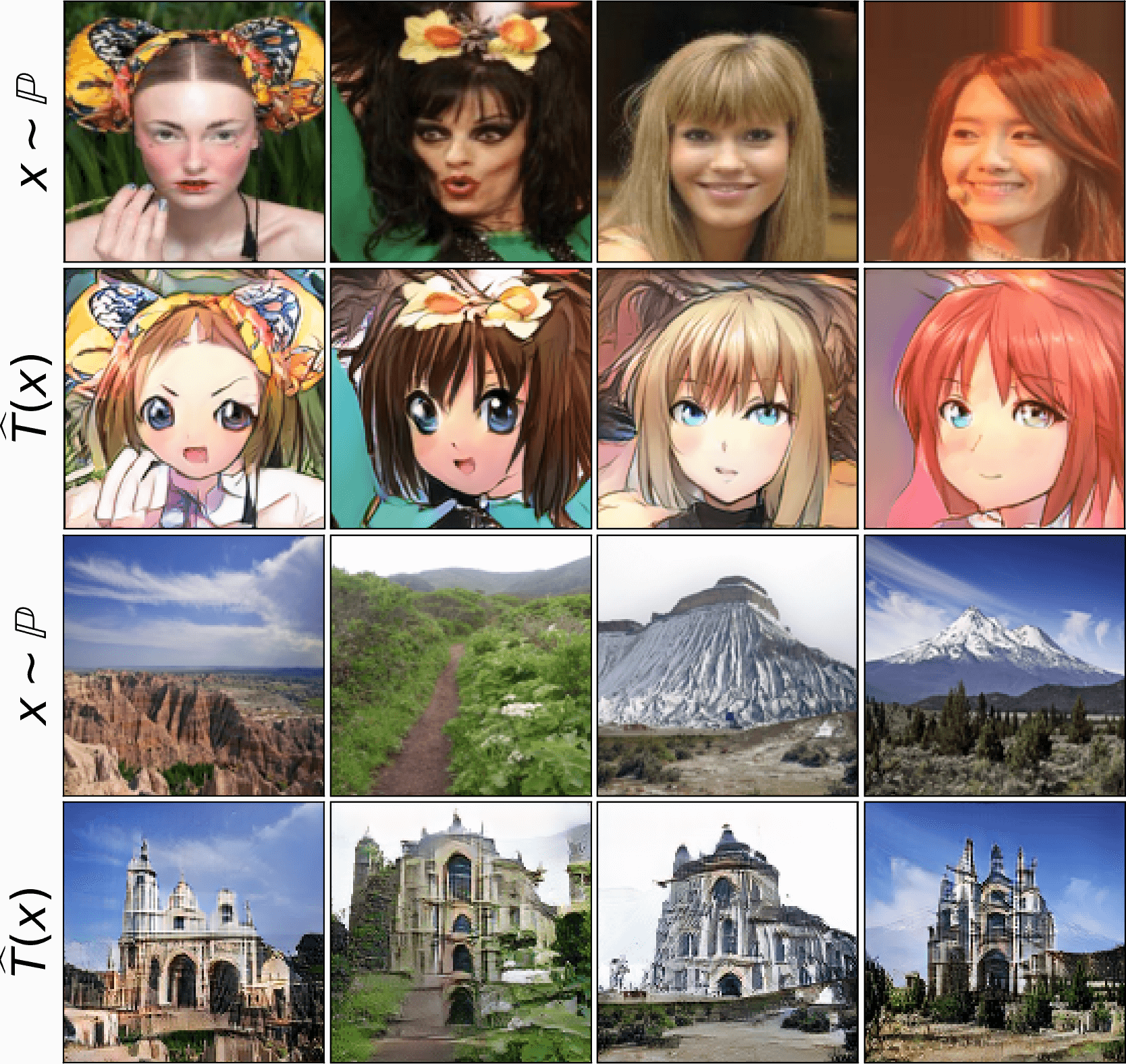}
\caption{\centering Celeba (female) $\rightarrow$ anime, outdoor $\rightarrow$ church,\protect\linebreak deterministic (one-to-one, $\mathbb{W}_{2}$).}
\label{fig:celeba-female-anime-128-1}
\end{subfigure}
\hspace{2mm}\begin{subfigure}[b]{0.54\linewidth}
\centering
\includegraphics[width=0.995\linewidth]{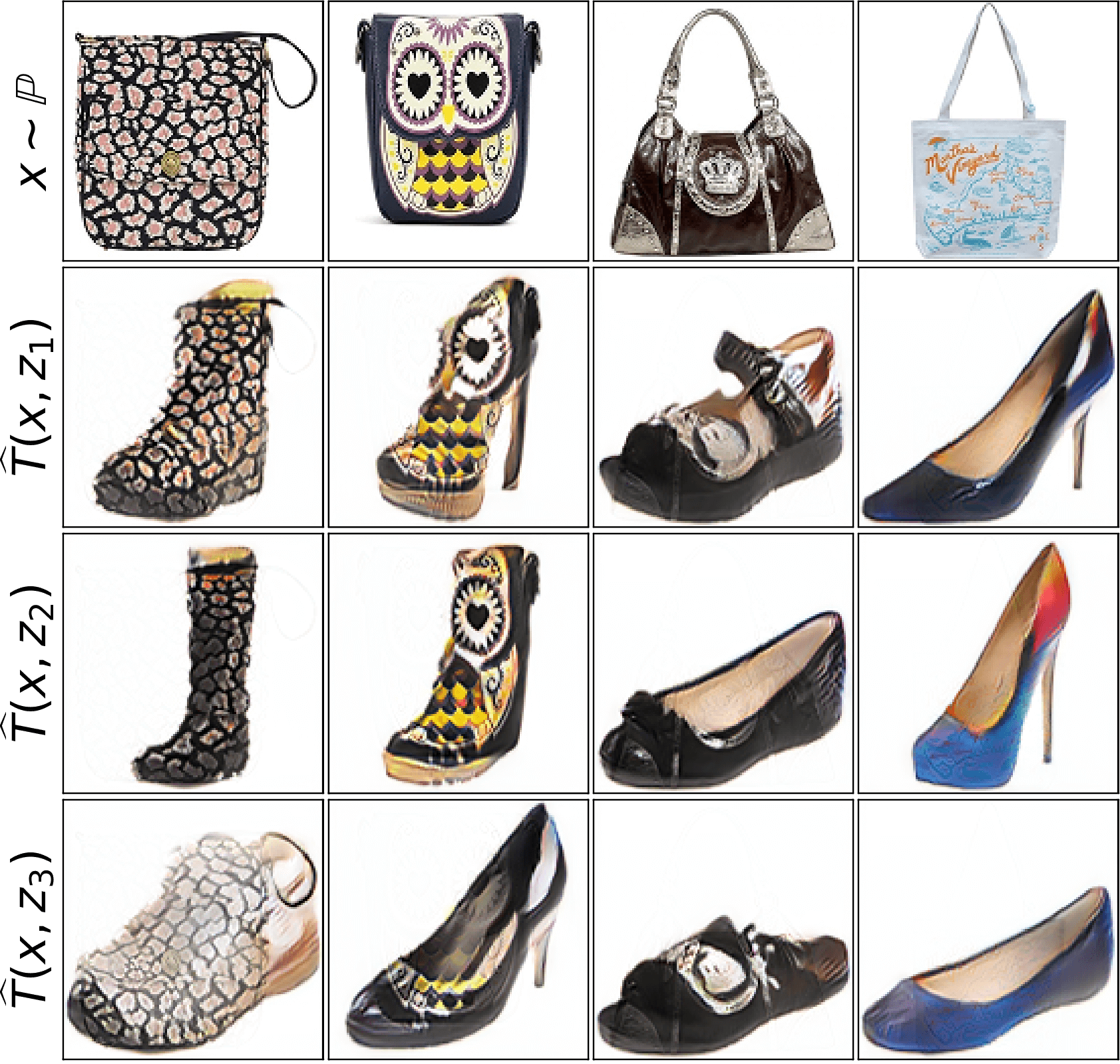}
\caption{\centering Handbags $\rightarrow$ shoes,\protect\linebreak stochastic (one-to-many, $\mathcal{W}_{2,1}$).}
\label{fig:handbag-shoes-128-weak-1}
\end{subfigure}\vspace{-1mm}
\caption{Unpaired translation with our Neural Optimal Transport (NOT) Algorithm \ref{algorithm-not}.}\vspace{-3mm}
\end{figure}

\vspace{-2mm}\section{Introduction}
\label{sec-intro}

\vspace{-2.5mm}Solving optimal transport (OT) problems with neural networks has become widespread in machine learning tentatively starting with the introduction of the large-scale OT \citep{seguy2017large} and Wasserstein GANs \citep{arjovsky2017wasserstein}. The majority of existing methods compute the \textbf{OT cost} and use it as the loss function to update the generator in generative models \citep{gulrajani2017improved,liu2019wasserstein,sanjabi2018convergence,petzka2017regularization}. Recently, \citep{rout2022generative,daniels2021score} have demonstrated that the \textbf{OT plan} itself can be used as a~generative model providing comparable performance in practical tasks.

\vspace{-1mm}In this paper, we focus on the methods which compute the OT plan. Most recent methods \citep{korotin2021neural,rout2022generative} consider OT for the quadratic transport cost (the Wasserstein-2 distance, $\mathbb{W}_{2}$) and recover a \textit{nonstochastic} OT plan, i.e., a \textit{deterministic} \textbf{OT map}. In general, it may not exist. \citep{daniels2021score} recover the entropy-regularized stochastic plan, but the procedures for learning the plan and sampling from it are extremely time-consuming due to using score-based models and the Langevin dynamics \citep[\wasyparagraph 6]{daniels2021score}.
 
\vspace{-1mm}\textbf{Contributions.} We propose a novel algorithm to compute deterministic and stochastic OT plans with deep neural networks (\wasyparagraph\ref{sec-derivation}, \wasyparagraph\ref{sec-optimization}). Our algorithm is designed for weak and strong optimal transport costs (\wasyparagraph\ref{sec-preliminaries}) and generalizes previously known scalable approaches (\wasyparagraph\ref{sec-related-work}, \wasyparagraph\ref{sec-relation-to-prior}). To reinforce the usage of neural nets, we prove that they are universal approximators of transport plans (\wasyparagraph\ref{sec-universal-approx}). We show that our algorithm can be applied to large-scale computer vision tasks (\wasyparagraph\ref{sec-evaluation}).

\textbf{Notations.} We use $\mathcal{X},\mathcal{Y},\mathcal{Z}$ to denote Polish spaces and $\mathcal{P}(\mathcal{X}), \mathcal{P}(\mathcal{Y}),\mathcal{P}(\mathcal{Z})$ to denote the respective sets of probability distributions on them. We denote the set of probability distributions on $\mathcal{X} \times \mathcal{Y}$ with marginals $\mathbb{P}$ and $\mathbb{Q}$ by $\Pi(\mathbb{P},\mathbb{Q})$.
For a measurable map $T:  \mathcal{X}\times\mathcal{Z}\rightarrow\mathcal{Y}$ (or $T:\mathcal{X}\rightarrow\mathcal{Y}$), we denote the associated push-forward operator by $T_{\#}$.

\vspace{-2.5mm}\section{Preliminaries}
\label{sec-preliminaries}

\vspace{-1mm}In this section, we provide key concepts of the OT theory \citep{villani2008optimal,santambrogio2015optimal,gozlan2017kantorovich,backhoff2019existence} that we use in our paper.

\begin{wrapfigure}{r}{0.45\textwidth}
  \vspace{-6mm}\begin{center}
    \includegraphics[width=\linewidth]{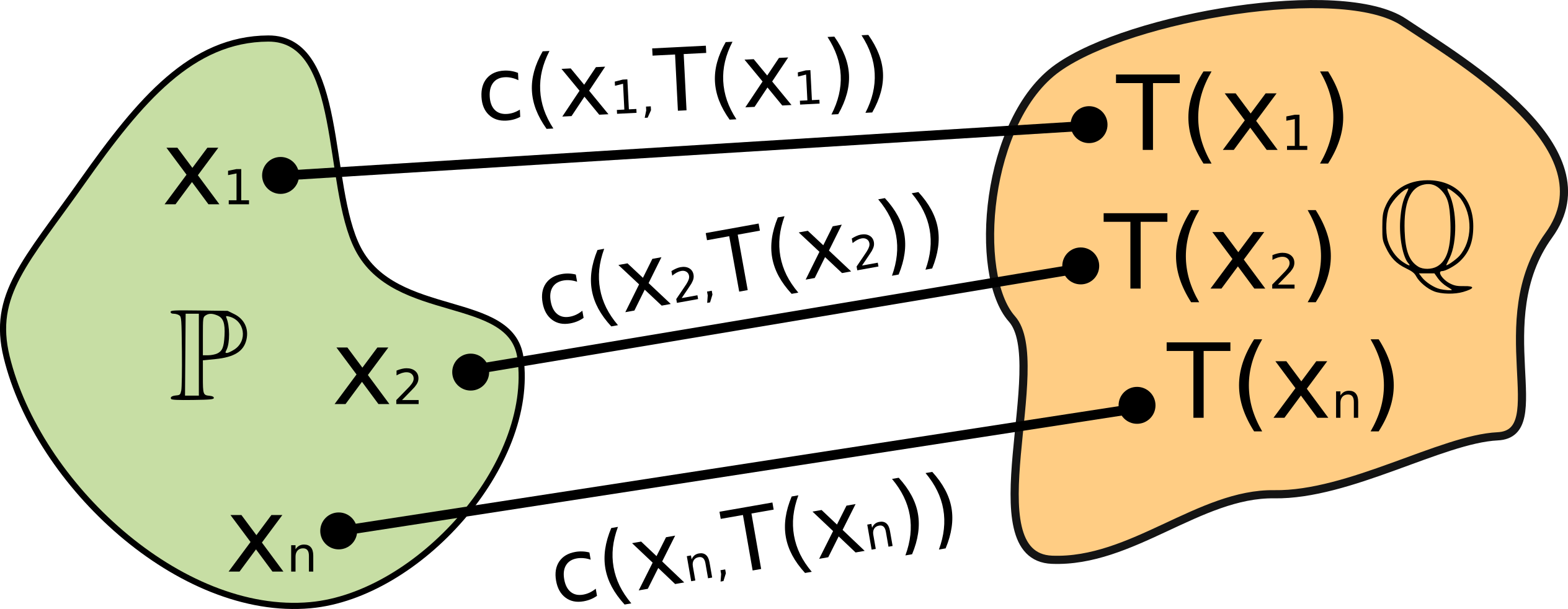}
  \end{center}
  \caption{\centering Monge's OT formulation.}
  \label{fig:ot-map-def}\vspace{-4mm}
\end{wrapfigure}\textbf{Strong OT formulation}. For ${\mathbb{P}\in\mathcal{P}(\mathcal{X})}$, $\mathbb{Q}\in \mathcal{P}(\mathcal{Y})$ and a cost function ${c:\mathcal{X}\times\mathcal{Y}\rightarrow\mathbb{R}}$, Monge's primal formulation of the OT cost is
\begin{equation}
\text{Cost}(\mathbb{P},\mathbb{Q})\stackrel{\text{def}}{=}\inf_{T_{\#}\mathbb{P}=\mathbb{Q}}\ \int_{\mathcal{X}} c\big(x,T(x)\big) d\mathbb{P}(x),
\label{ot-primal-form-monge}
\end{equation}
where the minimum is taken over measurable functions (transport maps) $T:\mathcal{X}\rightarrow\mathcal{Y}$ that map  $\mathbb{P}$ to $\mathbb{Q}$ (Figure \ref{fig:ot-map-def}). 
The optimal $T^{*}$ is called the OT \textit{map}.

Note that \eqref{ot-primal-form-monge} is not symmetric and  does not allow mass splitting, i.e., for some
${\mathbb{P},\mathbb{Q}\in\mathcal{P}(\mathcal{X}),\mathcal{P}(\mathcal{Y})}$, there may be no $T$ satisfying $T_{\#}\mathbb{P}=\mathbb{Q}$. Thus, \citep{kantorovitch1958translocation} proposed the following relaxation:
\begin{equation}\text{Cost}(\mathbb{P},\mathbb{Q})\stackrel{\text{def}}{=}\inf_{\pi\in\Pi(\mathbb{P},\mathbb{Q})}\int_{\mathcal{X}\times\mathcal{Y}}c(x,y)d\pi(x,y),
\label{ot-primal-form}
\vspace{-1mm}
\end{equation}
where the minimum is taken over all transport plans $\pi$  (Figure \ref{fig:ot-plan-def}), i.e., distributions on $\mathcal{X}\times\mathcal{Y}$ whose marginals are $\mathbb{P}$ and $\mathbb{Q}$. The optimal $\pi^{*}\in\Pi(\mathbb{P},\mathbb{Q})$ is called the optimal transport \textit{plan}. 
If $\pi^{*}$ is of the form $[\text{id} , T^{*}]_{\#} \mathbb{P}\in\Pi(\mathbb{P},\mathbb{Q})$ for some $T^*$, then
$T^{*}$ minimizes \eqref{ot-primal-form-monge}. In this case, the plan is called \textit{deterministic}. Otherwise, it is called \textit{stochastic} (nondeterministic).

\begin{figure}[!h]
\begin{subfigure}[b]{0.49\linewidth}
\centering
\includegraphics[width=0.9\linewidth]{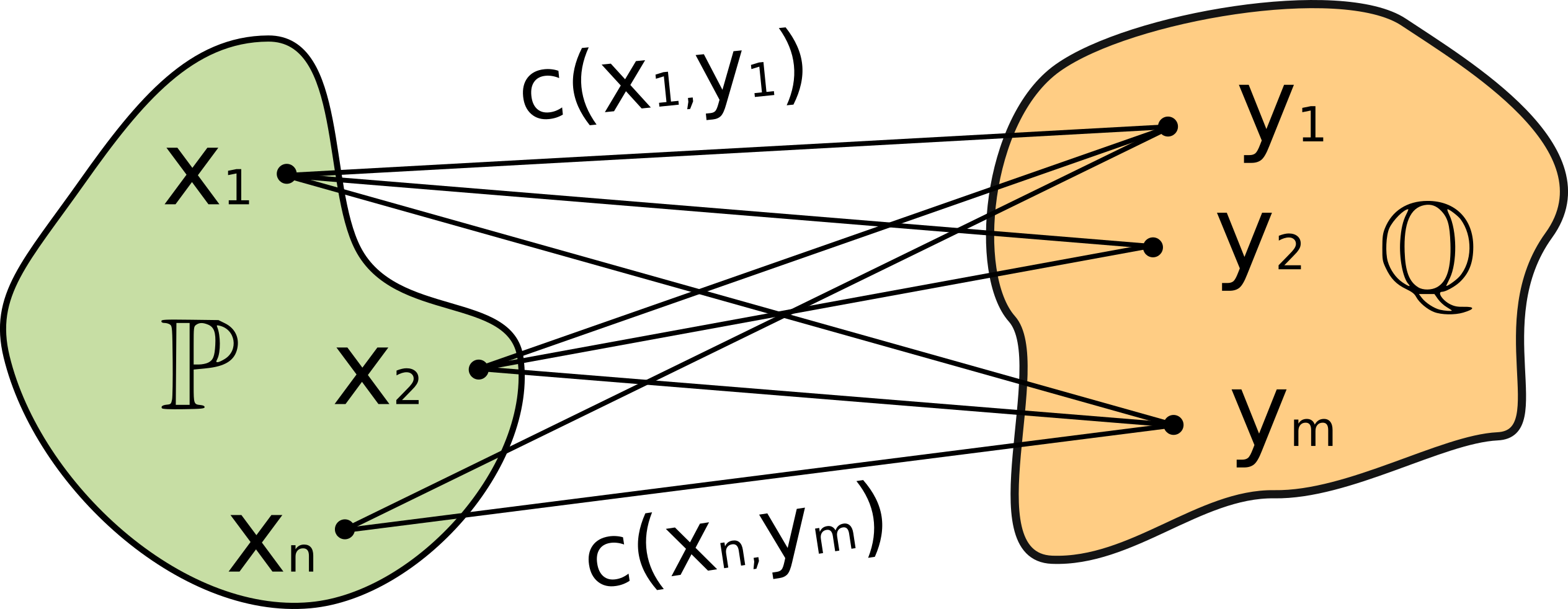}
\caption{\centering Strong OT formulation \eqref{ot-primal-form}.}
\label{fig:ot-plan-def}
\end{subfigure}
\begin{subfigure}[b]{0.49\linewidth}
\centering
\includegraphics[width=0.9\linewidth]{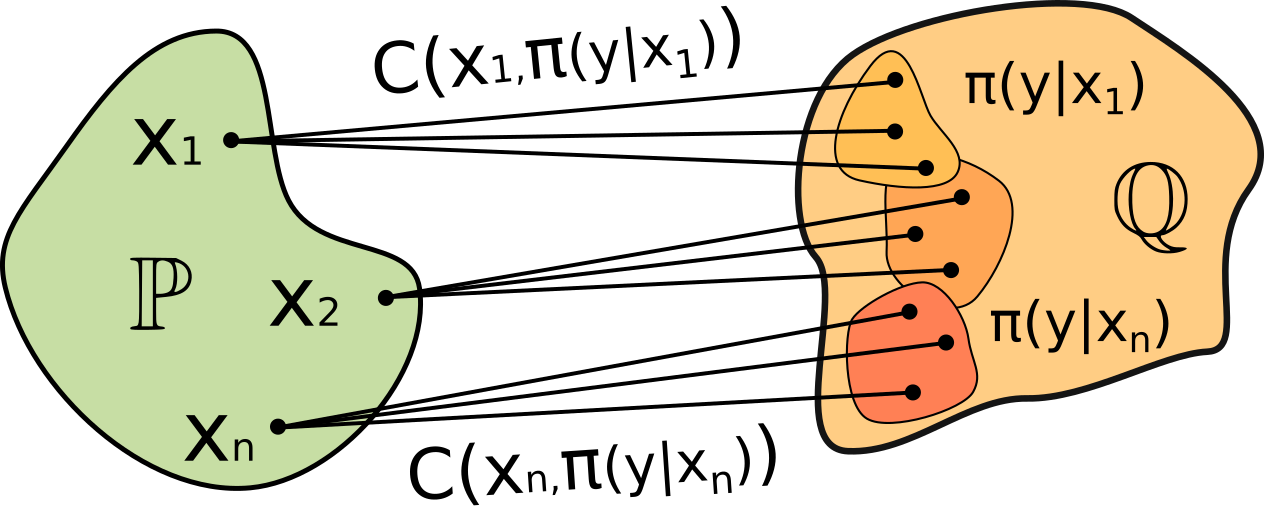}
\caption{\centering Weak OT formulation \eqref{ot-primal-form-weak}.}
\label{fig:ot-plan-weak-def}
\end{subfigure}
\caption{Strong (Kantorovich's) and weak \citep{gozlan2017kantorovich} optimal transport fomulations.}\vspace{-2mm}
\end{figure}

An example of OT cost for $\mathcal{X}=\mathcal{Y}=\mathbb{R}^{D}$ is the ($p$-th power of) Wasserstein-$p$ distance $\mathbb{W}_{p}$, i.e., formulation \eqref{ot-primal-form} with $c(x,y)=\|x-y\|^{p}$. Two its most popular cases are $p=1,2$ ($\mathbb{W}_{1},\mathbb{W}_{2}^{2}$).

\vspace{-0.5mm}\textbf{Weak OT formulation} \citep{gozlan2017kantorovich}. Let ${C:\mathcal{X}\times\mathcal{P}(\mathcal{Y})\rightarrow\mathbb{R}}$ be a \textit{weak} cost, i.e., a function which takes a point $x\in\mathcal{X}$ and a distribution of $y\in\mathcal{Y}$  as input. The weak OT cost between $\mathbb{P},\mathbb{Q}$ is
\begin{equation}\text{Cost}(\mathbb{P},\mathbb{Q})\stackrel{\text{def}}{=}\inf_{\pi\in\Pi(\mathbb{P},\mathbb{Q})}\int_{\mathcal{X}}C\big(x,\pi(\cdot|x)\big)d\pi(x),
\label{ot-primal-form-weak}
\vspace{-1mm}
\end{equation}
where $\pi(\cdot|x)$ denotes the conditional distribution (Figure \ref{fig:ot-plan-weak-def}).
Note that  \eqref{ot-primal-form-weak} is a generalization of \eqref{ot-primal-form}. Indeed, for cost
$C\big(x,\mu\big)=\int_{\mathcal{Y}}c(x,y)d\mu(y)$, the weak formulation \eqref{ot-primal-form-weak} becomes strong \eqref{ot-primal-form}. An example of a weak OT cost for $\mathcal{X}=\mathcal{Y}=\mathbb{R}^{D}$ is the $\gamma$-weak ($\gamma\!\geq\! 0$) Wasserstein-2 ($\mathcal{W}_{2,\gamma}$):
\begin{eqnarray}
    C\big(x,\mu\big)=  \int_{\mathcal{Y}}\frac{1}{2}\|x-y\|^{2}d\mu(y)-\frac{\gamma}{2}\text{Var}(\mu)
    \label{weak-w2-cost}
    \vspace{-1mm}
\end{eqnarray}

\vspace{-1.5mm}\textbf{Existence and duality.} Throughout the paper,  we consider weak costs $C(x,\mu)$ which are lower bounded, convex in $\mu$ and jointly lower semicontinuous in an appropriate
sense. Under these assumptions, \citep{backhoff2019existence} prove that the minimizer $\pi^{*}$ of \eqref{ot-primal-form-weak} always exists.\footnote{\cite{backhoff2019existence} work with the subset ${\mathcal{P}_{p}(\mathcal{Y})\subset \mathcal{P}(\mathcal{Y})}$ whose $p$-th moment is finite. Henceforth, we also work in $\mathcal{P}_{p}(\mathcal{Y})$ equipped with the Wasserstein-$p$ topology. Since this detail is not principal for our subsequent analysis, to keep the exposition simple, we still write $\mathcal{P}(\mathcal{Y})$ but actually mean $\mathcal{P}_{p}(\mathcal{Y})$.} With mild assumptions on $c$, strong costs satisfy these assumptions. In particular, they are linear w.r.t. $\mu$, and, consequently, convex. The $\gamma$-weak quadratic cost \eqref{weak-w2-cost} is lower-bounded (for $\gamma\leq 1$) and is also convex since the functional $\text{Var}(\mu)$ is concave in $\mu$.
For the costs in view, the \textit{dual form} of \eqref{ot-primal-form-weak} is
\begin{equation}
    \text{Cost}(\mathbb{P},\mathbb{Q})=\sup_{f}\int_{\mathcal{X}}f^{C}(x)d\mathbb{P}(x)+\int_{\mathcal{Y}}f(y)d\mathbb{Q}(y),
    \label{ot-weak-dual}
    \vspace{-1mm}
\end{equation}
where $f$ are the upper-bounded continuous functions with not very rapid growth \citep[Equation 1.2]{backhoff2019existence} and $f^{C}$ is the weak $C$-transform of $f$, i.e.
\begin{equation}f^{C}(x)\stackrel{\text{def}}{=}\inf_{\mu\in\mathcal{P}(\mathcal{Y})}\left\lbrace C(x,\mu)-\int_{\mathcal{Y}}f(y)d\mu(y)\right\rbrace.
\label{weak-c-transform}
\vspace{-2mm}
\end{equation}

\vspace{-0.5mm}Note that for strong costs $C$, the infimum is attained at any ${\mu\in\mathcal{P}(\mathcal{Y})}$ supported on the $\arginf_{y\in\mathcal{Y}}\lbrace c(x,y)-f(y)\rbrace$ set. Therefore, it suffices  to use the strong $c$-transform:
\begin{equation}f^{C}(x)= f^{c}(x)\stackrel{\text{def}}{=}\inf_{y\in\mathcal{Y}}\left\lbrace c(x,y)-f(y)\right\rbrace.
\label{strong-c-transform}
\end{equation}
For strong costs \eqref{ot-primal-form}, formula \eqref{ot-weak-dual} with \eqref{strong-c-transform} is the well known Kantorovich duality \citep[\wasyparagraph 5]{villani2008optimal}.

\vspace{-1mm}\textbf{Nonuniqueness.} In general, an OT plan $\pi^{*}$ is not unique, e.g., see \citep[Remark 2.3]{peyre2019computational}.

\vspace{-3mm}\section{Related Work}
\label{sec-related-work}
\vspace{-2.6mm}In large-scale machine learning, \textbf{OT costs} are primarily used as the loss to learn generative models. Wasserstein GANs introduced by \citep{arjovsky2017wasserstein,gulrajani2017improved} are the most popular examples of this approach. We refer to \citep{korotin2022kantorovich,korotin2021neural} for recent surveys of principles of WGANs. However, these models are \underline{\textit{out of scope}} of our paper since they only compute the OT cost but not OT plans or maps (\wasyparagraph\ref{sec-relation-to-prior}). To compute  \textbf{OT plans} (or maps) is a more challenging problem, and only a limited number of scalable methods to solve it have been developed.

\vspace{-0.7mm}We overview  methods to compute OT plans (or maps) below. We emphasize that existing methods are designed only for \textit{strong} OT formulation \eqref{ot-primal-form}. Most of them search for a deterministic solution \eqref{ot-primal-form-monge}, i.e., for a~map $T^{*}$ rather than a~stochastic plan $\pi^{*}$, although $T^{*}$ might not always exist.

\vspace{-0.7mm}To compute the OT plan (map), \citep{lu2020large,xie2019scalable} approach the \textbf{primal} formulation \eqref{ot-primal-form-monge} or \eqref{ot-primal-form}. Their methods imply using generative models and yield complex optimization objectives with several adversarial regularizers, e.g., they are used to enforce the boundary condition ($T_{\#}\mathbb{P}=\mathbb{Q}$). As a~result, the methods are hard to setup  since they require careful selection of hyperparameters.

\vspace{-0.7mm}In contrast, methods based on the \textbf{dual} formulation \eqref{ot-weak-dual} have simpler optimization procedures. Most of such methods are designed for OT with the quadratic cost, i.e., the Wasserstein-2 distance ($\mathbb{W}_{2}^{2}$). An evaluation of these methods is provided in \citep{korotin2021neural}. Below we mention their issues.

\vspace{-0.7mm}Methods by \citep{taghvaei20192,makkuva2019optimal,korotin2019wasserstein,korotin2021continuous} based on input-convex neural networks (ICNNs, see \citep{amos2017input}) have solid theoretical justification, but do not provide sufficient performance in practical large-scale problems. Methods based on entropy regularized OT \citep{genevay2016stochastic,seguy2017large,daniels2021score} recover regularized OT plan that is \textit{biased} from the true one, it is hard to sample from it or compute its density.

\vspace{-0.7mm}According to \citep{korotin2021neural}, the best performing approach is $\lceil \text{MM:R}\rfloor$, which is based on the maximin reformulation of \eqref{ot-weak-dual}. It recovers OT maps fairly well and has a good generative performance. The follow-up papers \citep{rout2022generative,fan2022scalable} test extensions of this approach for more general strong transport costs $c(\cdot,\cdot)$ and apply it to compute $\mathbb{W}_{2}$ barycenters \citep{korotin2022wasserstein}. Their key limitation is that it aims to recover a~\textit{deterministic} OT map $T^{*}$ which might not exist.

\vspace{-2mm}\section{Algorithm for Learning OT Plans}

\vspace{-1mm}In this section, we develop a novel neural algorithm to recover a~solution $\pi^{*}$ of OT problem \eqref{ot-primal-form-weak}. The following lemma will play an important role in our derivations.

\begin{lemma}[Existence of transport maps.] Let $\mu$ and $\nu$ be probability distributions on $\mathbb{R}^{M}$ and $\mathbb{R}^{N}$. Assume that $\mu$ is atomless. Then there  exists a~measurable $t\!:\!\mathbb{R}^{M}\!\rightarrow\!\mathbb{R}^{N}$ satisfying $t_{\#}\mu=\nu$.
\label{lemma-existence-transport}
\end{lemma}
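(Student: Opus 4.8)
The plan is to factor the desired map through the unit interval equipped with the Lebesgue measure $\text{Leb}_{[0,1]}$, using the hypothesis that $\mu$ is atomless \emph{only} to push $\mu$ onto $\text{Leb}_{[0,1]}$, and then pushing $\text{Leb}_{[0,1]}$ onto the arbitrary target $\nu$. Concretely, I would construct measurable maps $S:\mathbb{R}^{M}\rightarrow[0,1]$ and $R:[0,1]\rightarrow\mathbb{R}^{N}$ with $S_{\#}\mu=\text{Leb}_{[0,1]}$ and $R_{\#}\text{Leb}_{[0,1]}=\nu$, and then set $t=R\circ S$, so that $t_{\#}\mu=R_{\#}(S_{\#}\mu)=R_{\#}\text{Leb}_{[0,1]}=\nu$, which is exactly the claim. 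The whole argument rests on the Borel isomorphism theorem: every uncountable standard Borel space (in particular $\mathbb{R}^{M}$ and $\mathbb{R}^{N}$) is Borel isomorphic to $[0,1]$.

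\emph{Source reduction.} Fix a Borel isomorphism $\phi:\mathbb{R}^{M}\rightarrow[0,1]$ and set $\mu'=\phi_{\#}\mu$. Since $\phi$ is a Borel bijection, atoms of $\mu'$ correspond to atoms of $\mu$, so $\mu'$ is again atomless; hence its cumulative distribution function $F_{\mu'}(s)=\mu'\big([0,s]\big)$ is continuous. The probability integral transform then yields $(F_{\mu'})_{\#}\mu'=\text{Leb}_{[0,1]}$, and I take $S=F_{\mu'}\circ\phi$, which is measurable as a composition of Borel maps.

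\emph{Target realization.} Similarly fix a Borel isomorphism $\psi:\mathbb{R}^{N}\rightarrow[0,1]$ and set $\nu'=\psi_{\#}\nu$. For the (possibly atomic) distribution $\nu'$ on $[0,1]$ I use the generalized inverse, i.e. the quantile function $Q_{\nu'}(u)=\inf\{s:F_{\nu'}(s)\geq u\}$, which is nondecreasing and therefore Borel measurable, and which satisfies $(Q_{\nu'})_{\#}\text{Leb}_{[0,1]}=\nu'$ whether or not $\nu'$ has atoms. Setting $R=\psi^{-1}\circ Q_{\nu'}$ and composing as above finishes the construction.

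I expect the only genuinely delicate point to be the source reduction: pushing an \emph{arbitrary} atomless measure onto the uniform one. Routing through a Borel isomorphism to $[0,1]$ is what makes this step clean and dimension-independent, and it is essential here, since a naive coordinate projection of an atomless measure on $\mathbb{R}^{M}$ can create atoms (e.g. when the mass is concentrated on a hyperplane orthogonal to the projection). The two facts to verify carefully are that the Borel isomorphism preserves atomlessness and that the probability integral transform applies, both of which hinge on the continuity of $F_{\mu'}$. By contrast, the target side uses no hypothesis on $\nu$ at all and is just the standard quantile construction, so no obstacle arises there.
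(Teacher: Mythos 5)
Your proof is correct, and it takes a genuinely different route from the paper's, which contains no construction at all: the paper simply cites \citep[Cor. 1.29]{santambrogio2015optimal} for the case $M=N$ and asserts that the cited proof carries over to $M\neq N$. Your argument is a self-contained replacement for that citation. The decomposition $t=R\circ S$ through $\big([0,1],\text{Leb}_{[0,1]}\big)$ is the same circle of ideas that underlies the cited result, but you implement it via the Kuratowski Borel-isomorphism theorem, collapsing both $\mathbb{R}^{M}$ and $\mathbb{R}^{N}$ to $[0,1]$ before any measure theory happens; after that, only classical one-dimensional facts are needed --- continuity of the CDF of an atomless measure (hence the probability integral transform) on the source side, and the Galois inequality $Q_{\nu'}(u)\leq s \Leftrightarrow u\leq F_{\nu'}(s)$ (hence the quantile pushforward) on the target side. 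What this buys: the dimension mismatch $M\neq N$, which the paper dismisses in one sentence, becomes a non-issue by construction, and each leg of the map is an easily verified textbook fact; your proof thereby makes rigorous exactly the point the paper leaves to the reader. What it costs: you invoke a nontrivial descriptive-set-theoretic black box (every uncountable standard Borel space is Borel isomorphic to $[0,1]$), which is heavier machinery than strictly necessary, whereas the citation route is shorter on the page. Your side remarks are also accurate and well placed: atomlessness is used only on the source leg and is preserved under Borel bijections, the target leg requires no hypothesis on $\nu$, and a naive coordinate projection could not replace the isomorphism since it can create atoms.
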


\vspace{-3.7mm}\begin{proof}
\citep[Cor. 1.29]{santambrogio2015optimal} proves the fact for $M\!=\!N$. The proof works for $M\!\neq\!N$.\hspace{-3mm}
\end{proof}

\vspace{-3mm}Throughout the paper we assume that $\mathbb{P},\mathbb{Q}$ are supported on subsets $\mathcal{X}\subset\mathbb{R}^{P}$, $\mathcal{Y}\subset\mathbb{R}^{Q}$, respectively.

\vspace{-1.5mm}\subsection{Reformulation of the Dual Problem}
\label{sec-derivation}

\vspace{-1.5mm}First, we reformulate the optimization in $C$-transform \eqref{weak-c-transform}. For this, we introduce a subset $\mathcal{Z}\subset\mathbb{R}^{S}$ with an atomless distribution $\mathbb{S}$ on it, e.g., $\mathbb{S}=\text{Uniform}
\big([0,1]\big)$ or $\mathcal{N}(0,1)$.

\begin{lemma}[Reformulation of the $C$-transform] The following equality holds:\vspace{-1.4mm}
\begin{equation}
    f^{C}(x)\!=\! \inf_{t}\!\left\lbrace C(x,t_{\#}\mathbb{S})\!-\!\int_{\mathcal{Z}}f\big(t(z)\big)d\mathbb{S}(z)\!\right\rbrace,
    \label{reform-c-lower-eq}
\end{equation}

\vspace{-2mm}where the infimum is taken over all measurable ${t:\mathcal{Z}\rightarrow\mathcal{Y}}$.
\label{lemma-t}
\end{lemma}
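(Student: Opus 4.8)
The plan is to show that the two optimization problems have the same value by exhibiting a value-preserving correspondence between candidate maps $t$ and candidate distributions $\mu$. The backbone is the change-of-variables identity: for any measurable $t:\mathcal{Z}\to\mathcal{Y}$, setting $\mu = t_{\#}\mathbb{S}$ yields
\[
\int_{\mathcal{Z}} f\big(t(z)\big)\, d\mathbb{S}(z) = \int_{\mathcal{Y}} f(y)\, d\mu(y),
\]
so that the objective inside the $t$-infimum of \eqref{reform-c-lower-eq} equals $C(x,\mu) - \int_{\mathcal{Y}} f(y)\, d\mu(y)$, which is precisely the objective inside the $\mu$-infimum in the definition \eqref{weak-c-transform} of $f^{C}(x)$. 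Hence the two problems optimize the \emph{same} objective, differing only in the feasible set over which it is minimized.

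First I would prove the inequality ``$\geq$'', i.e.\ that the right-hand side of \eqref{reform-c-lower-eq} is at least $f^{C}(x)$. This is immediate: for every measurable $t$, the pushforward $\mu = t_{\#}\mathbb{S}$ is a genuine element of $\mathcal{P}(\mathcal{Y})$, so the value of the $t$-objective at $t$ is one of the values appearing in the $\mu$-infimum of \eqref{weak-c-transform}, and is therefore bounded below by $f^{C}(x)$. Taking the infimum over all $t$ preserves this bound.

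The reverse inequality ``$\leq$'' is the substantive step, and it is exactly where Lemma \ref{lemma-existence-transport} enters. Here I must show that every target $\mu\in\mathcal{P}(\mathcal{Y})$ is realizable as $t_{\#}\mathbb{S}$ for some measurable $t$, i.e.\ that the pushforward map $t\mapsto t_{\#}\mathbb{S}$ surjects onto $\mathcal{P}(\mathcal{Y})$. Since $\mathbb{S}$ was chosen to be atomless on $\mathcal{Z}\subset\mathbb{R}^{S}$, Lemma \ref{lemma-existence-transport} (applied with the roles $\mu\!\leftarrow\!\mathbb{S}$ and $\nu\!\leftarrow\!\mu$) guarantees such a $t$ with $t_{\#}\mathbb{S}=\mu$. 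Fixing any $\mu$ and choosing this $t$ makes the $t$-objective equal the $\mu$-objective at $\mu$; taking the infimum over $\mu$ then shows the right-hand side of \eqref{reform-c-lower-eq} is at most $f^{C}(x)$. Combining the two inequalities gives the claimed equality.

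I expect the main obstacle to be the surjectivity of the pushforward onto \emph{all} of $\mathcal{P}(\mathcal{Y})$, which rests entirely on the atomlessness of $\mathbb{S}$: without it one could not represent atomic or otherwise singular targets $\mu$, and the reduction from distributions to maps would break down. A minor bookkeeping point is consistency with the finite-moment convention of the footnote: each target $\mu\in\mathcal{P}_{p}(\mathcal{Y})$ already has finite $p$-th moment, and the realizing map satisfies $t_{\#}\mathbb{S}=\mu$, so the moment condition transfers automatically and no feasibility is lost on either side of the correspondence.
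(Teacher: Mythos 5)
Your proposal is correct and follows essentially the same route as the paper's proof: the easy inequality via the substitution $\mu = t_{\#}\mathbb{S}$ and change of variables, and the substantive inequality via Lemma \ref{lemma-existence-transport} (atomlessness of $\mathbb{S}$ giving surjectivity of the pushforward onto $\mathcal{P}(\mathcal{Y})$). The only cosmetic difference is that the paper formalizes the reverse direction through $\epsilon$-approximate minimizers $\mu^{\epsilon}$, whereas you bound $\inf_{t}$ by the objective at each fixed $\mu$ and then take the infimum over $\mu$ --- the same argument in substance.
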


\vspace{-3mm}\begin{proof}For all $x\in\mathcal{X}$ and $t:\mathcal{Z}\rightarrow\mathcal{Y}$ we have $f^{C}(x)\!\leq\!  C(x,t_{\#}\mathbb{S})-\int_{\mathcal{Z}}f\big(t(z)\big)d\mathbb{S}(z).$
The inequality is straightforward: we substitute $\mu=t_{\#}\mathbb{S}$ to \eqref{weak-c-transform} to upper bound $f^{C}(x)$ and use the change of variables. Taking the infimum over $t$, we obtain\vspace{-1.5mm}
\begin{equation}
    f^{C}(x)\!\leq\! \inf_{t}\left\lbrace C(x,t_{\#}\mathbb{S})-\int_{\mathcal{Z}}f\big(t(z)\big)d\mathbb{S}(z)\right\rbrace.
    \label{reform-c-lower-final}
\end{equation}

\vspace{-2.5mm}Now let us turn \eqref{reform-c-lower-final} to an equality. We need to show that $\forall\epsilon>0$ there exists $ t^{\epsilon}:\mathcal{Z}\rightarrow\mathcal{Y}$ satisfying
\begin{equation}
f^{C}(x)\!+\!\epsilon \; \geq\; C(x,t^{\epsilon}_{\#}\mathbb{S})\!-\!\int_{\mathcal{Z}}f\big(t^{\epsilon}(z)\big)d\mathbb{S}(z).
\label{eps-bound-t}
\end{equation}
By \eqref{weak-c-transform} and the definition of $\inf$, $\exists\mu^{\epsilon}\!\in\!\mathcal{P}(\mathcal{Y})$  such that 
$f^{C}(x)+\epsilon\geq  C(x,\mu^{\epsilon})-\int_{\mathcal{Y}}f(y)d\mu^{\epsilon}(y)$.
Thanks to Lemma \ref{lemma-existence-transport}, there exists $t^{\epsilon}:\mathcal{Z}\rightarrow\mathcal{Y}$ such that $\mu^{\epsilon}=t^{\epsilon}_{\#}\mathbb{S}$, i.e., \eqref{eps-bound-t} holds true.
\end{proof}

\vspace{-2mm}Now we use Lemma \ref{lemma-t} to get an analogous reformulation of the integral of  $f^{C}$ in the dual form \eqref{ot-weak-dual}.

\begin{lemma}[Reformulation of the integrated $C$-transform] The following equality holds:
\begin{eqnarray}
    \int_{\mathcal{X}}f^{C}(x)d\mathbb{P}(x)=
    \inf_{T}\int_{\mathcal{X}}\!\left(\! C\big(x,T(x,\cdot)_{\#}\mathbb{S}\big)\!-\!\int_{\mathcal{Z}}f\big(T(x,z)\big) d\mathbb{S}(z)\!\right)\! d\mathbb{P}(x),
    \label{rockafellar-strikes-back}
\end{eqnarray}
where the inner minimization is performed over all measurable functions $T:\mathcal{X}\times\mathcal{Z}\rightarrow\mathcal{Y}$.
\label{lemma-T}
\end{lemma}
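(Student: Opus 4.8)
The plan is to prove the stated equality as two opposite inequalities, viewing the identity as an interchange of a pointwise infimum with the integral over $\mathbb{P}$ (a Rockafellar-type interchange for integral functionals, as the equation label suggests). The easy direction follows pointwise from Lemma \ref{lemma-t}; the reverse direction is the substantive one and reduces to two measurable-selection arguments.

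\textbf{The easy direction} (right-hand side $\geq$ left-hand side). Fix any measurable $T:\mathcal{X}\times\mathcal{Z}\rightarrow\mathcal{Y}$. For $\mathbb{P}$-a.e.\ $x$, the slice $t_x\stackrel{\text{def}}{=}T(x,\cdot)$ is a measurable map $\mathcal{Z}\rightarrow\mathcal{Y}$, so applying Lemma \ref{lemma-t} to $t_x$ gives the pointwise bound $C\big(x,(t_x)_{\#}\mathbb{S}\big)-\int_{\mathcal{Z}}f\big(t_x(z)\big)d\mathbb{S}(z)\geq f^{C}(x)$. Integrating this against $d\mathbb{P}(x)$ and then taking the infimum over $T$ shows that the right-hand side of \eqref{rockafellar-strikes-back} is at least $\int_{\mathcal{X}}f^{C}(x)d\mathbb{P}(x)$.

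\textbf{The hard direction} (right-hand side $\leq$ left-hand side). Here I must produce, for every $\epsilon>0$, a single jointly measurable $T^{\epsilon}$ whose objective lies within $\epsilon$ of $\int f^{C}d\mathbb{P}$, and I would build it in two steps. First, \emph{select near-optimal distributions}: using the definition \eqref{weak-c-transform} of $f^{C}$ together with the lower-boundedness, convexity and joint lower semicontinuity assumed for $C$, I would invoke a measurable selection theorem to obtain a $\mathbb{P}$-measurable kernel $x\mapsto\mu^{\epsilon}_x\in\mathcal{P}(\mathcal{Y})$ (measurable for the Wasserstein-$p$ topology) satisfying $C(x,\mu^{\epsilon}_x)-\int_{\mathcal{Y}}f\,d\mu^{\epsilon}_x\leq f^{C}(x)+\epsilon$ for $\mathbb{P}$-a.e.\ $x$. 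Second, \emph{realize the kernel by a map}: I would apply a parametrized version of Lemma \ref{lemma-existence-transport} to construct a jointly measurable $T^{\epsilon}:\mathcal{X}\times\mathcal{Z}\rightarrow\mathcal{Y}$ with $T^{\epsilon}(x,\cdot)_{\#}\mathbb{S}=\mu^{\epsilon}_x$; concretely, since $\mathbb{S}$ is atomless, a quantile/Rosenblatt-type coupling depends measurably on the parameter $x$ and supplies such a $T^{\epsilon}$. Substituting $T^{\epsilon}$ into the right-hand side and integrating the pointwise $\epsilon$-bound yields an objective $\leq\int f^{C}d\mathbb{P}+\epsilon$; letting $\epsilon\to 0$ closes the gap and, combined with the easy direction, gives equality.

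\textbf{Main obstacle.} The nontrivial content is entirely measurability: the pointwise statements are immediate from \eqref{weak-c-transform} and Lemma \ref{lemma-t}, but neither the near-optimal $\mu^{\epsilon}_x$ nor its realizing map is a priori measurable in $x$, so the two selection steps carry the whole weight of the argument. The crux is the joint measurability of $T^{\epsilon}$ in $(x,z)$ — which is precisely what the auxiliary atomless space $(\mathcal{Z},\mathbb{S})$ and Lemma \ref{lemma-existence-transport} were set up to deliver. If the explicit two-step construction becomes cumbersome, I would fall back on invoking Rockafellar's interchange theorem for integral functionals over a decomposable space of measurable maps, which gives \eqref{rockafellar-strikes-back} in one step once the integrand is verified to be a normal integrand.
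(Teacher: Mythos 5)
Your proposal is correct, but your primary route is genuinely different from the paper's. The paper proves \eqref{rockafellar-strikes-back} in one step: it invokes Rockafellar's interchange theorem \citep[Theorem 3A]{rockafellar1976integral} with $\mathcal{A}=\mathcal{X}$, $\nu=\mathbb{P}$, and $\mathcal{B}$ taken to be the space of measurable maps $\mathcal{Z}\rightarrow\mathcal{Y}$, combines the resulting identity with the pointwise reformulation of Lemma \ref{lemma-t}, and then identifies measurable $H:\mathcal{X}\rightarrow\mathcal{B}$ with jointly measurable $T(x,z)=[H(x)](z)$ --- i.e., exactly the fallback you mention at the end. Your primary argument instead unpacks what the interchange theorem delivers abstractly: the easy inequality by integrating the pointwise bound of Lemma \ref{lemma-t}, and the reverse inequality by measurably selecting $\epsilon$-minimizers $x\mapsto\mu^{\epsilon}_x$ of \eqref{weak-c-transform} and realizing that kernel by a jointly measurable map via a parametrized (noise-outsourcing) version of Lemma \ref{lemma-existence-transport}. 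This is essentially how interchange theorems of this kind are themselves proved, so the two arguments share the same mathematical content. What the paper's citation buys is brevity, at the cost of a somewhat delicate application: $\mathcal{B}$ is an infinite-dimensional function space, so the hypotheses of the cited theorem (decomposability, normal integrand) and the equivalence between $\mathcal{B}$-valued measurability of $H$ and joint measurability of $T$ both require care, and the paper glosses over them. What your route buys is an explicit, self-contained treatment of precisely those measurability issues (modulo the standard selection theorem you invoke, which is legitimate under the paper's standing assumptions that $C$ is jointly lower semicontinuous and $f$ is continuous and upper bounded, making the integrand a normal integrand on the Polish space $\mathcal{P}_p(\mathcal{Y})$); it also mirrors, at the integrated level, the $\epsilon$-argument the paper already uses to prove Lemma \ref{lemma-t}. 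Both approaches are valid, and you correctly identify that the entire weight of the statement rests on measurable selection rather than on any pointwise estimate.
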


\vspace{-2mm}
\begin{proof}
The lemma follows from the interchange between the infimum and integral provided by the Rockafellar's interchange theorem \citep[Theorem 3A]{rockafellar1976integral}.

The theorem states that for a function $F:\mathcal{A}\times\mathcal{B}\rightarrow\mathbb{R}$ and a distribution $\nu$ on $\mathcal{A}$, 
\begin{equation}
    \int_{\mathcal{A}}\inf_{b\in\mathcal{B}}F(a,b)d\nu(a)\!=\!\inf_{H:\mathcal{A}\rightarrow\mathcal{B}}\!\int_{\mathcal{A}}F(a,H(a))d\nu(a)
    \label{rockafellar-interchange-theorem}
\end{equation}
We apply \eqref{rockafellar-interchange-theorem}, use $\mathcal{A}=\mathcal{X}$, $\nu=\mathbb{P}$, 
and put $\mathcal{B}$ to be the space of measurable functions $\mathcal{Z}\rightarrow\mathcal{Y}$, and
$F(a,b)=C(a,b_{\#}\mathbb{S})\!-\!\int_{\mathcal{Y}}f\big(y\big)d\big[b_{\#}\mathbb{S}\big](y).$ Consequently, we obtain that $\int_{\mathcal{X}}f^{C}(x)d\mathbb{P}(x)$ equals
\begin{eqnarray}
    \inf_{H}\!\int_{\mathcal{X}}\!\left(\! C\big(x,H(x)_{\#}\mathbb{S}\big)\!-\!\int_{\mathcal{Y}}f(y) d\big[H(x)_{\#}\mathbb{S}](y)\!\right)\! d\mathbb{P}(x)
    \label{interchange-applied}
\end{eqnarray}

\vspace{-1.5mm}Finally, we note that the optimization over functions
$H:\mathcal{X}\rightarrow\{t:\mathcal{Z}\rightarrow\mathcal{Y}\}$ equals the optimization over functions ${T:\mathcal{X}\times\mathcal{Z}\rightarrow\mathcal{Y}}$. We put $T(x,z)\!=\![H(x)](z)$, use the change of variables for $y=T(x,z)$ and derive \eqref{rockafellar-strikes-back} from \eqref{interchange-applied}.
\end{proof}

\vspace{-1mm}Lemma \ref{lemma-T} provides the way to represent the dual form \eqref{ot-weak-dual} as a~saddle point optimization problem. 

\begin{corollary}[Maximin reformulation of the dual problem]The following holds:\vspace{-1mm}
\begin{eqnarray} 
    \text{\normalfont Cost}(\mathbb{P},\mathbb{Q})=\sup_{f}\inf_{T}\mathcal{L}(f,T),
    \label{main-objective}
\end{eqnarray}

\vspace{-3mm}where the functional $\mathcal{L}$ is defined by\vspace{-1mm}
\begin{eqnarray}
    \mathcal{L}(f,T)\stackrel{def}{=}\int_{\mathcal{Y}}f(y)d\mathbb{Q}(y)+
    \int_{\mathcal{X}}\!\left(\! C\big(x,T(x,\cdot)_{\#}\mathbb{S}\big)\!-\!\int_{\mathcal{Z}}f\big(T(x,z)\big) d\mathbb{S}(z)\!\right)\! d\mathbb{P}(x).
    \label{L-def}
\end{eqnarray}
\end{corollary}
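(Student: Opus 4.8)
The plan is to obtain the claim as an essentially immediate consequence of the dual form \eqref{ot-weak-dual} combined with Lemma \ref{lemma-T}, since all the real work has already been done in establishing the reformulated integrated $C$-transform. First I would write down the dual form \eqref{ot-weak-dual}, namely
\begin{equation*}
\text{Cost}(\mathbb{P},\mathbb{Q})=\sup_{f}\left\lbrace\int_{\mathcal{X}}f^{C}(x)d\mathbb{P}(x)+\int_{\mathcal{Y}}f(y)d\mathbb{Q}(y)\right\rbrace,
\end{equation*}
and then, for each fixed $f$, replace the first summand using the identity of Lemma \ref{lemma-T}.

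The core step is substitution. By Lemma \ref{lemma-T}, for every admissible $f$ the integral $\int_{\mathcal{X}}f^{C}(x)d\mathbb{P}(x)$ equals the infimum over all measurable $T:\mathcal{X}\times\mathcal{Z}\rightarrow\mathcal{Y}$ of the expression appearing in \eqref{rockafellar-strikes-back}. Plugging this into the bracketed term of the dual form, the quantity to be maximized over $f$ becomes
\begin{equation*}
\int_{\mathcal{Y}}f(y)d\mathbb{Q}(y)+\inf_{T}\int_{\mathcal{X}}\!\left(\! C\big(x,T(x,\cdot)_{\#}\mathbb{S}\big)\!-\!\int_{\mathcal{Z}}f\big(T(x,z)\big) d\mathbb{S}(z)\!\right)\! d\mathbb{P}(x).
\end{equation*}
The one point requiring a word is that the term $\int_{\mathcal{Y}}f(y)d\mathbb{Q}(y)$ does not depend on $T$, so it may be absorbed inside the infimum: for any functional $G(T)$ and any constant $c$ one has $c+\inf_{T}G(T)=\inf_{T}\{c+G(T)\}$. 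Carrying out this absorption turns the displayed expression into $\inf_{T}\mathcal{L}(f,T)$, with $\mathcal{L}$ exactly as defined in \eqref{L-def}.

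Finally I would take the supremum over $f$ on both sides, which yields $\text{Cost}(\mathbb{P},\mathbb{Q})=\sup_{f}\inf_{T}\mathcal{L}(f,T)$, i.e. \eqref{main-objective}. There is really no substantive obstacle here: the existence and measurability subtleties have been discharged in Lemma \ref{lemma-t} and Lemma \ref{lemma-T} via Lemma \ref{lemma-existence-transport} and Rockafellar's interchange theorem, and the admissibility class for $f$ is inherited verbatim from \eqref{ot-weak-dual}. The only item worth stating explicitly is the trivial commutation of the $\mathbb{Q}$-integral with the inner infimum, which is what glues the two already-proved ingredients into the stated saddle-point form.
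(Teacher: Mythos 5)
Your proposal is correct and is essentially the paper's own proof: the paper likewise obtains \eqref{main-objective} by substituting the identity \eqref{rockafellar-strikes-back} of Lemma \ref{lemma-T} into the dual form \eqref{ot-weak-dual}. The only difference is that you spell out the (trivial) absorption of the constant $\int_{\mathcal{Y}}f(y)d\mathbb{Q}(y)$ into the infimum over $T$, which the paper leaves implicit.
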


\vspace{-5mm}\begin{proof}
It suffices to  substitute \eqref{rockafellar-strikes-back} into \eqref{ot-weak-dual}.
\end{proof}

\vspace{-2mm}We say that functions $T:\mathcal{X}\times\mathcal{Z}\rightarrow\mathcal{Y}$ are \textit{stochastic maps}. If a map $T$ is independent of $z$, i.e., for all $(x,z)\in\mathcal{X}\times\mathcal{Z}$ we have $T(x,z)\equiv T(x)$,  we say the map is \textit{deterministic}.

\begin{wrapfigure}{r}{0.45\textwidth}
  \vspace{-6mm}\begin{center}
    \includegraphics[width=\linewidth]{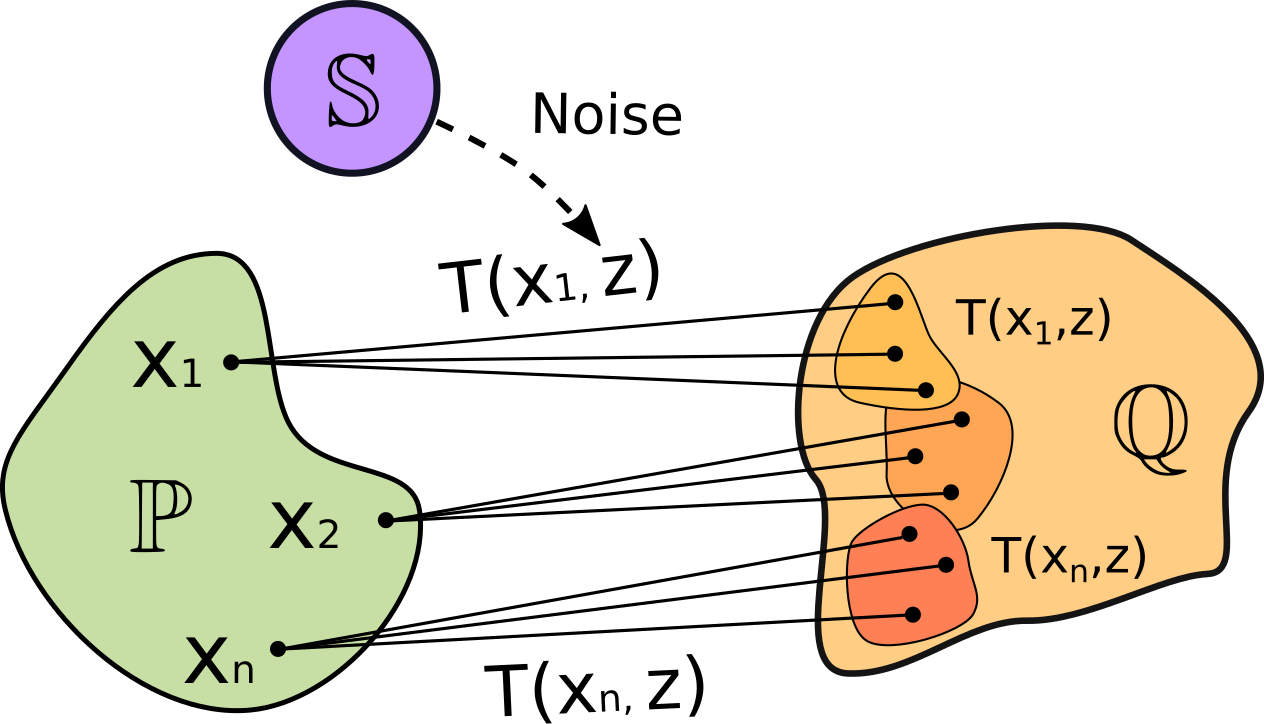}
  \end{center}
  \caption{\centering Stochastic function $T(x,z)$ representing a transport plan. The function's input is $x\in\mathcal{X}$ and $z\sim\mathbb{S}$.}
  \label{fig:stochastic-map}\vspace{-6mm}
\end{wrapfigure}The idea behind the introduced notation is the following. An optimal transport plan $\pi^{*}$ might be nondeterministic, i.e., there might not exist a deterministic function $T:\mathcal{X}\rightarrow\mathcal{Y}$ which satisfies $\pi^{*}=[\text{id}_{\mathcal{X}}, T]_{\#}\mathbb{P}$. However, each transport plan $\pi\in\Pi(\mathbb{P},\mathbb{Q})$ can be represented implicitly through a stochastic function ${T:\mathcal{X}\times\mathcal{Z}\rightarrow\mathcal{Y}}$. This fact is known as \textbf{noise outsourcing} \citep[Theorem 5.10]{kallenberg1997foundations} for $\mathcal{Z}=[0,1]\subset \mathbb{R}^{1}$ and ${\mathbb{S}=\text{Uniform}([0,1])}$. Combined with Lemma \ref{lemma-existence-transport}, the noise outsourcing also holds for a~general $\mathcal{Z}\subset\mathbb{R}^{S}$ and atomless $\mathbb{S}\in\mathcal{P}(\mathcal{Z})$.
We visualize the idea in Figure \ref{fig:stochastic-map}. For a plan~$\pi$, there might exist multiple maps $T$ which represent it.

For a pair of probability distributions $\mathbb{P},\mathbb{Q}$, we say that $T^{*}$ is a \textit{stochastic optimal transport map} if it realizes some optimal transport plan $\pi^{*}$. Such maps solve the inner problem in \eqref{main-objective} for optimal $f^{*}$.

\begin{lemma}[Optimal maps solve the maximin problem] 
For any maximizer $f^{*}$ of \eqref{ot-weak-dual} and for any stochastic map $T^{*}$ which realizes some optimal transport plan $\pi^{*}$, it holds that\vspace{-2mm}
\begin{equation}
T^{*}\in\arginf_{T}\mathcal{L}(f^{*},T).
\label{T-in}
\end{equation}
\label{arginf-lemma}
\vspace{-3mm}
\end{lemma}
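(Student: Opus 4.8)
The plan is to establish two facts and combine them: first, that the inner infimum equals the optimal cost, i.e. $\inf_T\mathcal{L}(f^*,T)=\text{\normalfont Cost}(\mathbb{P},\mathbb{Q})$; and second, that the given map attains this value, i.e. $\mathcal{L}(f^*,T^*)=\text{\normalfont Cost}(\mathbb{P},\mathbb{Q})$. Together these immediately give $\mathcal{L}(f^*,T^*)=\inf_T\mathcal{L}(f^*,T)$, which is exactly the assertion $T^*\in\arginf_T\mathcal{L}(f^*,T)$.

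For the first fact I would invoke Lemma \ref{lemma-T} directly. In the definition \eqref{L-def} the term $\int_{\mathcal{Y}}f^*(y)d\mathbb{Q}(y)$ does not depend on $T$, so taking $\inf_T$ and applying Lemma \ref{lemma-T} to the remaining integral yields $\inf_T\mathcal{L}(f^*,T)=\int_{\mathcal{Y}}f^*d\mathbb{Q}+\int_{\mathcal{X}}(f^*)^{C}d\mathbb{P}$. Since $f^*$ is a maximizer of the dual \eqref{ot-weak-dual}, the right-hand side is precisely $\text{\normalfont Cost}(\mathbb{P},\mathbb{Q})$.

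For the second fact I would expand $\mathcal{L}(f^*,T^*)$ using that $T^*$ realizes the optimal plan $\pi^*$, i.e. the map $(x,z)\mapsto(x,T^*(x,z))$ pushes $\mathbb{P}\otimes\mathbb{S}$ forward to $\pi^*$. This has two consequences. First, the disintegration identity $T^*(x,\cdot)_{\#}\mathbb{S}=\pi^*(\cdot|x)$ holds for $\mathbb{P}$-a.e. $x$, so $\int_{\mathcal{X}}C\big(x,T^*(x,\cdot)_{\#}\mathbb{S}\big)d\mathbb{P}(x)=\int_{\mathcal{X}}C\big(x,\pi^*(\cdot|x)\big)d\mathbb{P}(x)=\text{\normalfont Cost}(\mathbb{P},\mathbb{Q})$, where the last equality is the primal optimality of $\pi^*$ in \eqref{ot-primal-form-weak} together with the fact that the first marginal of $\pi^*$ is $\mathbb{P}$. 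Second, a change of variables gives $\int_{\mathcal{X}}\int_{\mathcal{Z}}f^*\big(T^*(x,z)\big)d\mathbb{S}(z)d\mathbb{P}(x)=\int_{\mathcal{X}\times\mathcal{Y}}f^*(y)d\pi^*(x,y)=\int_{\mathcal{Y}}f^*d\mathbb{Q}$, using that the second marginal of $\pi^*$ is $\mathbb{Q}$. Substituting both into \eqref{L-def}, the two copies of $\int_{\mathcal{Y}}f^*d\mathbb{Q}$ cancel and leave $\mathcal{L}(f^*,T^*)=\text{\normalfont Cost}(\mathbb{P},\mathbb{Q})$.

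Combining the two facts closes the argument. I expect the only delicate point to be justifying the disintegration identity $T^*(x,\cdot)_{\#}\mathbb{S}=\pi^*(\cdot|x)$ rigorously from the hypothesis that $T^*$ \emph{realizes} $\pi^*$, i.e. checking that the conditional law of $T^*(x,z)$ under $z\sim\mathbb{S}$ coincides $\mathbb{P}$-almost everywhere with the conditional of $\pi^*$, along with the attendant measurability and integrability conditions needed for the change of variables and Fubini. Once these identifications are in place, the cancellation is purely algebraic and routine.
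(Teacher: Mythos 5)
Your proposal is correct and takes essentially the same approach as the paper: the identical change-of-variables cancellation of the $f^{*}$ terms (using that the marginals of $\pi^{*}$ are $\mathbb{P}$ and $\mathbb{Q}$) and the identical identification $\mathcal{L}(f^{*},T^{*})=\text{Cost}(\mathbb{P},\mathbb{Q})$ via primal optimality of $\pi^{*}$. The only cosmetic difference is that the paper phrases the final step as a contradiction (if $T^{*}\notin\arginf_{T}\mathcal{L}(f^{*},T)$ then $\mathcal{L}(f^{*},T^{*})>\text{Cost}(\mathbb{P},\mathbb{Q})$), implicitly using the same fact you make explicit through Lemma \ref{lemma-T}, namely that dual optimality of $f^{*}$ gives $\inf_{T}\mathcal{L}(f^{*},T)=\text{Cost}(\mathbb{P},\mathbb{Q})$.
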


\vspace{-4mm}\begin{proof}Let $\pi^{*}$ be the OT plan realized by $T^{*}$. We derive
\begin{eqnarray}\int_{\mathcal{X}}\int_{\mathcal{Z}}f^{*}\big(T^{*}(x,z)\big) d\mathbb{S}(z) d\mathbb{P}(x)=
\int_{\mathcal{X}}\int_{\mathcal{Y}}f^{*}(y) d\pi^{*}(y|x) d\pi^{*}(x)=
\nonumber
\\
\int_{\mathcal{X}\times\mathcal{Y}}f^{*}(y) d\pi^{*}(x,y)=\int_{\mathcal{Y}}f^{*}(y) d\mathbb{Q}(y),
\label{vanishing-integral}
\end{eqnarray}
where we change the variables for $y=T^{*}(x,z)$ and use the property $d\pi^{*}(x)=d\mathbb{P}(x)$. Now assume that
$T^{*}\notin\arginf_{T}\mathcal{L}(f^{*},T)$. In this case, from the definition \eqref{L-def} we conclude that $\mathcal{L}(f^{*},T^{*})>\text{Cost}(\mathbb{P},\mathbb{Q})$. However, we derive substituting \eqref{vanishing-integral} into \eqref{L-def}, we see that 
$\mathcal{L}(f^{*},T^{*})=\int_{\mathcal{X}} C\big(x,\underbrace{T^{*}(x,\cdot)_{\#}\mathbb{S}}_{\pi^{*}(y|x)}\big)\underbrace{d\mathbb{P}(x)}_{d\pi^{*}(x)}\!=\!\text{Cost}(\mathbb{P},\mathbb{Q}),$
which is a contradiction. Thus, \eqref{T-in} holds true.
\end{proof}

\vspace{-3mm}For the $\gamma$-weak quadratic cost \eqref{weak-w2-cost} which we use in the experiments (\wasyparagraph\ref{sec-evaluation}), a maximizer $f^{*}$ of \eqref{ot-weak-dual} indeed exists, see \citep[\wasyparagraph 5.22]{alibert2019new} or \citep{gozlan2020mixture}. Thanks to our Lemma \ref{arginf-lemma},  one may solve the saddle point problem \eqref{main-objective} and extract an  optimal stochastic transport map $T^{*}$ from its solution $(f^{*},T^{*})$. In general, the $\arginf$ set for $f^{*}$ may contain not only the optimal stochastic transport maps but other stochastic functions as well. In Appendix \ref{sec-strictly-convex}, we show that for \underline{strictly convex} (in $\mu$) costs $C(x,\mu)$, all the solutions of \eqref{main-objective} provide stochastic OT maps.

\vspace{-3mm}\subsection{Practical Optimization Procedure}
\label{sec-optimization}

\vspace{-1.5mm}To approach the problem \eqref{main-objective} in practice, we use neural networks $T_{\theta}:\mathbb{R}^{P}\times\mathbb{R}^{S}\rightarrow\mathbb{R}^{Q}$ and $f_{\omega}:\mathbb{R}^{Q}\rightarrow\mathbb{R}$ to parameterize $T$ and $f$, respectively. We train their parameters with the stochastic gradient ascent-descent (SGAD) by using random batches from $\mathbb{P},\mathbb{Q},\mathbb{S}$, see Algorithm \ref{algorithm-not}.

\clearpage
\begin{algorithm}[!t]
\SetInd{0.5em}{0.3em}
    {
        \SetAlgorithmName{Algorithm}{empty}{Empty}
        \SetKwInOut{Input}{Input}
        \SetKwInOut{Output}{Output}
        \Input{distributions $\mathbb{P},\mathbb{Q},\mathbb{S}$ accessible by samples;
        mapping network $T_{\theta}:\mathbb{R}^{P}\times\mathbb{R}^{S}\rightarrow\mathbb{R}^{Q}$;\\ potential network $f_{\omega}:\mathbb{R}^{Q}\rightarrow\mathbb{R}$;
        number of inner iterations $K_{T}$;\\
        (weak) cost $C:\mathcal{X}\!\times\!\mathcal{P}(\mathcal{Y})\!\rightarrow\!\mathbb{R}$;
        empirical estimator $\widehat{C}\big(x,T(x,Z)\big)$ for the cost;
        }
        \Output{learned stochastic OT map $T_{\theta}$ representing an OT plan between distributions $\mathbb{P},\mathbb{Q}$\;
        }
        
        \Repeat{not converged}{
            Sample batches $Y\!\sim\! \mathbb{Q}$, $X\sim \mathbb{P}$; for each $x\in X$ sample batch $Z_{x} \sim\mathbb{S}$\;
            ${\mathcal{L}_{f}\leftarrow \frac{1}{|X|}\!\sum_{x\in X}\frac{1}{|Z_{x}|}\sum_{z\in Z_{x}}f_{\omega}\!\big(T_{\theta}(x,z)\big)\!-\!\frac{1}{|Y|}\!\sum_{y\in Y}\!f_{\omega}\!(y)}$\;
            
            Update $\omega$ by using $\frac{\partial \mathcal{L}_{f}}{\partial \omega}$\;
            
            \For{$k_{T} = 1,2, \dots, K_{T}$}{
                Sample batch $X\sim \mathbb{P}$; for each $x\in X$ sample batch $Z_{x}\sim\mathbb{S}$\; ${\mathcal{L}_{T}\leftarrow\frac{1}{|X|}\sum_{x\in X}\big[\widehat{C}\big(x,T_{\theta}(x,Z_{x})\big)-\frac{1}{|Z_{x}|}\sum_{z\in Z_{x}}f_{\omega}\big(T_{\theta}(x,z)\big)\!\big]}$\;
            Update $\theta$ by using $\frac{\partial \mathcal{L}_{T}}{\partial \theta}$\;
            }
        }
        
        \caption{Neural optimal transport (NOT)}
        \label{algorithm-not}
    }
\end{algorithm}

{\color{white}duct tape}
\vspace{-12mm}

Our Algorithm \ref{algorithm-not} requires an empirical estimator $\widehat{C}$ for $C\big(x,T(x,\cdot)_{\#}\mathbb{S}\big)$. If the cost is strong, it is straightforward to use the following \textit{unbiased} Monte-Carlo estimator from a random batch $Z\sim\mathbb{S}$: 
\begin{eqnarray}
C\big(x,T(x,\cdot)_{\#}\mathbb{S}\big)=\int_{\mathcal{Z}}c(x,T(x,z))d\mathbb{S}(z)\approx
\frac{1}{|Z|}\sum_{z\in Z}c\big(x,T(x,z)\big)\stackrel{def}{=}\widehat{C}\big(x,T(x,Z)\big).
\label{strong-cost-estimator}
\end{eqnarray}

\vspace{-3.5mm}For general costs $C$, providing an estimator might be nontrivial. For the $\gamma$-weak quadratic cost \eqref{weak-w2-cost}, such an \textit{unbiased} Monte-Carlo estimator is straightforward to derive:
\begin{equation}
\widehat{C}\big(x,T(x,Z)\big)\stackrel{def}{=}\frac{1}{2|Z|}\sum_{z\in Z}\|x-T(x,z)\|^{2}-\frac{\gamma}{2}\hat{\sigma}^{2},
\label{unbiased-estimator-quadratic}
\end{equation}

\vspace{-3mm}where $\hat{\sigma}^{2}$ is the (corrected) batch variance  $\hat{\sigma}^{2}=\frac{1}{|Z|-1}\sum_{z\in Z}\|T(x,z)-\frac{1}{|Z|}\sum_{z\in Z}T(x,z)\|^{2}.$
To estimate strong costs \eqref{strong-cost-estimator}, it is enough to sample a single noise vector ($|Z|=1$). To estimate the $\gamma$-weak quadratic cost \eqref{unbiased-estimator-quadratic}, one needs $|Z|\geq 2$ since the estimation of the variance $\hat{\sigma}^{2}$ is needed.

\vspace{-2.5mm}
\subsection{Relation to Prior Works}
\label{sec-relation-to-prior}

\vspace{-1.5mm}\textbf{Generative adversarial learning}. Our algorithm \ref{algorithm-not} is a novel approach to learn stochastic OT plans; \underline{it is not a GAN or WGAN-based solution}  endowed with additional losses such as the OT cost.
WGANs \citep{arjovsky2017wasserstein} do not learn an OT plan but use the (strong) OT cost as the loss to learn the generator network. Their problem is $\inf_{T}\sup_{f}\mathcal{V}(T,f)$. The generator $T^{*}$ solves the \textit{outer} $\inf_{T}$ problem and is the \textit{first} coordinate of an optimal saddle point $(T^{*},f^{*})$. In our algorithm \ref{algorithm-not}, problem \eqref{L-def} is $\sup_{f}\inf_{T}\mathcal{L}(f,T)$, the generator (transport map) $T^{*}$ solves of the \textit{inner} $\inf_{T}$ problem and is the \textit{second} coordinate of an optimal saddle point $(f^{*},T^{*})$. Intuitively, in our case the generator $T$ is adversarial to potential $f$ (discriminator), not vise-versa as in GANs. Theoretically, the problem is also \textit{significantly} different -- swapping $\inf_{T}$ and $\sup_{f}$, in general, yields a different problem with different solutions, e.g., $1\!=\!\inf_x \sup_y \sin(x\!+\!y)\!\neq\! \sup_y \inf_x \sin(x\!+\!y)\!=\!-\!1$. Practically, we do $K_{T}>1$ updates of $T$ per one step of $f$, which again differs from common GAN practices, where multiple updates of $f$ are done per a step of $T$. Finally, in contrast to WGANs, we do not need to enforce any constraints on $f$, e.g., the $1$-Lipschitz continuity. 

\textbf{Stochastic generator parameterization.} We add an additional noise input $z$ to transport map $T(x,z)$ to make it stochastic. This approach is a common technical instrument to parameterize one-to-many mappings in generative modeling, see \citep[\wasyparagraph 3.1]{almahairi2018augmented} or \citep[\wasyparagraph 3]{zhu2017toward}. In the context of OT, \citep{yang2018scalable} employ a stochastic generator to learn a transport plan $\pi$ in the unbalanced OT problem \citep{chizat2017unbalanced}. Due to this, their optimization objective slightly resembles ours \eqref{L-def}. However, this similarity is deceptive, see Appendix \ref{sec-unbalanced-ot}. 

\textbf{Dual OT solvers.} Our algorithm \ref{algorithm-not} recovers stochastic plans for weak costs \eqref{ot-primal-form-weak}. It subsumes previously known approaches which learn deterministic OT maps for strong costs \eqref{ot-primal-form}. When the cost is strong \eqref{ot-primal-form-weak} and transport map $T$ is restricted to be deterministic ${T(x,z)\equiv T(x)}$, our Algorithm \ref{algorithm-not} yields maximin method $\lceil \text{MM:R}\rfloor$, which was  discussed in \citep[\wasyparagraph 2]{korotin2021neural} for the quadratic cost ${\frac{1}{2}\|x-y\|^{2}}$ and further developed  by \citep{rout2022generative} for the $Q$-embedded cost ${-\langle Q(x),y\rangle}$ and by \citep{fan2022scalable} for other strong costs $c(x,y)$. These works are the most related to our study.

\vspace{-1mm}
\subsection{Universal Approximation with Neural Networks}
\label{sec-universal-approx}
\vspace{-1mm}

In this section, we show that it is possible to approximate transport maps with neural nets.

\begin{theorem}[Neural networks are universal approximators of stochastic transport maps] Assume that $\mathcal{X},\mathcal{Z}$ are compact and $\mathbb{Q}$ has finite second moment. Let $T$ be a stochastic map from $\mathbb{P}$ to $\mathbb{Q}$ (\underline{not necessarily optimal}). Then for any nonaffine continuous activation function which is continuously differentiable at at least one point (with nonzero derivative at that point) and for any $\epsilon>0$, there exists a neural network $T_{\theta}:\mathbb{R}^{P}\times\mathbb{R}^{S}\rightarrow\mathbb{R}^{Q}$ satisfying
\begin{equation}
\|T_{\theta}-T\|^{2}_{L^{2}}\leq \epsilon\qquad\text{and}\qquad \mathbb{W}_{2}^{2}\big((T_{\theta})_{\#}(\mathbb{P}\times\mathbb{S}),\mathbb{Q}\big)\leq \epsilon,
\label{T-eps-close-net}
\end{equation}
where $L^{2}=L^{2}(\mathbb{P}\times\mathbb{S},\mathcal{X}\times\mathcal{Z}\rightarrow\mathbb{R}^{Q})$ is the space of quadratically integrable w.r.t. $\mathbb{P}\times\mathbb{S}$ functions $\mathcal{X}\times\mathcal{Z}\rightarrow\mathbb{R}^{Q}$. That is, the network $T_{\theta}$ generates a distribution which is $\epsilon$-close to $\mathbb{Q}$ in $\mathbb{W}_{2}^{2}$.
\label{thm-universal}
\end{theorem}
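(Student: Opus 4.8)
The plan is to decouple the two conclusions: I would first establish the $L^{2}$ bound $\|T_{\theta}-T\|^{2}_{L^{2}}\leq\epsilon$ by a universal approximation argument, and then obtain the Wasserstein bound essentially for free from it. The key structural fact to exploit is that, since $T$ realizes a transport plan between $\mathbb{P}$ and $\mathbb{Q}$, we have $T_{\#}(\mathbb{P}\times\mathbb{S})=\mathbb{Q}$. Moreover, the finite second moment of $\mathbb{Q}$ guarantees $\int\|T\|^{2}d(\mathbb{P}\times\mathbb{S})=\int\|y\|^{2}d\mathbb{Q}(y)<\infty$, so that $T\in L^{2}(\mathbb{P}\times\mathbb{S})$ and the statement is well-posed.

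For the $L^{2}$ approximation, the obstacle is that $T$ is only \emph{measurable}, whereas the classical universal approximation theorems produce \emph{uniform} approximations of \emph{continuous} functions on compact sets. I would therefore proceed in two stages. First, since $\mathbb{P}\times\mathbb{S}$ is a finite Borel measure on the compact metric space $\mathcal{X}\times\mathcal{Z}$, continuous functions are dense in $L^{2}(\mathbb{P}\times\mathbb{S})$ (e.g.\ via Lusin's theorem together with truncation, using $T\in L^{2}$), so for any $\delta>0$ there exists a continuous $g:\mathcal{X}\times\mathcal{Z}\rightarrow\mathbb{R}^{Q}$ with $\|g-T\|_{L^{2}}\leq\delta$. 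Second, I would invoke the universal approximation theorem for feedforward networks under exactly the stated hypothesis on the activation (nonaffine, continuously differentiable with nonzero derivative at some point), which yields uniform approximation of $g$ on the compact set $\mathcal{X}\times\mathcal{Z}$; since the measure is finite, uniform closeness implies $\|T_{\theta}-g\|_{L^{2}}\leq\delta$. The triangle inequality then gives $\|T_{\theta}-T\|_{L^{2}}\leq 2\delta$, and choosing $\delta=\tfrac{1}{2}\sqrt{\epsilon}$ delivers the first bound.

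For the Wasserstein bound, I would use the coupling induced by the two maps. The map $(x,z)\mapsto\big(T_{\theta}(x,z),T(x,z)\big)$ pushes $\mathbb{P}\times\mathbb{S}$ forward to a probability measure on $\mathbb{R}^{Q}\times\mathbb{R}^{Q}$ whose marginals are exactly $(T_{\theta})_{\#}(\mathbb{P}\times\mathbb{S})$ and $T_{\#}(\mathbb{P}\times\mathbb{S})=\mathbb{Q}$. This is therefore an admissible coupling between these two distributions, so
\[
\mathbb{W}_{2}^{2}\big((T_{\theta})_{\#}(\mathbb{P}\times\mathbb{S}),\mathbb{Q}\big)\leq\int_{\mathcal{X}\times\mathcal{Z}}\|T_{\theta}(x,z)-T(x,z)\|^{2}\,d(\mathbb{P}\times\mathbb{S})=\|T_{\theta}-T\|_{L^{2}}^{2}\leq\epsilon .
\]
Thus both inequalities follow from the single $L^{2}$ estimate.

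I expect the only genuinely delicate step to be the density reduction from measurable to continuous maps, where one must ensure integrability is preserved (hence the role of the second-moment assumption on $\mathbb{Q}$, which places $T$ in $L^{2}$) and that the intermediate continuous function still maps into $\mathbb{R}^{Q}$ without spoiling the $L^{2}$ control. Everything downstream is a direct application of universal approximation together with the elementary coupling inequality $\mathbb{W}_{2}^{2}(F_{\#}\mu,G_{\#}\mu)\leq\|F-G\|_{L^{2}(\mu)}^{2}$.
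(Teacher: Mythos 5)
Your proposal is correct and follows essentially the same route as the paper's proof: establish $T\in L^{2}$ from the finite second moment of $\mathbb{Q}$, pass from measurable to continuous maps by $L^{2}$-density (the paper cites \citep[Proposition 7.9]{folland1999real} where you invoke Lusin's theorem), apply the Kidger--Lyons universal approximation theorem on the compact set $\mathcal{X}\times\mathcal{Z}$, and deduce the Wasserstein bound from the $L^{2}$ bound. The only cosmetic difference is that you prove the coupling inequality $\mathbb{W}_{2}^{2}(F_{\#}\mu,G_{\#}\mu)\leq\|F-G\|_{L^{2}(\mu)}^{2}$ explicitly, whereas the paper cites it as \citep[Lemma A.2]{korotin2019wasserstein}.
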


\vspace{-3mm}
\begin{proof}
The squared norm $\|T\|_{L^{2}}^{2}$ is equal to the second moment of $\mathbb{Q}$ since $T$ pushes $\mathbb{P}\times\mathbb{S}$ to $\mathbb{Q}$. The distribution $\mathbb{Q}$ has finite second moment, and, consequently, $T\in L^{2}$. Thanks to \citep[Proposition 7.9]{folland1999real}, the continuous functions ${C^{0}(\mathcal{X}\times\mathcal{Z}\rightarrow\mathbb{R}^{Q})}$ are dense\footnote{The proposition considers scalar-valued functions ($Q=1$), but is analogous for vector-valued functions.} in $L^{2}$. According to  \citep[Theorem 3.2]{kidger2020universal}, the neural networks $\mathbb{R}^{P}\times\mathbb{R}^{S}\rightarrow\mathbb{R}^{Q}$ with the above-mentioned activations are dense in ${C^{0}(\mathcal{X}\times\mathcal{Z}\rightarrow\mathbb{R}^{Q})}$ w.r.t. $L^{\infty}$ norm and, consequently, w.r.t. $L^{2}$ norm. Combining these results yields that neural nets are dense in $L^{2}$, and for every $\epsilon>0$ there necessarily exists network $T_{\theta}$ satisfying the left inequality in \eqref{T-eps-close-net}. For $T_{\theta}$, the right inequality follows from \citep[Lemma A.2]{korotin2019wasserstein}.
\end{proof}

\vspace{-2mm}
Our Theorem \ref{thm-universal} states that neural nets can approximate stochastic maps in $L^{2}$ norm. It should be taken into account that such continuous nets $T_{\theta}$ may be highly irregular and hard to learn in practice. 

\vspace{-2.5mm}
\section{Evaluation}
\label{sec-evaluation}

\vspace{-2mm}We perform \underline{\textit{comparison}} with the weak discrete OT (considered as the ground truth) on toy 2D, 1D distributions in Appendices \ref{sec-toy-experiments}, \ref{sec-exp-toy-1d}, respectively. In this section, we test our algorithm on an unpaired image-to-image translation task. We perform \underline{\textit{comparison}} with popular existing translation methods in Appendix \ref{sec-exp-comparison}. The code is written in \texttt{PyTorch} framework and is publicly available at\vspace{-1.5mm}
\begin{center}
\url{https://github.com/iamalexkorotin/NeuralOptimalTransport}
\end{center}

\vspace{-2mm}\textbf{Image datasets.} We use the following publicly available datasets as $\mathbb{P},\mathbb{Q}$: aligned anime faces\footnote{\url{kaggle.com/reitanaka/alignedanimefaces}}, celebrity faces \citep{liu2015faceattributes}, shoes \citep{yu2014fine}, Amazon handbags, churches from LSUN dataset \citep{yu2015lsun}, outdoor images from the MIT places database \citep{zhou2014learning}. The size of datasets varies from 50K to 500K images.

\vspace{-0.5mm}\textbf{Train-test split.} We pick 90\% of each dataset for unpaired training. The rest 10\% are considered as the test set. All the results presented here are \underline{\textit{exclusively}} for test images, i.e., \textit{unseen data}.

\vspace{-0.5mm}\textbf{Transport costs.} We experiment with the strong ($\gamma=0$) and $\gamma$-weak ($\gamma>0$) quadratic costs. Testing other costs, e.g., perceptual \citep{johnson2016perceptual} or semantic \citep{cherian2019sem}, might be interesting practically, but these two quadratic costs already provide promising performance.

\vspace{-0.8mm}The other \textbf{training details} are given in Appendix \ref{sec-exp-details}.

\vspace{-1.7mm}\subsection{Preliminary Evaluation}
\label{sec-preliminary-experiments}

\vspace{-1.7mm}In the preliminary experiments with   \textit{strong} cost ($\gamma=0$), we noted that $T(x,z)$ becomes independent of $z$. For a fixed potential $f$ and a point $x$, the map $T(x,\cdot)$ learns to be the map pushing distribution $\mathbb{S}$ to some $\arginf$ distribution $\mu$ of \eqref{weak-c-transform}. For strong costs, there are suitable degenerate distributions $\mu$, see the discussion around \eqref{strong-c-transform}. Thus, for $T$ it becomes unnecessary to keep any dependence on $z$; it simply learns a deterministic map ${T(x,z)=T(x)}$. We call this behavior   a~\textbf{conditional collapse}.

Importantly, for the $\gamma$-\textit{weak} cost ($\gamma>0$), we noted a different behavior: the stochastic map $T(x,z)$ did not collapse conditionally.  To explain this, we substitute \eqref{weak-w2-cost} into \eqref{ot-primal-form-weak} to obtain\vspace{-0.5mm}
 \begin{eqnarray}\mathcal{W}_{2,\gamma}^{2}(\mathbb{P},\mathbb{Q})=\inf_{\pi\in\Pi(\mathbb{P},\mathbb{Q})}\bigg[\int_{\mathcal{X}\times\mathcal{Y}}\frac{1}{2}\|x-y\|^{2}d\pi(x,y)-
 \gamma\cdot \int_{\mathcal{X}} \frac{1}{2}\text{Var}\big(\pi(y\vert x)\big)\underbrace{d\pi(x)}_{d\mathbb{P}(x)}\bigg].
 \nonumber\vspace{-0.5mm}
 \end{eqnarray}
 
\vspace{-3mm}The first term is analogous to the strong cost ($\mathbb{W}_{2}=\mathcal{W}_{2,0}$), while the additional second term stimulates the OT plan to be stochastic, i.e., to have high conditional variance.
 
 \clearpage
\begin{figure*}[!t]\vspace{-3mm}\hspace{-9mm}
\begin{subfigure}[b]{0.54\linewidth}
\centering
\includegraphics[width=0.97\linewidth]{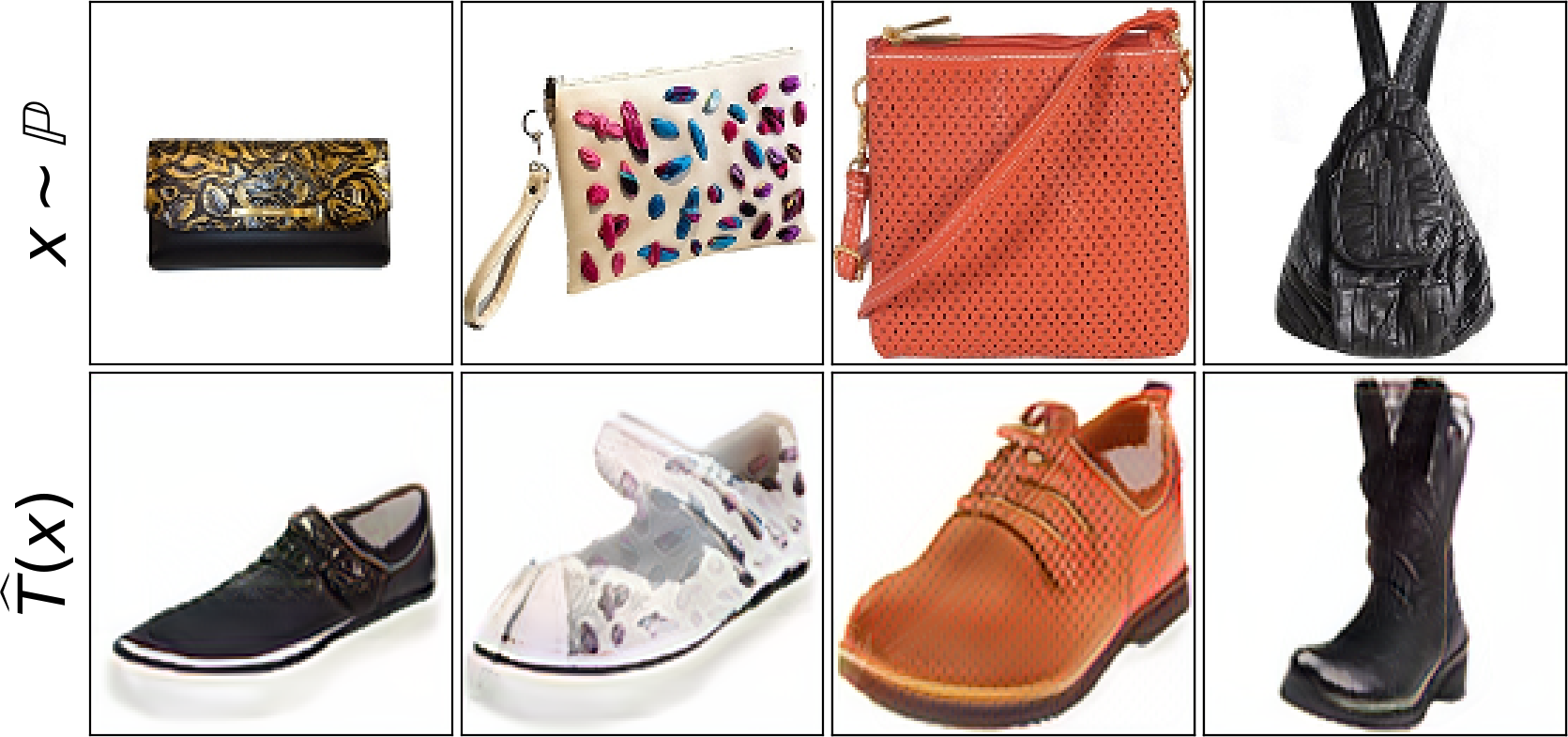}
\caption{Handbags $\rightarrow$ shoes, $128\times 128$.}
\label{fig:handbag-shoes-128}
\end{subfigure}
\begin{subfigure}[b]{0.54\linewidth}
\centering
\includegraphics[width=0.97\linewidth]{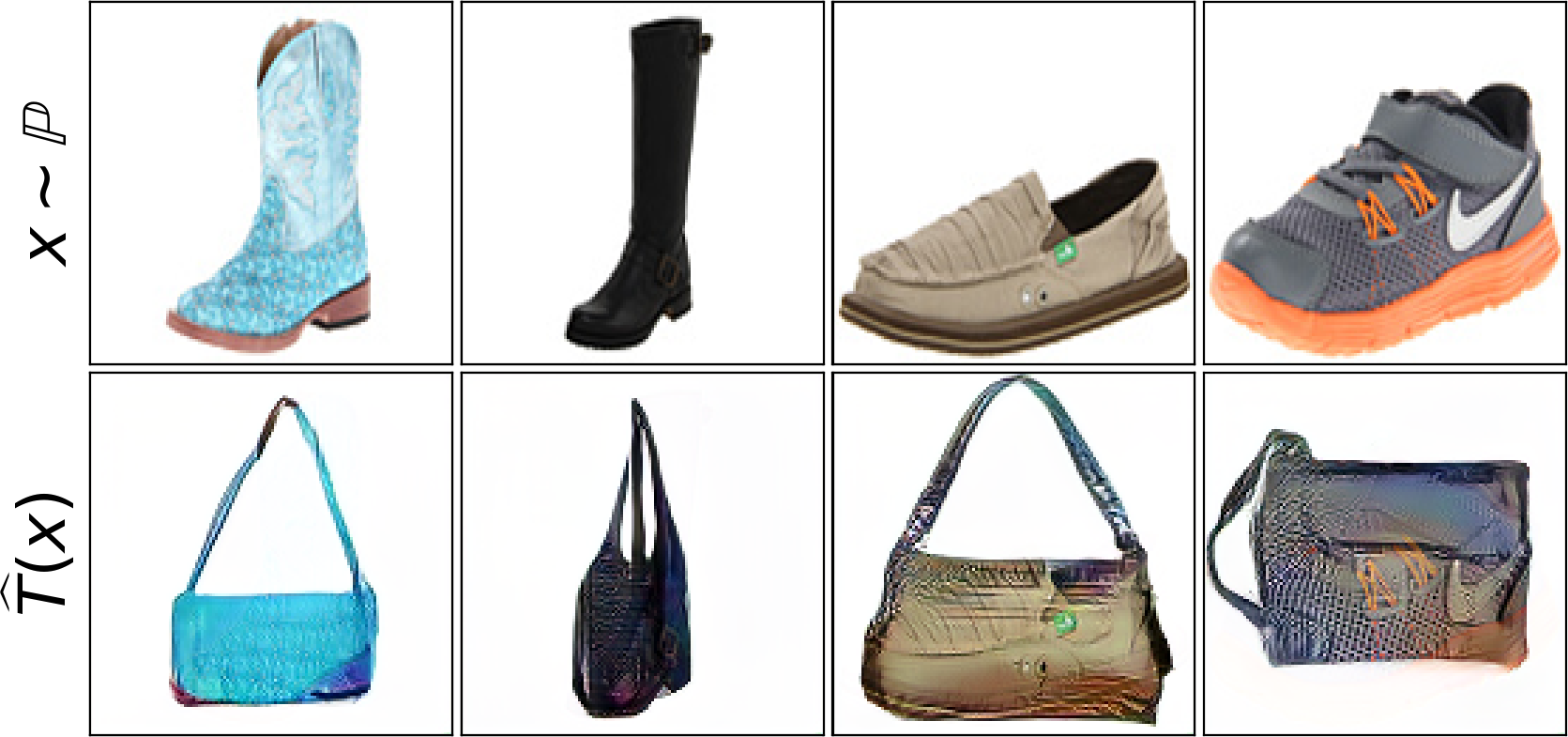}
\caption{Shoes $\rightarrow$ handbags, $128\times 128$.}
\label{fig:shoes-handbag-128}
\end{subfigure}\vspace{1mm}

\hspace{-9mm}\begin{subfigure}[b]{0.54\linewidth}
\centering
\includegraphics[width=0.97\linewidth]{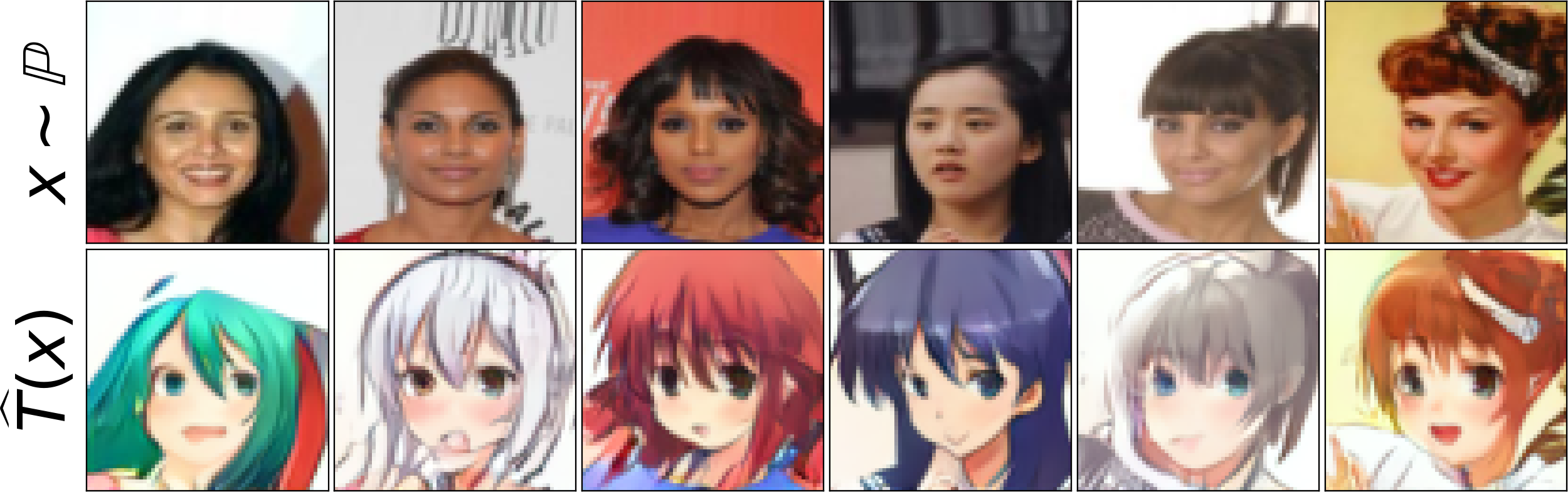}
\caption{Celeba (female) $\rightarrow$ anime, $64\times 64$.}
\label{fig:celeba-female-anime-64}
\end{subfigure}
\begin{subfigure}[b]{0.54\linewidth}
\centering
\includegraphics[width=0.97\linewidth]{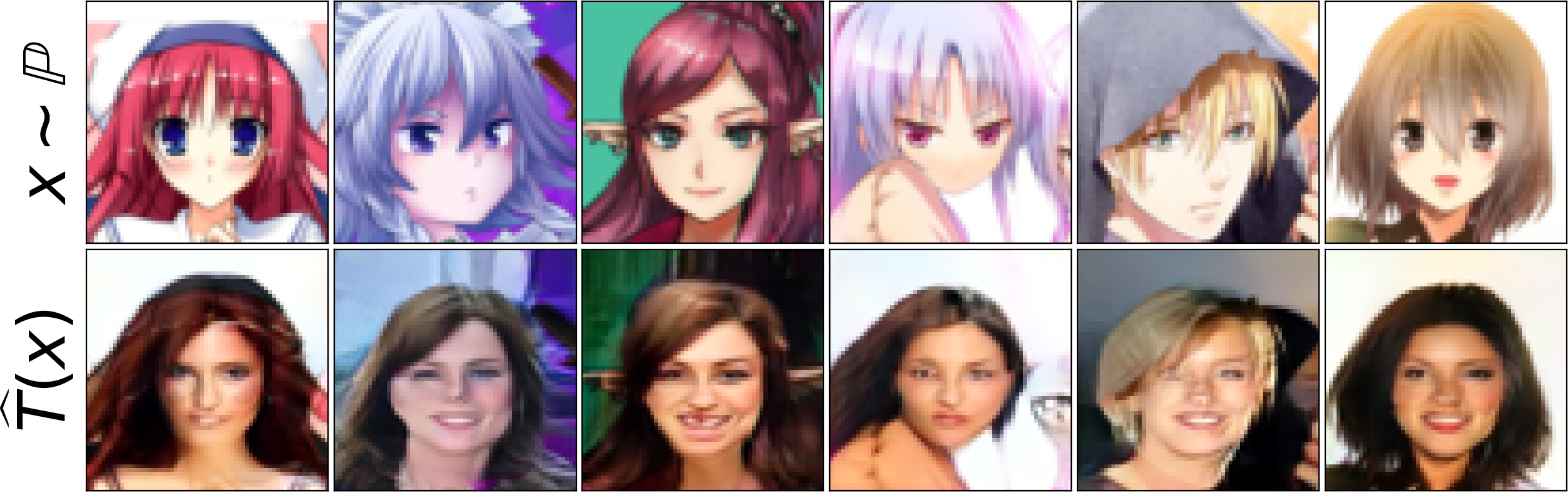}
\caption{Anime $\rightarrow$ celeba (female), $64\times 64$.}
\label{fig:anime-celeba-female-64}
\end{subfigure}\vspace{1mm}

\hspace{-9mm}\begin{subfigure}[b]{0.54\linewidth}
\centering
\includegraphics[width=0.97\linewidth]{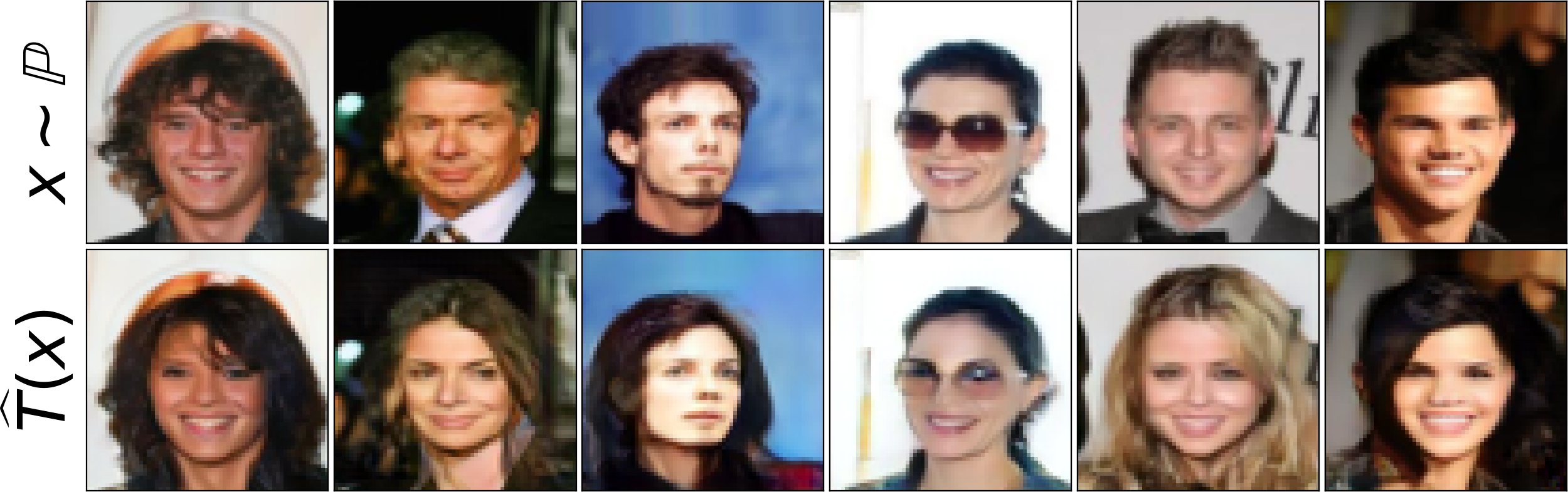}
\caption{Celeba (male) $\rightarrow$ celeba (female), $64\times 64$.}
\label{fig:celeba-male-celeba-female-64}
\end{subfigure}
\begin{subfigure}[b]{0.54\linewidth}
\centering
\includegraphics[width=0.97\linewidth]{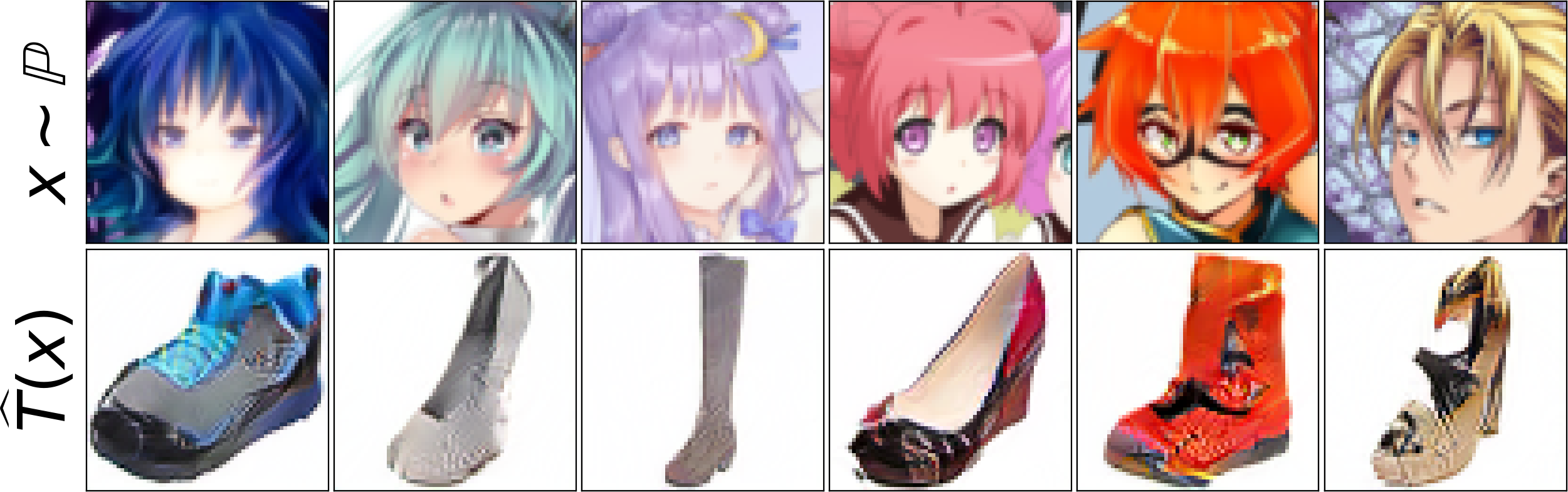}
\caption{Anime $\rightarrow$ shoes, $64\times 64$.}
\label{fig:anime-shoes-64}
\end{subfigure}
\vspace{-2mm}
\caption{\centering Unpaired translation with deterministic OT maps ($\mathbb{W}_{2}$).}
\label{fig:style-translation-deterministic-all}
\vspace{-6mm}
\end{figure*}

\vspace{-2mm}Taking into account our preliminary findings, we perform two types of experiments. In \S\ref{sec-one-to-one-translation}, we learn deterministic (one-to-one) translation maps $T(x)$ for the strong cost ($\gamma=0$), i.e., do not add $z$-channel.
In \S\ref{sec-one-to-many-translation}, we learn stochastic (one-to-many) maps $T(x,z)$ for the $\gamma$-weak cost ($\gamma>0$). For completeness, in Appendix \ref{sec-diversity-similarity}, we study how varying $\gamma$ affects the \underline{\textit{diversity}} of samples.

\vspace{-2.5mm}\subsection{One-to-one Translation with Optimal Maps}
\label{sec-one-to-one-translation}

\vspace{-2.5mm}We learn deterministic OT maps between various pairs of datasets. We provide the results in Figures \ref{fig:celeba-female-anime-128-1} and \ref{fig:style-translation-deterministic-all}. Extra results for all the dataset pairs that we consider are given in Appendix \ref{sec-additional-exp-results}.

\vspace{-1.5mm}Being optimal, our translation map $\widehat{T}(x)$ tries to minimally change the image content $x$ in the $L^{2}$ pixel space. This results in preserving certain features during translation. In \textit{shoes} $\leftrightarrow$ \textit{handbags} (Figures \ref{fig:shoes-handbag-128}, \ref{fig:handbag-shoes-128}), the image color and texture of the pushforward samples reflects those of input samples. In \textit{celeba (female)} $\leftrightarrow$ \textit{anime} (Figures \ref{fig:celeba-female-anime-128-1}, \ref{fig:celeba-female-anime-64}, \ref{fig:anime-celeba-female-64}), head forms, hairstyles are mostly similar for input and output images. The hair in anime is usually bigger than that in celeba. Thus, when translating \textit{celeba (female)} $\leftrightarrow$ \textit{anime}, the anime hair inherits the color from the celebrity image background. In \textit{outdoor} $\rightarrow$ \textit{churches} (Figure \ref{fig:celeba-female-anime-128-1}), the ground and the sky are preserved, in \textit{celeba (male)} $\rightarrow$ \textit{celeba (female)} (Figure \ref{fig:celeba-male-celeba-female-64}) -- the face does not change. We also provide results for translation in the case when the input and output domains are significantly different, see \textit{anime} $\rightarrow$ \textit{shoes} (Figure \ref{fig:anime-shoes-64}).

\begin{figure*}[!t]
\vspace{-4mm}\hspace{-9mm}
\begin{subfigure}[b]{0.54\linewidth}
\centering
\includegraphics[width=0.97\linewidth]{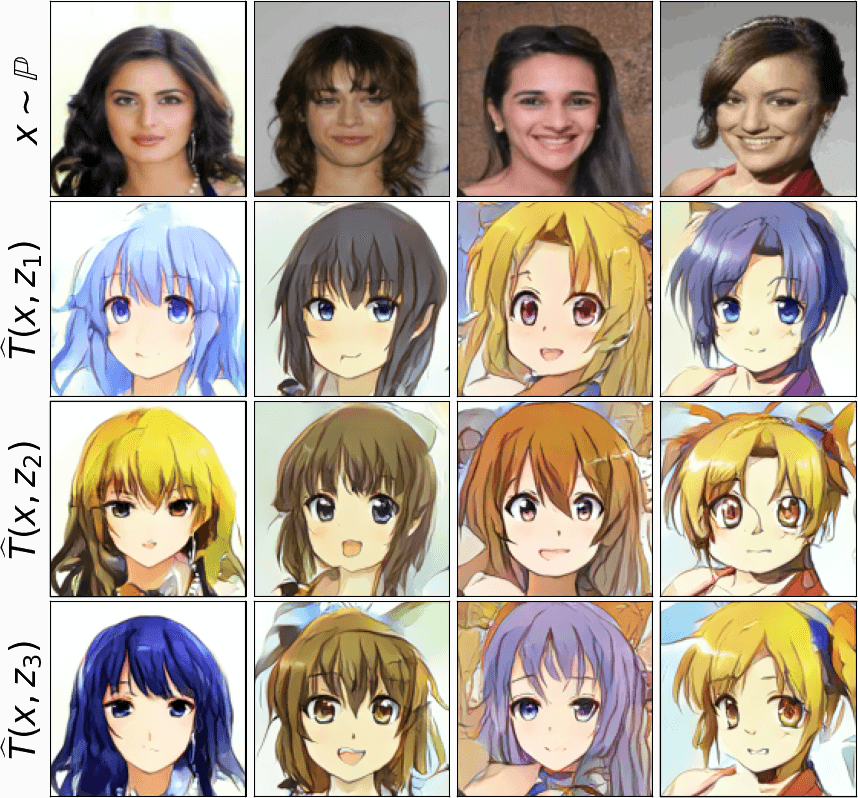}
\caption{
Celeba (female) $\rightarrow$ anime, $128\times 128$ ($\mathcal{W}_{2,\frac{2}{3}}$).
}
\label{fig:celeba-woman-anime-weak-128-1}
\end{subfigure}
\begin{subfigure}[b]{0.54\linewidth}
\centering
\includegraphics[width=0.97\linewidth]{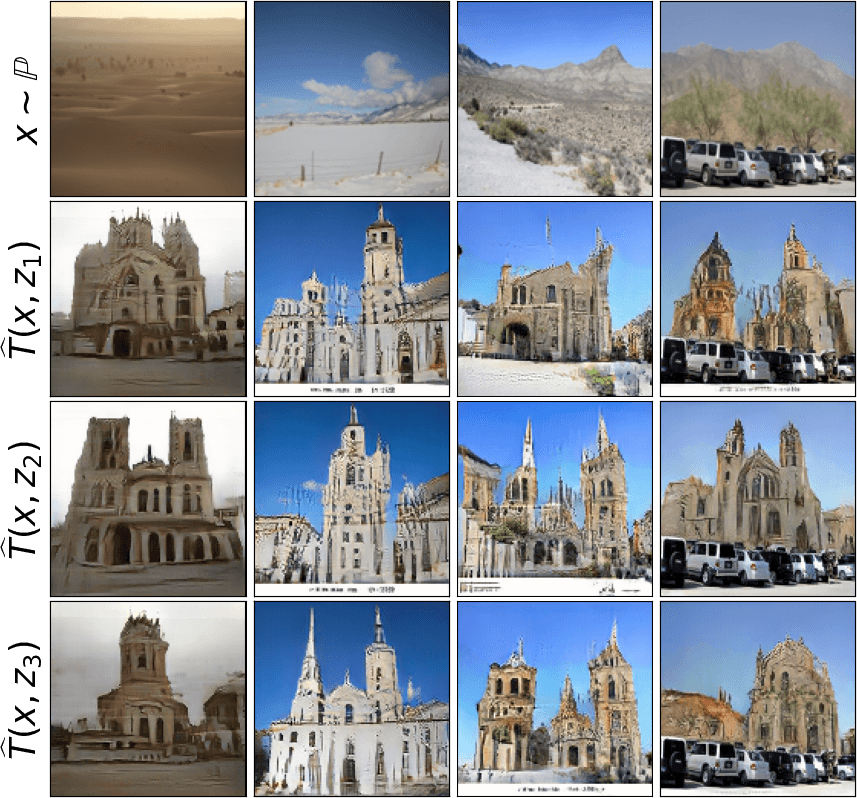}
\caption{Outdoor $\rightarrow$ church, $128\times 128$ ($\mathcal{W}_{2,\frac{2}{3}}$).}
\label{fig:celeba-woman-anime-weak-128-1}
\end{subfigure}\vspace{0.2mm}

\hspace{-9mm}\begin{subfigure}[b]{0.35\linewidth}\vspace{0.5mm}
\centering
\includegraphics[width=0.97\linewidth]{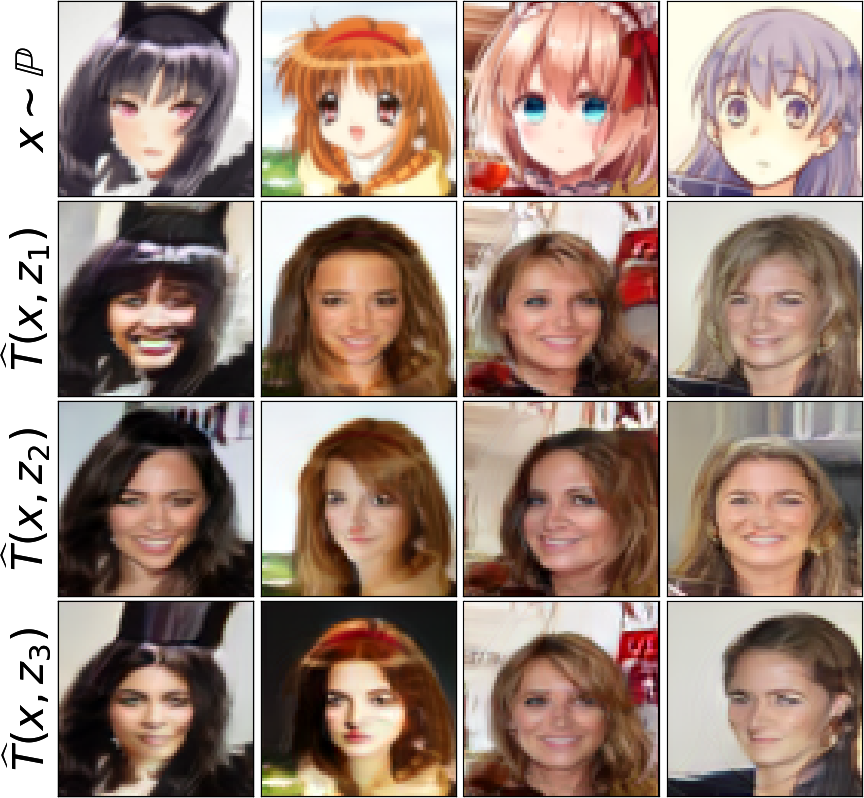}
\caption{Anime  \hspace{-0.5mm}$\rightarrow$ \hspace{-0.5mm}celeba (f), $64\!\times\! 64$ ($\mathcal{W}_{2,\frac{2}{3}}$).}
\label{fig:anime-celeba-woman-weak-64-1}
\end{subfigure}\hspace{1.5mm}
\begin{subfigure}[b]{0.35\linewidth}
\centering
\includegraphics[width=0.97\linewidth]{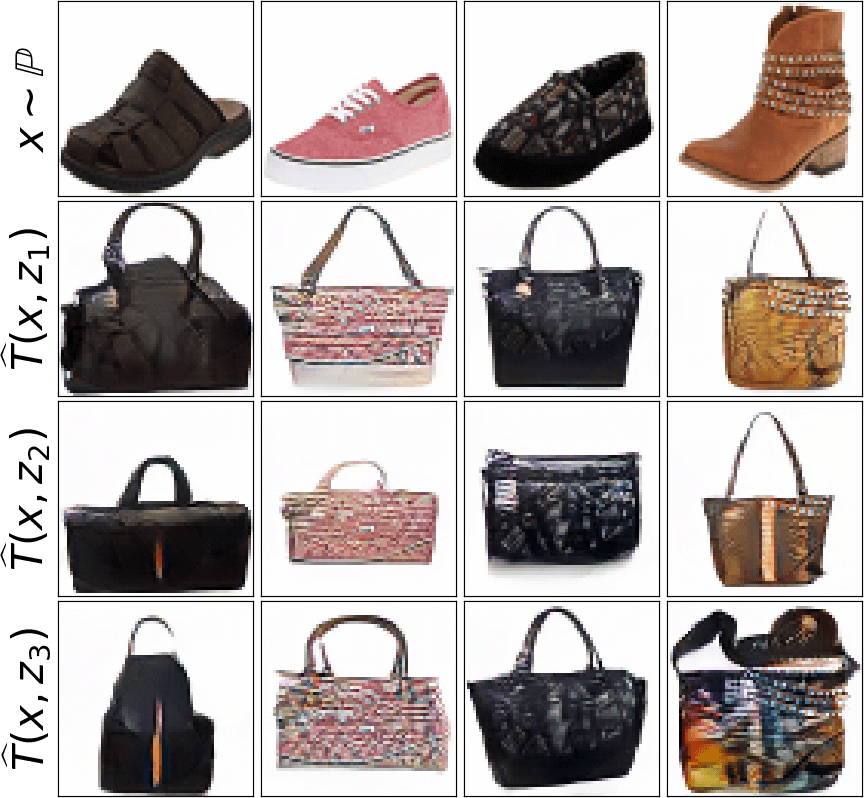}
\caption{
Shoes  $\rightarrow$ handbags, $64\times 64$ ($\mathcal{W}_{2,1}$).} 
\label{fig:shoes-handbag-weak-64-1}
\end{subfigure}\hspace{1.5mm}
\begin{subfigure}[b]{0.35\linewidth}
\centering
\includegraphics[width=0.97\linewidth]{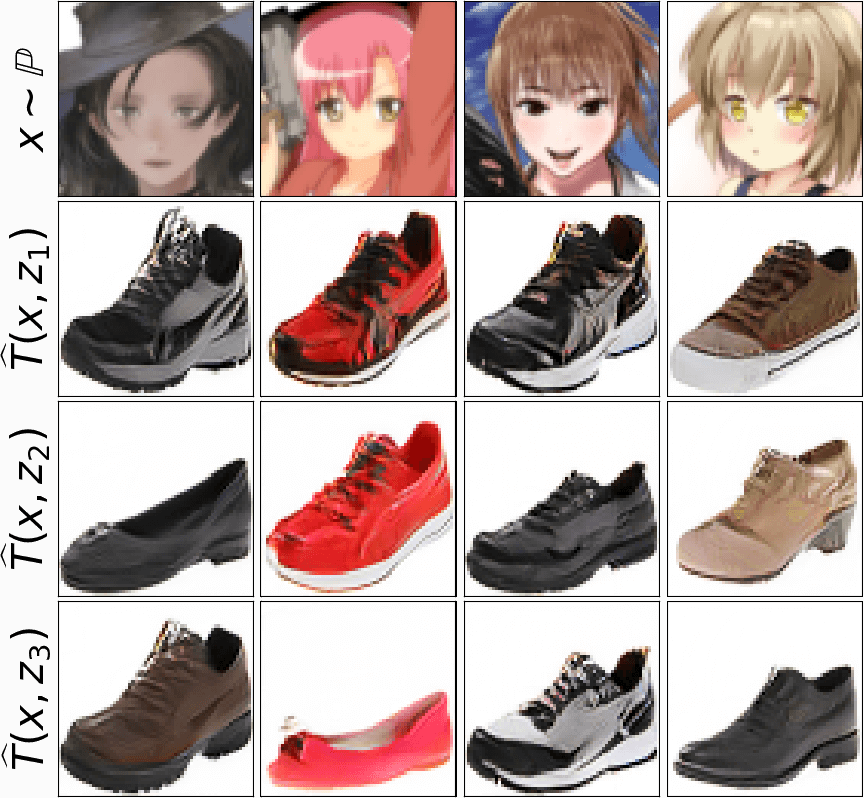}
\caption{Anime $\rightarrow$ shoes, $64\times 64$ ($\mathcal{W}_{2,1}$).} 
\label{fig:anime-shoes-weak-64-1}
\end{subfigure}\vspace{-1.5mm}
\caption{\vspace{-1mm}\centering Unpaired translation with stochastic OT maps  ($\mathcal{W}_{2,\gamma}$).}
\label{fig:style-translation-stochastic-all}
\vspace{-6mm}
\end{figure*}

\vspace{-1mm}\textbf{Related work}. Existing unpaired translation models, e.g., CycleGAN \citep{zhu2017unpaired} or UNIT \citep{liu2017unsupervised}, typically have complex adversarial optimization objectives endowed with additional losses. These models require simultaneous optimization of several neural networks. Importantly, vanilla CycleGAN searches for a random translation map and is not capable of preserving certain attributes, e.g., the color, see \citep[Figure 5b]{lu2019guiding}. To handle this issue, imposing extra losses is required \citep{benaim2017one,kim2017learning}, which further complicates the hyperparameter selection. In contrast, our approach has a straightforward objective \eqref{main-objective}; we use only 2 networks (potential $f$, map $T$), see Table \ref{table-methods-hyperparameters} for the \underline{\textit{comparison of hyperparameters}}. While the majority of existing unpaired translation models are based on GANs, recent work \citep{su2023dual} proposes a diffusion model (DDIBs) and relates it to Schrödinger Bridge \citep{leonard2014survey}, i.e., entropic OT.

\vspace{-2.5mm}\subsection{One-to-many Translation with Optimal Plans}
\label{sec-one-to-many-translation}

\vspace{-2mm}We learn stochastic OT maps between various pairs of datasets for the $\gamma$-weak quadratic cost. The parameter $\gamma$ equals $\frac{2}{3}$ or $1$ in the experiments. We provide the results in Figures \ref{fig:handbag-shoes-128-weak-1} and \ref{fig:style-translation-stochastic-all}.
In all the cases, the random noise inputs $z\sim \mathbb{S}$ are not synchronized for different inputs $x$. The examples with the \underline{synchronized} noise inputs $z$ are given in Appendix \ref{sec-synchronized}.
Extended results and examples of \underline{\textit{interpolation}} in the conditional latent space are given in Appendix \ref{sec-additional-exp-results}. The stochastic map $\widehat{T}(x,z)$ preserves the attributes of the input image and produces multiple outputs. 

\vspace{-0.5mm}\textbf{Related work}. Transforming a one-to-one learning pipeline to one-to-many is nontrivial. Simply adding additional noise input leads to conditional collapse \citep{zhang2018xogan}. This is resolved by AugCycleGAN \citep{almahairi2018augmented} and M-UNIT \citep{huang2018multimodal}, but their optimization objectives are much more complicated then vanilla versions. Our method optimizes only $2$ nets $f,T$ in straightforward objective \eqref{main-objective}. It offers a \textit{single parameter} $\gamma$ to control the amount of variability in the learned maps. We refer to Table \ref{table-methods-hyperparameters} for the comparison of \underline{\textit{hyperparameters}} of the methods. 

\vspace{-2.5mm}
\section{Discussion}
\label{sec-dicsussion}

\vspace{-2mm}\textbf{Potential impact.} Our method is a novel generic tool to align probability distributions with deterministic and stochastic transport maps. Beside unpaired translation, we expect our approach to be applied to other one-to-one and one-to-many unpaired learning tasks as well (image restoration, domain adaptation, etc.) and improve existing models in those fields. Compared to the popular models based on GANs \citep{goodfellow2014generative} or diffusion models \citep{ho2020denoising}, our method provides better interpretability of the learned map and allows to control the amount of diversity in generated samples (Appendix \ref{sec-diversity-similarity}). It should be taken into account that OT maps we learn might be suitable not for all unpaired tasks. We mark designing task-specific transport costs as a promising research direction.


\vspace{-1mm}\textbf{Limitations.} Our method searches for a solution $(f^{*},T^{*})$ of a saddle point problem \eqref{main-objective} and extracts the stochastic OT map $T^{*}$ from it. We highlight after Lemma \ref{arginf-lemma} and in \wasyparagraph\ref{sec-preliminary-experiments} that not all $T^{*}$ are optimal stochastic OT maps. For strong costs, the issue leads to the conditional collapse. Studying saddle points of \eqref{main-objective} and $\arginf$ sets \eqref{T-in} is an important challenge to address in the further research.

\textbf{Potential societal impact}. Our developed method is at the junction of optimal transport and generative learning. In practice, generative models and optimal transport are widely used in entertainment (image-manipulation applications like adding masks to images, hair coloring, etc.), design, computer graphics, rendering, etc. Our method is potentially applicable to many problems appearing in mentioned industries. While the mentioned applications allow making image processing methods publicly available, a potential negative is that they might transform some jobs in the graphics industry.

\vspace{-0.5mm}\textbf{Reproducibility.} We provide the source code for all experiments and release the checkpoints for all models of \wasyparagraph\ref{sec-evaluation}. The details are given in \texttt{README.MD} in the official repository.

\vspace{-0.5mm}\texttt{ACKNOWLEDGEMENTS.} The work was supported by the Analytical center under the RF Government (subsidy agreement 000000D730321P5Q0002, Grant No. 70-2021-00145 02.11.2021).

\bibliography{references}

\begin{thebibliography}{61}
\providecommand{\natexlab}[1]{#1}
\providecommand{\url}[1]{\texttt{#1}}
\expandafter\ifx\csname urlstyle\endcsname\relax
  \providecommand{\doi}[1]{doi: #1}\else
  \providecommand{\doi}{doi: \begingroup \urlstyle{rm}\Url}\fi

\bibitem[Alibert et~al.(2019)Alibert, Bouchitt{\'e}, and
  Champion]{alibert2019new}
J-J Alibert, Guy Bouchitt{\'e}, and Thierry Champion.
\newblock A new class of costs for optimal transport planning.
\newblock \emph{European Journal of Applied Mathematics}, 30\penalty0
  (6):\penalty0 1229--1263, 2019.

\bibitem[Almahairi et~al.(2018)Almahairi, Rajeshwar, Sordoni, Bachman, and
  Courville]{almahairi2018augmented}
Amjad Almahairi, Sai Rajeshwar, Alessandro Sordoni, Philip Bachman, and Aaron
  Courville.
\newblock Augmented cyclegan: Learning many-to-many mappings from unpaired
  data.
\newblock In \emph{International Conference on Machine Learning}, pp.\
  195--204. PMLR, 2018.

\bibitem[Amos et~al.(2017)Amos, Xu, and Kolter]{amos2017input}
Brandon Amos, Lei Xu, and J~Zico Kolter.
\newblock Input convex neural networks.
\newblock In \emph{Proceedings of the 34th International Conference on Machine
  Learning-Volume 70}, pp.\  146--155. JMLR. org, 2017.

\bibitem[Arjovsky et~al.(2017)Arjovsky, Chintala, and
  Bottou]{arjovsky2017wasserstein}
Martin Arjovsky, Soumith Chintala, and L{\'e}on Bottou.
\newblock Wasserstein generative adversarial networks.
\newblock In \emph{International conference on machine learning}, pp.\
  214--223. PMLR, 2017.

\bibitem[Backhoff-Veraguas et~al.(2019)Backhoff-Veraguas, Beiglb{\"o}ck, and
  Pammer]{backhoff2019existence}
Julio Backhoff-Veraguas, Mathias Beiglb{\"o}ck, and Gudmun Pammer.
\newblock Existence, duality, and cyclical monotonicity for weak transport
  costs.
\newblock \emph{Calculus of Variations and Partial Differential Equations},
  58\penalty0 (6):\penalty0 1--28, 2019.

\bibitem[Benaim \& Wolf(2017)Benaim and Wolf]{benaim2017one}
Sagie Benaim and Lior Wolf.
\newblock One-sided unsupervised domain mapping.
\newblock \emph{Advances in Neural Information Processing Systems}, 30, 2017.

\bibitem[Cherian \& Sullivan(2019)Cherian and Sullivan]{cherian2019sem}
Anoop Cherian and Alan Sullivan.
\newblock Sem-gan: semantically-consistent image-to-image translation.
\newblock In \emph{2019 ieee winter conference on applications of computer
  vision (wacv)}, pp.\  1797--1806. IEEE, 2019.

\bibitem[Chizat(2017)]{chizat2017unbalanced}
Lenaic Chizat.
\newblock \emph{Unbalanced optimal transport: Models, numerical methods,
  applications}.
\newblock PhD thesis, Universit{\'e} Paris sciences et lettres, 2017.

\bibitem[Choi et~al.(2020)Choi, Uh, Yoo, and Ha]{choi2020stargan}
Yunjey Choi, Youngjung Uh, Jaejun Yoo, and Jung-Woo Ha.
\newblock Stargan v2: Diverse image synthesis for multiple domains.
\newblock In \emph{Proceedings of the IEEE/CVF conference on computer vision
  and pattern recognition}, pp.\  8188--8197, 2020.

\bibitem[Daniels et~al.(2021)Daniels, Maunu, and Hand]{daniels2021score}
Grady Daniels, Tyler Maunu, and Paul Hand.
\newblock Score-based generative neural networks for large-scale optimal
  transport.
\newblock \emph{Advances in Neural Information Processing Systems}, 34, 2021.

\bibitem[Fan et~al.(2022)Fan, Liu, Ma, Chen, and Zhou]{fan2022scalable}
Jiaojiao Fan, Shu Liu, Shaojun Ma, Yongxin Chen, and Hao-Min Zhou.
\newblock Scalable computation of monge maps with general costs.
\newblock In \emph{ICLR Workshop on Deep Generative Models for Highly
  Structured Data}, 2022.
\newblock URL \url{https://openreview.net/forum?id=rEnGR3VdDW5}.

\bibitem[Folland(1999)]{folland1999real}
Gerald~B Folland.
\newblock \emph{Real analysis: modern techniques and their applications},
  volume~40.
\newblock John Wiley \& Sons, 1999.

\bibitem[Genevay et~al.(2016)Genevay, Cuturi, Peyr{\'e}, and
  Bach]{genevay2016stochastic}
Aude Genevay, Marco Cuturi, Gabriel Peyr{\'e}, and Francis Bach.
\newblock Stochastic optimization for large-scale optimal transport.
\newblock In \emph{Advances in neural information processing systems}, pp.\
  3440--3448, 2016.

\bibitem[Goodfellow et~al.(2014)Goodfellow, Pouget-Abadie, Mirza, Xu,
  Warde-Farley, Ozair, Courville, and Bengio]{goodfellow2014generative}
Ian Goodfellow, Jean Pouget-Abadie, Mehdi Mirza, Bing Xu, David Warde-Farley,
  Sherjil Ozair, Aaron Courville, and Yoshua Bengio.
\newblock Generative adversarial nets.
\newblock In \emph{Advances in neural information processing systems}, pp.\
  2672--2680, 2014.

\bibitem[Gozlan \& Juillet(2020)Gozlan and Juillet]{gozlan2020mixture}
Nathael Gozlan and Nicolas Juillet.
\newblock On a mixture of brenier and strassen theorems.
\newblock \emph{Proceedings of the London Mathematical Society}, 120\penalty0
  (3):\penalty0 434--463, 2020.

\bibitem[Gozlan et~al.(2017)Gozlan, Roberto, Samson, and
  Tetali]{gozlan2017kantorovich}
Nathael Gozlan, Cyril Roberto, Paul-Marie Samson, and Prasad Tetali.
\newblock Kantorovich duality for general transport costs and applications.
\newblock \emph{Journal of Functional Analysis}, 273\penalty0 (11):\penalty0
  3327--3405, 2017.

\bibitem[Gulrajani et~al.(2017)Gulrajani, Ahmed, Arjovsky, Dumoulin, and
  Courville]{gulrajani2017improved}
Ishaan Gulrajani, Faruk Ahmed, Martin Arjovsky, Vincent Dumoulin, and Aaron~C
  Courville.
\newblock Improved training of {W}asserstein {GAN}s.
\newblock In \emph{Advances in Neural Information Processing Systems}, pp.\
  5767--5777, 2017.

\bibitem[Heusel et~al.(2017)Heusel, Ramsauer, Unterthiner, Nessler, and
  Hochreiter]{heusel2017gans}
Martin Heusel, Hubert Ramsauer, Thomas Unterthiner, Bernhard Nessler, and Sepp
  Hochreiter.
\newblock {GAN}s trained by a two time-scale update rule converge to a local
  nash equilibrium.
\newblock In \emph{Advances in neural information processing systems}, pp.\
  6626--6637, 2017.

\bibitem[Ho et~al.(2020)Ho, Jain, and Abbeel]{ho2020denoising}
Jonathan Ho, Ajay Jain, and Pieter Abbeel.
\newblock Denoising diffusion probabilistic models.
\newblock \emph{Advances in Neural Information Processing Systems},
  33:\penalty0 6840--6851, 2020.

\bibitem[Huang et~al.(2018)Huang, Liu, Belongie, and
  Kautz]{huang2018multimodal}
Xun Huang, Ming-Yu Liu, Serge Belongie, and Jan Kautz.
\newblock Multimodal unsupervised image-to-image translation.
\newblock In \emph{Proceedings of the European conference on computer vision
  (ECCV)}, pp.\  172--189, 2018.

\bibitem[Johnson et~al.(2016)Johnson, Alahi, and
  Fei-Fei]{johnson2016perceptual}
Justin Johnson, Alexandre Alahi, and Li~Fei-Fei.
\newblock Perceptual losses for real-time style transfer and super-resolution.
\newblock In \emph{European conference on computer vision}, pp.\  694--711.
  Springer, 2016.

\bibitem[Kallenberg(1997)]{kallenberg1997foundations}
Olav Kallenberg.
\newblock \emph{Foundations of modern probability}, volume~2.
\newblock Springer, 1997.

\bibitem[Kantorovitch(1958)]{kantorovitch1958translocation}
Leonid Kantorovitch.
\newblock On the translocation of masses.
\newblock \emph{Management Science}, 5\penalty0 (1):\penalty0 1--4, 1958.

\bibitem[Kidger \& Lyons(2020)Kidger and Lyons]{kidger2020universal}
Patrick Kidger and Terry Lyons.
\newblock Universal approximation with deep narrow networks.
\newblock In \emph{Conference on learning theory}, pp.\  2306--2327. PMLR,
  2020.

\bibitem[Kim et~al.(2017)Kim, Cha, Kim, Lee, and Kim]{kim2017learning}
Taeksoo Kim, Moonsu Cha, Hyunsoo Kim, Jung~Kwon Lee, and Jiwon Kim.
\newblock Learning to discover cross-domain relations with generative
  adversarial networks.
\newblock In \emph{International Conference on Machine Learning}, pp.\
  1857--1865. PMLR, 2017.

\bibitem[Kingma \& Ba(2014)Kingma and Ba]{kingma2014adam}
Diederik~P Kingma and Jimmy Ba.
\newblock Adam: A method for stochastic optimization.
\newblock \emph{arXiv preprint arXiv:1412.6980}, 2014.

\bibitem[Korotin et~al.(2021{\natexlab{a}})Korotin, Egiazarian, Asadulaev,
  Safin, and Burnaev]{korotin2019wasserstein}
Alexander Korotin, Vage Egiazarian, Arip Asadulaev, Alexander Safin, and Evgeny
  Burnaev.
\newblock Wasserstein-2 generative networks.
\newblock In \emph{International Conference on Learning Representations},
  2021{\natexlab{a}}.
\newblock URL \url{https://openreview.net/forum?id=bEoxzW_EXsa}.

\bibitem[Korotin et~al.(2021{\natexlab{b}})Korotin, Li, Genevay, Solomon,
  Filippov, and Burnaev]{korotin2021neural}
Alexander Korotin, Lingxiao Li, Aude Genevay, Justin~M Solomon, Alexander
  Filippov, and Evgeny Burnaev.
\newblock Do neural optimal transport solvers work? a continuous wasserstein-2
  benchmark.
\newblock \emph{Advances in Neural Information Processing Systems}, 34,
  2021{\natexlab{b}}.

\bibitem[Korotin et~al.(2021{\natexlab{c}})Korotin, Li, Solomon, and
  Burnaev]{korotin2021continuous}
Alexander Korotin, Lingxiao Li, Justin Solomon, and Evgeny Burnaev.
\newblock Continuous wasserstein-2 barycenter estimation without minimax
  optimization.
\newblock In \emph{International Conference on Learning Representations},
  2021{\natexlab{c}}.
\newblock URL \url{https://openreview.net/forum?id=3tFAs5E-Pe}.

\bibitem[Korotin et~al.(2022{\natexlab{a}})Korotin, Egiazarian, Li, and
  Burnaev]{korotin2022wasserstein}
Alexander Korotin, Vage Egiazarian, Lingxiao Li, and Evgeny Burnaev.
\newblock Wasserstein iterative networks for barycenter estimation.
\newblock In Alice~H. Oh, Alekh Agarwal, Danielle Belgrave, and Kyunghyun Cho
  (eds.), \emph{Advances in Neural Information Processing Systems},
  2022{\natexlab{a}}.
\newblock URL \url{https://openreview.net/forum?id=GiEnzxTnaMN}.

\bibitem[Korotin et~al.(2022{\natexlab{b}})Korotin, Kolesov, and
  Burnaev]{korotin2022kantorovich}
Alexander Korotin, Alexander Kolesov, and Evgeny Burnaev.
\newblock Kantorovich strikes back! wasserstein {GAN}s are not optimal
  transport?
\newblock In \emph{Thirty-sixth Conference on Neural Information Processing
  Systems Datasets and Benchmarks Track}, 2022{\natexlab{b}}.
\newblock URL \url{https://openreview.net/forum?id=VtEEpi-dGlt}.

\bibitem[L{\'e}onard(2014)]{leonard2014survey}
Christian L{\'e}onard.
\newblock A survey of the schr{\"o}dinger problem and some of its connections
  with optimal transport.
\newblock \emph{Discrete \& Continuous Dynamical Systems}, 34\penalty0
  (4):\penalty0 1533, 2014.

\bibitem[Liu et~al.(2019)Liu, Gu, and Samaras]{liu2019wasserstein}
Huidong Liu, Xianfeng Gu, and Dimitris Samaras.
\newblock Wasserstein {GAN} with quadratic transport cost.
\newblock In \emph{Proceedings of the IEEE International Conference on Computer
  Vision}, pp.\  4832--4841, 2019.

\bibitem[Liu et~al.(2017)Liu, Breuel, and Kautz]{liu2017unsupervised}
Ming-Yu Liu, Thomas Breuel, and Jan Kautz.
\newblock Unsupervised image-to-image translation networks.
\newblock In \emph{Advances in neural information processing systems}, pp.\
  700--708, 2017.

\bibitem[Liu et~al.(2020)Liu, De~Nadai, Yao, Sebe, Lepri, and
  Alameda-Pineda]{liu2020gmm}
Yahui Liu, Marco De~Nadai, Jian Yao, Nicu Sebe, Bruno Lepri, and Xavier
  Alameda-Pineda.
\newblock Gmm-unit: Unsupervised multi-domain and multi-modal image-to-image
  translation via attribute gaussian mixture modeling.
\newblock \emph{arXiv preprint arXiv:2003.06788}, 2020.

\bibitem[Liu et~al.(2015)Liu, Luo, Wang, and Tang]{liu2015faceattributes}
Ziwei Liu, Ping Luo, Xiaogang Wang, and Xiaoou Tang.
\newblock Deep learning face attributes in the wild.
\newblock In \emph{Proceedings of International Conference on Computer Vision
  (ICCV)}, December 2015.

\bibitem[Lu et~al.(2019)Lu, Zhou, Song, Ren, and Yu]{lu2019guiding}
Guansong Lu, Zhiming Zhou, Yuxuan Song, Kan Ren, and Yong Yu.
\newblock Guiding the one-to-one mapping in cyclegan via optimal transport.
\newblock In \emph{Proceedings of the AAAI Conference on Artificial
  Intelligence}, volume~33, pp.\  4432--4439, 2019.

\bibitem[Lu et~al.(2020)Lu, Zhou, Shen, Chen, Zhang, and Yu]{lu2020large}
Guansong Lu, Zhiming Zhou, Jian Shen, Cheng Chen, Weinan Zhang, and Yong Yu.
\newblock Large-scale optimal transport via adversarial training with
  cycle-consistency.
\newblock \emph{arXiv preprint arXiv:2003.06635}, 2020.

\bibitem[Makkuva et~al.(2020)Makkuva, Taghvaei, Oh, and
  Lee]{makkuva2019optimal}
Ashok Makkuva, Amirhossein Taghvaei, Sewoong Oh, and Jason Lee.
\newblock Optimal transport mapping via input convex neural networks.
\newblock In \emph{International Conference on Machine Learning}, pp.\
  6672--6681. PMLR, 2020.

\bibitem[Nowozin et~al.(2016)Nowozin, Cseke, and Tomioka]{nowozin2016f}
Sebastian Nowozin, Botond Cseke, and Ryota Tomioka.
\newblock f-{GAN}: Training generative neural samplers using variational
  divergence minimization.
\newblock In \emph{Advances in neural information processing systems}, pp.\
  271--279, 2016.

\bibitem[Petzka et~al.(2017)Petzka, Fischer, and
  Lukovnicov]{petzka2017regularization}
Henning Petzka, Asja Fischer, and Denis Lukovnicov.
\newblock On the regularization of wasserstein gans.
\newblock \emph{arXiv preprint arXiv:1709.08894}, 2017.

\bibitem[Peyr{\'e} et~al.(2019)Peyr{\'e}, Cuturi,
  et~al.]{peyre2019computational}
Gabriel Peyr{\'e}, Marco Cuturi, et~al.
\newblock Computational optimal transport.
\newblock \emph{Foundations and Trends{\textregistered} in Machine Learning},
  11\penalty0 (5-6):\penalty0 355--607, 2019.

\bibitem[Rockafellar(1976)]{rockafellar1976integral}
R~Tyrrell Rockafellar.
\newblock Integral functionals, normal integrands and measurable selections.
\newblock In \emph{Nonlinear operators and the calculus of variations}, pp.\
  157--207. Springer, 1976.

\bibitem[Rockafellar(1966)]{rockafellar1966characterization}
Ralph Rockafellar.
\newblock Characterization of the subdifferentials of convex functions.
\newblock \emph{Pacific Journal of Mathematics}, 17\penalty0 (3):\penalty0
  497--510, 1966.

\bibitem[Ronneberger et~al.(2015)Ronneberger, Fischer, and
  Brox]{ronneberger2015u}
Olaf Ronneberger, Philipp Fischer, and Thomas Brox.
\newblock U-net: Convolutional networks for biomedical image segmentation.
\newblock In \emph{International Conference on Medical image computing and
  computer-assisted intervention}, pp.\  234--241. Springer, 2015.

\bibitem[Rout et~al.(2022)Rout, Korotin, and Burnaev]{rout2022generative}
Litu Rout, Alexander Korotin, and Evgeny Burnaev.
\newblock Generative modeling with optimal transport maps.
\newblock In \emph{International Conference on Learning Representations}, 2022.
\newblock URL \url{https://openreview.net/forum?id=5JdLZg346Lw}.

\bibitem[Saito et~al.(2020)Saito, Saenko, and Liu]{saito2020coco}
Kuniaki Saito, Kate Saenko, and Ming-Yu Liu.
\newblock Coco-funit: Few-shot unsupervised image translation with a content
  conditioned style encoder.
\newblock In \emph{European Conference on Computer Vision}, pp.\  382--398.
  Springer, 2020.

\bibitem[Sanjabi et~al.(2018)Sanjabi, Ba, Razaviyayn, and
  Lee]{sanjabi2018convergence}
Maziar Sanjabi, Jimmy Ba, Meisam Razaviyayn, and Jason~D Lee.
\newblock On the convergence and robustness of training gans with regularized
  optimal transport.
\newblock \emph{Advances in Neural Information Processing Systems}, 31, 2018.

\bibitem[Santambrogio(2015)]{santambrogio2015optimal}
Filippo Santambrogio.
\newblock Optimal transport for applied mathematicians.
\newblock \emph{Birk{\"a}user, NY}, 55\penalty0 (58-63):\penalty0 94, 2015.

\bibitem[Seguy et~al.(2017)Seguy, Damodaran, Flamary, Courty, Rolet, and
  Blondel]{seguy2017large}
Vivien Seguy, Bharath~Bhushan Damodaran, Remi Flamary, Nicolas Courty, Antoine
  Rolet, and Mathieu Blondel.
\newblock Large scale optimal transport and mapping estimation.
\newblock In \emph{International Conference on Learning Representations}, 2017.

\bibitem[Su et~al.(2023)Su, Song, Meng, and Ermon]{su2023dual}
Xuan Su, Jiaming Song, Chenlin Meng, and Stefano Ermon.
\newblock Dual diffusion implicit bridges for image-to-image translation.
\newblock In \emph{International Conference on Learning Representations}, 2023.
\newblock URL \url{https://openreview.net/forum?id=5HLoTvVGDe}.

\bibitem[Taghvaei \& Jalali(2019)Taghvaei and Jalali]{taghvaei20192}
Amirhossein Taghvaei and Amin Jalali.
\newblock 2-{W}asserstein approximation via restricted convex potentials with
  application to improved training for {GAN}s.
\newblock \emph{arXiv preprint arXiv:1902.07197}, 2019.

\bibitem[Villani(2008)]{villani2008optimal}
C{\'e}dric Villani.
\newblock \emph{Optimal transport: old and new}, volume 338.
\newblock Springer Science \& Business Media, 2008.

\bibitem[Xie et~al.(2019)Xie, Chen, Jiang, Zhao, and Zha]{xie2019scalable}
Yujia Xie, Minshuo Chen, Haoming Jiang, Tuo Zhao, and Hongyuan Zha.
\newblock On scalable and efficient computation of large scale optimal
  transport.
\newblock volume~97 of \emph{Proceedings of Machine Learning Research}, pp.\
  6882--6892, Long Beach, California, USA, 09--15 Jun 2019. PMLR.
\newblock URL \url{http://proceedings.mlr.press/v97/xie19a.html}.

\bibitem[Yang \& Uhler(2019)Yang and Uhler]{yang2018scalable}
Karren~D. Yang and Caroline Uhler.
\newblock Scalable unbalanced optimal transport using generative adversarial
  networks.
\newblock In \emph{International Conference on Learning Representations}, 2019.
\newblock URL \url{https://openreview.net/forum?id=HyexAiA5Fm}.

\bibitem[Yu \& Grauman(2014)Yu and Grauman]{yu2014fine}
Aron Yu and Kristen Grauman.
\newblock Fine-grained visual comparisons with local learning.
\newblock In \emph{Proceedings of the IEEE Conference on Computer Vision and
  Pattern Recognition}, pp.\  192--199, 2014.

\bibitem[Yu et~al.(2015)Yu, Seff, Zhang, Song, Funkhouser, and
  Xiao]{yu2015lsun}
Fisher Yu, Ari Seff, Yinda Zhang, Shuran Song, Thomas Funkhouser, and Jianxiong
  Xiao.
\newblock Lsun: Construction of a large-scale image dataset using deep learning
  with humans in the loop.
\newblock \emph{arXiv preprint arXiv:1506.03365}, 2015.

\bibitem[Zhang(2018)]{zhang2018xogan}
Yongqi Zhang.
\newblock Xogan: One-to-many unsupervised image-to-image translation.
\newblock \emph{arXiv preprint arXiv:1805.07277}, 2018.

\bibitem[Zhou et~al.(2014)Zhou, Lapedriza, Xiao, Torralba, and
  Oliva]{zhou2014learning}
Bolei Zhou, Agata Lapedriza, Jianxiong Xiao, Antonio Torralba, and Aude Oliva.
\newblock Learning deep features for scene recognition using places database.
\newblock 2014.

\bibitem[Zhu et~al.(2017{\natexlab{a}})Zhu, Park, Isola, and
  Efros]{zhu2017unpaired}
Jun-Yan Zhu, Taesung Park, Phillip Isola, and Alexei~A Efros.
\newblock Unpaired image-to-image translation using cycle-consistent
  adversarial networks.
\newblock In \emph{Proceedings of the IEEE international conference on computer
  vision}, pp.\  2223--2232, 2017{\natexlab{a}}.

\bibitem[Zhu et~al.(2017{\natexlab{b}})Zhu, Zhang, Pathak, Darrell, Efros,
  Wang, and Shechtman]{zhu2017toward}
Jun-Yan Zhu, Richard Zhang, Deepak Pathak, Trevor Darrell, Alexei~A Efros,
  Oliver Wang, and Eli Shechtman.
\newblock Toward multimodal image-to-image translation.
\newblock \emph{Advances in neural information processing systems}, 30,
  2017{\natexlab{b}}.

\end{thebibliography}
\bibliographystyle{iclr2023_conference}

\appendix

\newpage
\appendix

\section{Variance-Similarity Trade-off}
\label{sec-diversity-similarity}

In this section, we study the effect of the parameter $\gamma$ on the structure of the learned stochastic map for the $\gamma$-weak quadratic cost. We consider \textit{handbags} $\rightarrow$ \textit{shoes} translation ($64\times 64$) and test $\gamma\in\{0,\frac{1}{3},\frac{2}{3},1\}$. The results are shown in Figure \ref{fig:variance-similarity}.

\begin{figure*}[!h]
\begin{subfigure}[b]{0.263\linewidth}
\centering
\includegraphics[width=0.97\linewidth]{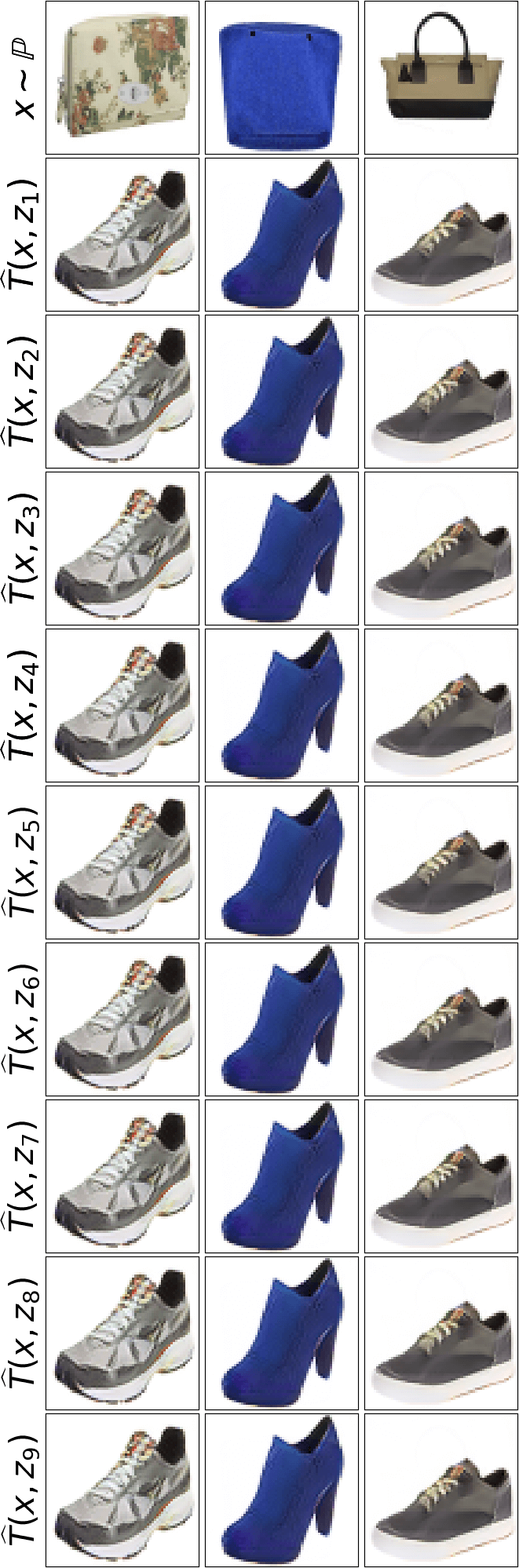}
\caption{$\gamma=0$}
\label{fig:diversity-gamma-0}
\end{subfigure}
\begin{subfigure}[b]{0.24\linewidth}
\centering
\includegraphics[width=0.97\linewidth]{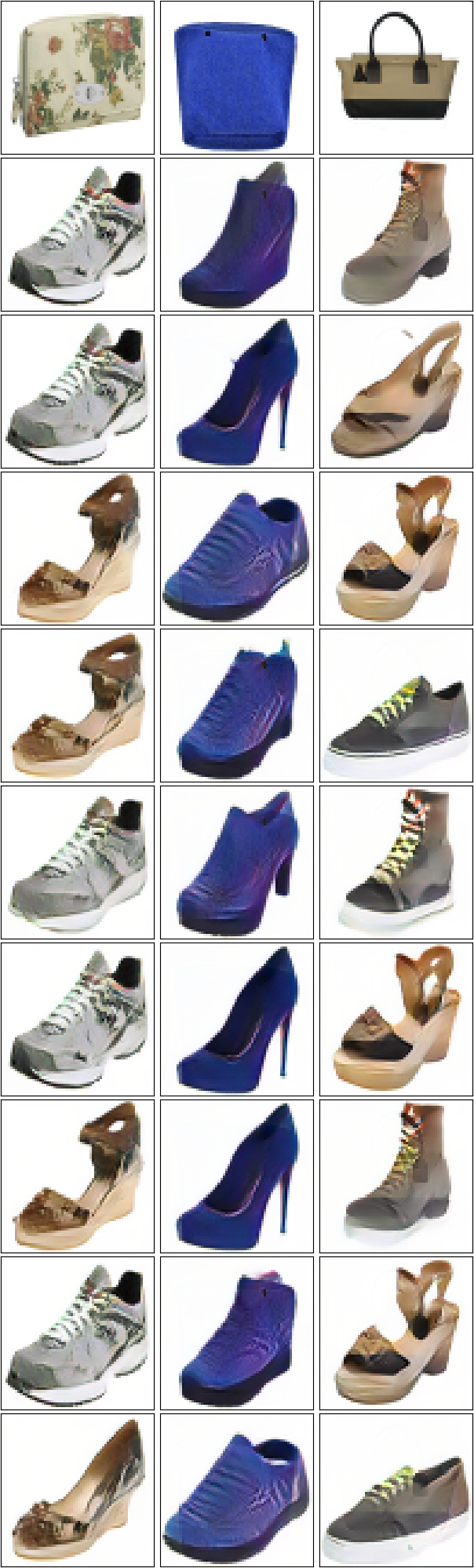}
\caption{$\gamma=\frac{1}{3}$}
\label{fig:diversity-gamma-13}
\end{subfigure}
\begin{subfigure}[b]{0.24\linewidth}
\centering
\includegraphics[width=0.97\linewidth]{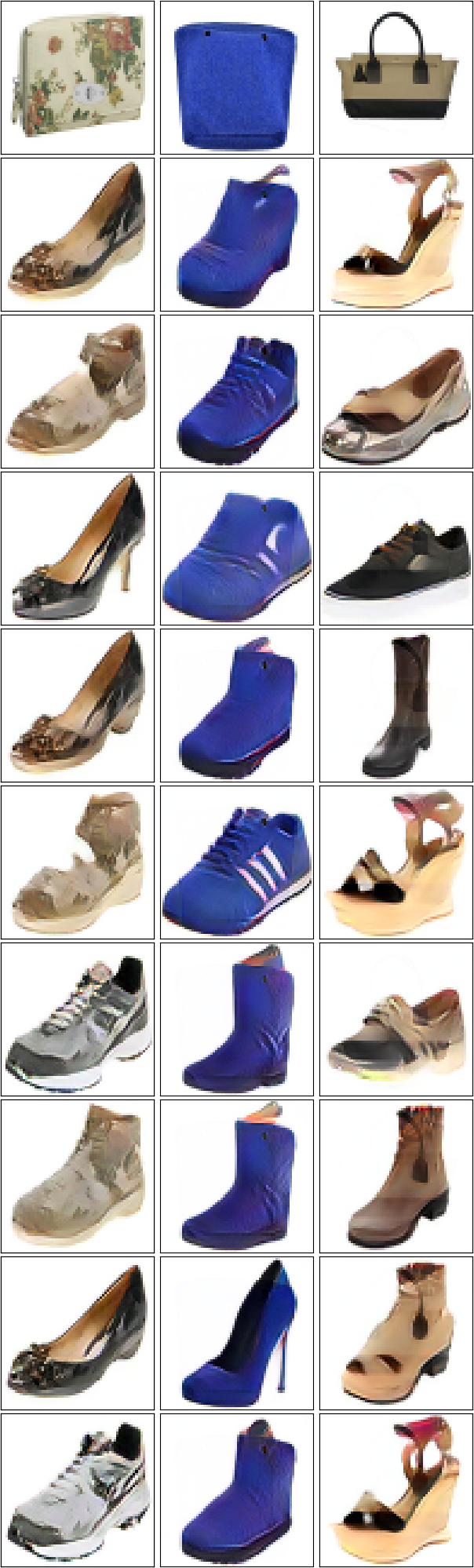}
\caption{$\gamma=\frac{2}{3}$}
\label{fig:diversity-gamma-23}
\end{subfigure}
\begin{subfigure}[b]{0.24\linewidth}
\centering
\includegraphics[width=0.97\linewidth]{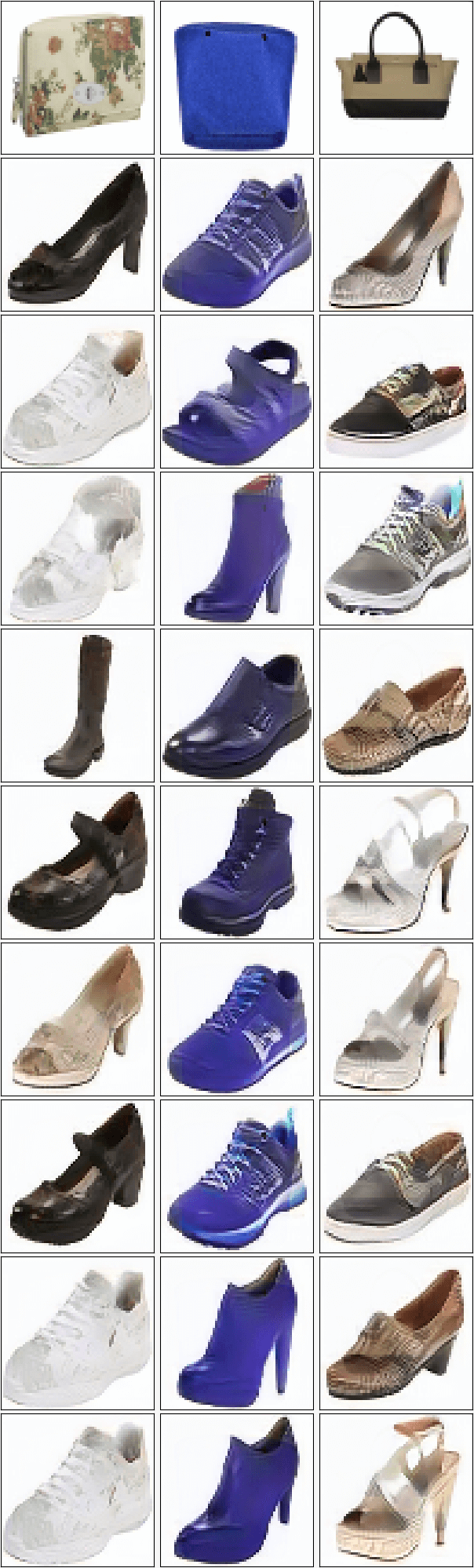}
\caption{$\gamma=1$}
\label{fig:diversity-gamma-1}
\end{subfigure}
\caption{Stochastic \textit{Handbags} $\rightarrow$ \textit{shoes} translation with the $\gamma$-weak quadratic cost for various $\gamma$.}
\label{fig:variance-similarity}
\end{figure*}

\textbf{Discussion.} For $\gamma=0$ there is no variety in produced samples (Figure \ref{fig:diversity-gamma-0}), i.e., the conditional collapse happens. With the increase of $\gamma$ (Figures \ref{fig:diversity-gamma-13}, \ref{fig:diversity-gamma-23}), the variety of samples increases and the style of the input images is mostly preserved. For $\gamma=1$ (Figure \ref{fig:diversity-gamma-1}), the variety of samples is very high but many of them do not preserve the style of the input image. The parameter $\gamma$ can be viewed as the \textbf{trade-off} parameter \textit{balancing} the \textit{variance} of samples and their \textbf{similarity} to the input.

\section{Toy 2D experiments}
\label{sec-toy-experiments}

In this section, we test our Algorithm \ref{algorithm-not} on toy 2D distributions $\mathbb{P},\mathbb{Q}$, i.e., $P=Q=2$. 

\textbf{Strong quadratic cost} ($\gamma=0$). As we noted in \wasyparagraph\ref{sec-preliminary-experiments} and Appendix \ref{sec-diversity-similarity}, for the strong quadratic cost, our method tends to learn deterministic maps $T(x,z)=T(x)$ which are independent of the noise input $z$. For deterministic maps $T(x)$, our method yields  $\lfloor \text{MM:R}\rceil$ method which has been evaluated in the recent Wasserstein-2 benchmark by \citep{korotin2021neural}. The authors show that the method recovers OT maps well on synthetic high-dimensional pairs $\mathbb{P},\mathbb{Q}$ with known ground truth OT maps. Thus, for brevity, we do not include toy experiments with our method for the strong quadratic cost.

\begin{figure*}[!t]
\centering
\begin{subfigure}[b]{0.3353\linewidth}
\centering
\includegraphics[width=0.97\linewidth]{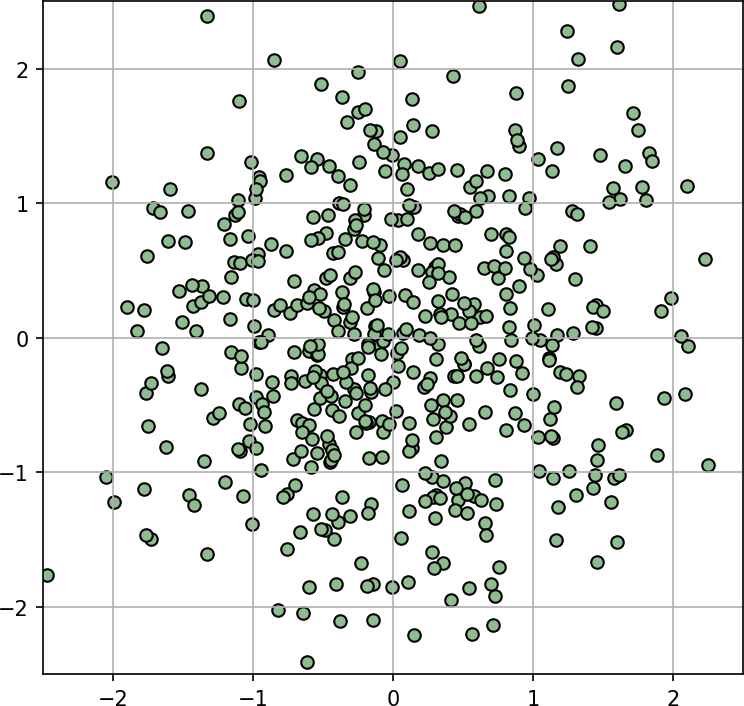}
\caption{
Input distribution $\mathbb{P}$.
}
\label{fig:gaussian-mixture-in}
\end{subfigure}
\begin{subfigure}[b]{0.32\linewidth}
\centering
\includegraphics[width=0.97\linewidth]{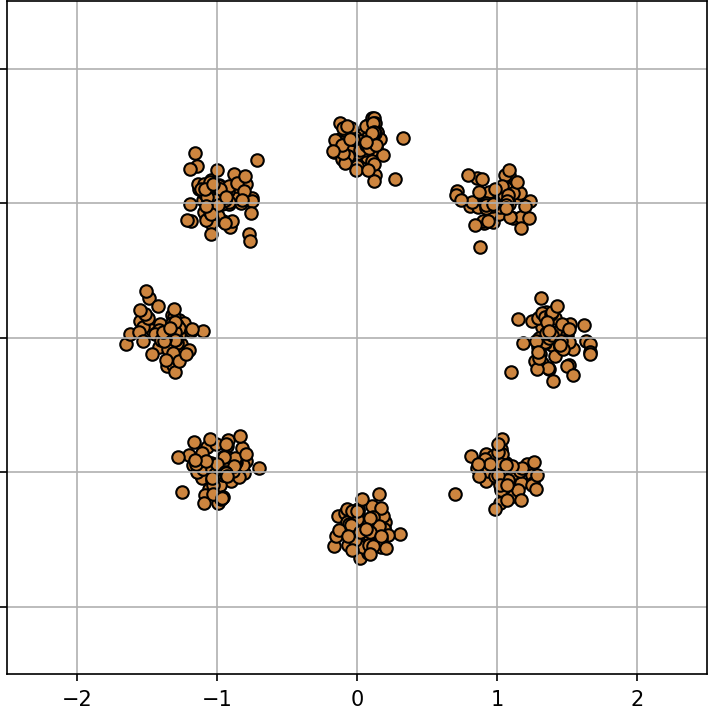}
\caption{
Target distribution $\mathbb{Q}$.
}
\label{fig:gaussian-mixture-out}
\end{subfigure}
\begin{subfigure}[b]{0.32\linewidth}
\centering
\includegraphics[width=0.97\linewidth]{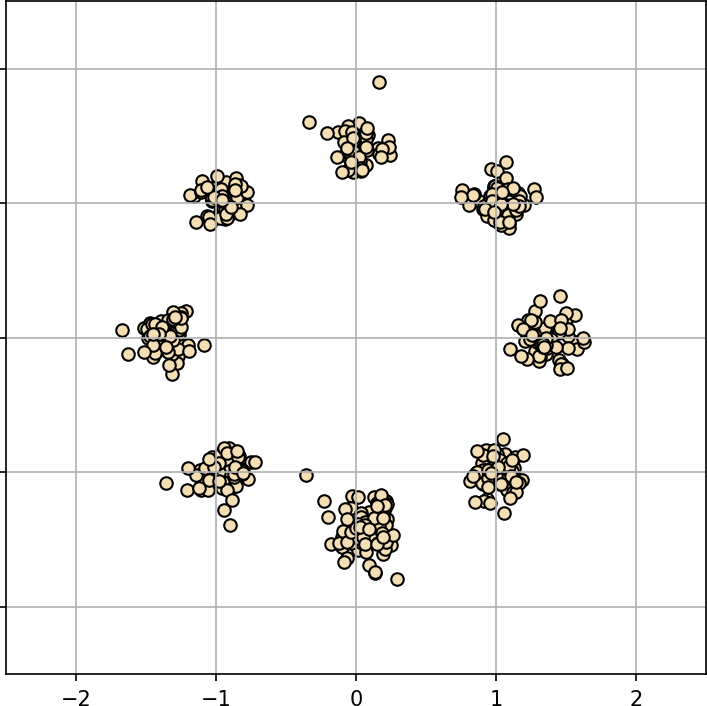}\vspace{-1mm}
\caption{
Fitted $\widehat{T}_{\#}(\mathbb{P}\times\mathbb{S})\approx \mathbb{Q}$.
}
\label{fig:gaussian-mixture-learned}
\end{subfigure}

\vspace{1mm}

\begin{subfigure}[b]{0.3353\linewidth}
\centering
\includegraphics[width=0.97\linewidth]{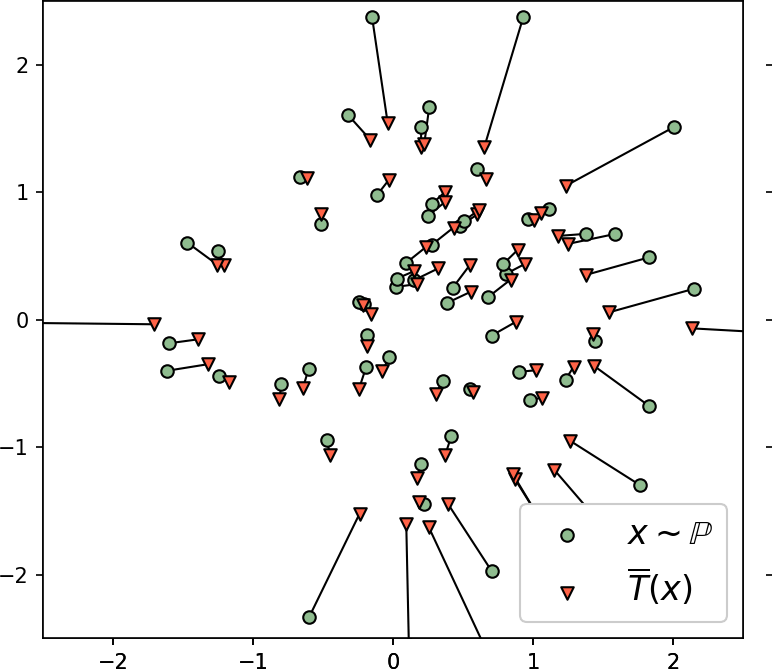}
\caption{
Map $\overline{T}(x)=\int_{\mathcal{Z}}\widehat{T}(x,z)d\mathbb{S}(z)$.
}
\label{fig:gaussian-mixture-bar}
\end{subfigure}
\begin{subfigure}[b]{0.32\linewidth}
\centering
\includegraphics[width=0.97\linewidth]{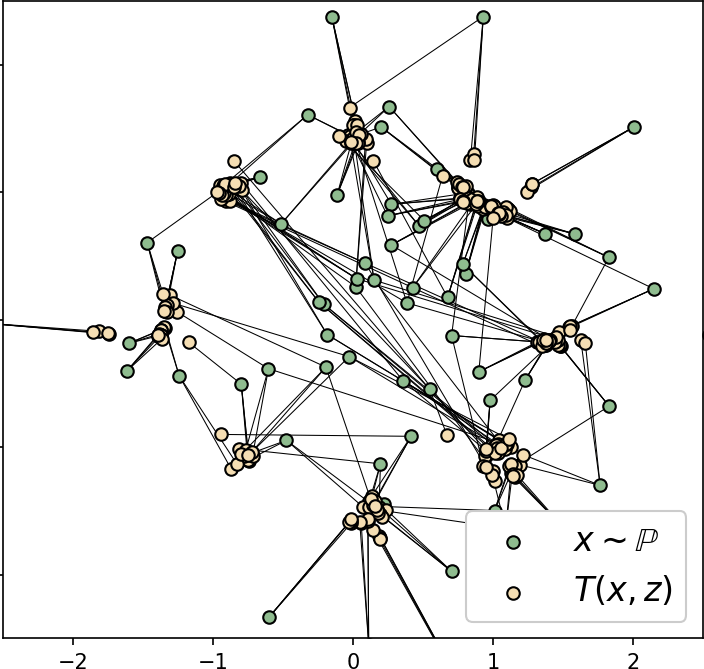}
\caption{
Learned stochastic map $\widehat{T}(x,z)$.
}
\label{fig:gaussian-mixture-smap}
\end{subfigure}
\begin{subfigure}[b]{0.32\linewidth}
\centering
\includegraphics[width=0.99\linewidth]{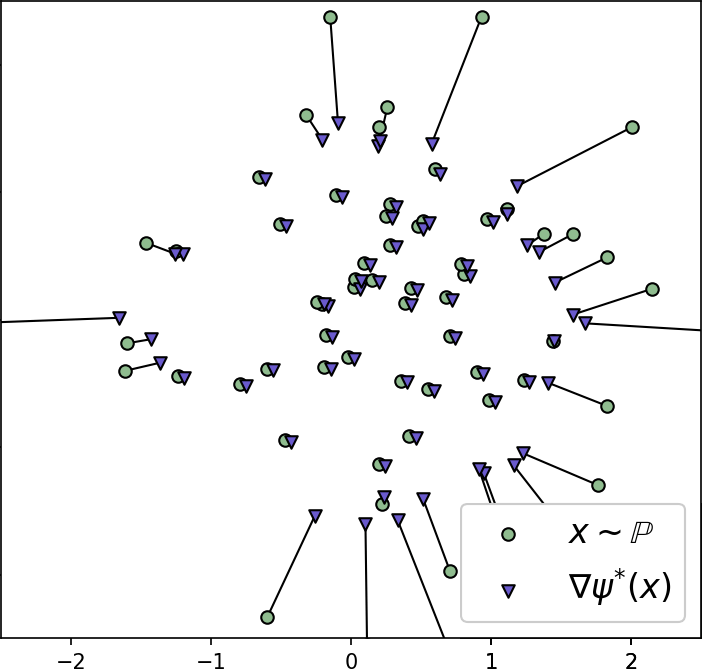}
\caption{\hspace{-0mm}Map $x\!\mapsto\!\int_{\mathcal{Y}}yd\pi^{*}(y|x)$.\hspace{-3mm}
}
\label{fig:gaussian-mixture-dot}
\end{subfigure}
\caption{\textit{Gaussian} $\rightarrow$ \textit{Mixture of 8 Gaussians}, learned stochastic map for the $1$-weak quadratic cost.}
\label{fig:gaussian-mixture}
\end{figure*}

\textbf{Weak quadratic cost ($\gamma>0$).} To our knowledge, \underline{our method is the first to solve weak OT}, i.e., \underline{there are no approaches to compare with}. The analysis of computed transport plans for weak costs is challenging due to the lack of nontrivial pairs $\mathbb{P},\mathbb{Q}$ with known ground truth OT plan $\pi^{*}$. The situation is even worsened by the \textbf{nonuniqueness} of $\pi^{*}$. To cope with this issue, we consider the weak quadratic cost with $\gamma=1$. For this cost, one may derive
\begin{equation}
    C\big(x,\mu\big)=
    \int_{\mathcal{Y}}\frac{1}{2}\|x-y\|^{2}d\mu(y)-\frac{1}{2}\text{Var}(\mu)=\frac{1}{2}\|x-\int_{\mathcal{Y}}y\hspace{0.5mm}d\mu(y)\|^{2}.
    \label{weak-w2-cost-to-mean}
\end{equation}
For cost \eqref{weak-w2-cost-to-mean} and a pair $\mathbb{P},\mathbb{Q}$, \citep[Theorem 1.2]{gozlan2020mixture} states that there exists a $\mathbb{P}$-unique (up to a constant) convex $\psi:\mathbb{R}^{P}\rightarrow\mathbb{R}$ such that every OT plan $\pi^{*}$ satisfies $\nabla\psi(x)=\int_{\mathcal{Y}}y \hspace{1mm} d\pi^{*}(y|x)$. Besides, $\nabla\psi:\mathbb{R}^{P}\rightarrow\mathbb{R}^{P}$ is $1$-Lipschitz. Let $\widehat{T}(x,z)$ be the stochastic map recovered by our Algorithm \ref{algorithm-not}, and let $\widehat{\pi}$ be the corresponding plan. Let \begin{equation}\overline{T}(x)\stackrel{def}{=}\int_{\mathcal{Y}}y\hspace{0.5mm}d\widehat{\pi}(y|x)=\int_{\mathcal{Z}}\widehat{T}(x,z)d\mathbb{S}(z).
\label{barycenteric-projection}
\end{equation}
Due to the above mentioned characterization of OT plans, $\overline{T}(x)$ should look like a gradient $\nabla\psi(x)$ of some convex function $\psi(x)$ and should nearly be a contraction. Since here we work in the 2D space, we are able to get sufficiently many samples from $\mathbb{P}$ and $\mathbb{Q}$ and obtain a fine approximation of an OT plan $\pi^{*}$ and $\nabla\psi$ by a discrete weak OT solver. We may sample random batches from $X\sim\mathbb{P}$ and $Y\sim\mathbb{Q}$ of size $2^{10}$ and use \texttt{ot.weak} from POT library\footnote{https://pythonot.github.io/} to get some optimal $\pi^{*}$ and $\nabla\psi=\int_{\mathcal{Y}}y \hspace{1mm} d\pi^{*}(y|x)$. We are going to compare our recovered average map $\overline{T}$ with $\nabla\psi$.

\textbf{Datasets.} We test 2 pairs $\mathbb{P},\mathbb{Q}$: \textit{Gaussian} $\rightarrow$ \textit{Mixture of 8 Gaussians}; \textit{Gaussian} $\rightarrow$ \textit{Swiss roll}.

\textbf{Neural Networks.} We use multi-layer perceptrons as $f_{\omega}, T_{\theta}$ with 3 hidden layers of 100 neurons and ReLU nonlinearity. The input of the stochastic map $T_{\theta}(x,z)$ is $2+2=4$ dimensional. The two first dimensions represent the input $x\in\mathbb{R}^{2}$ while the other dimensions represent the noise $z\sim\mathbb{S}$. We employ a~Gaussian noise with $\sigma=0.1$

\textbf{Discussion.} We provide qualitative results in Figures \ref{fig:gaussian-mixture} and \ref{fig:gaussian-swiss}. In both cases, the pushforward distribution $\widehat{T}_{\#}(\mathbb{P}\times\mathbb{S})$ matches the desired target distribution $\mathbb{Q}$ (Figures \ref{fig:gaussian-mixture-learned} and \ref{fig:gaussian-swiss-learned}). Figures \ref{fig:gaussian-mixture-smap} and \ref{fig:gaussian-swiss-smap} show how the mass of points $x\sim\mathbb{P}$ is split by the stochastic map.
The average maps $\overline{T}(x)$  (Figures \ref{fig:gaussian-mixture-bar}, \ref{fig:gaussian-swiss-bar}) indeed nearly match the ground truth $\nabla\psi$ (Figures \ref{fig:gaussian-mixture-dot}, \ref{fig:gaussian-swiss-dot}) obtained by POT. To quantify them, we compute $\mathcal{L}^{2}$-UVP $(\overline{T})=100\%\cdot\|\overline{T}-\nabla\psi\|^{2}_{\mathbb{P}}/\mbox{Var}(\nabla\psi_{\#}\mathbb{P})$ metric \cite[\wasyparagraph 5.1]{korotin2019wasserstein}. Here we obtain small values $<1\%$ and $\approx 3\%$ for the Swiss Roll and 8 Gaussians examples which further indicates the similarity of the learned $\overline{T}$ and the ground truth $\nabla\psi(x)$.

Note that $\overline{T}$ indeed roughly equals a gradient of a convex function. The gradients of convex functions are cycle monotone \citep{rockafellar1966characterization}. Cycle monotonicity yields that for $x_{1}\neq x_{2}$ the segments $[x_{1},\nabla\psi (x_{1})]$ and $[x_{2},\nabla\psi (x_{2})]$ do not intersect in the inner points \citep[\wasyparagraph 8]{villani2008optimal}.\footnote{For the sake of clarity, we slightly reformulated the property of the cycle monotone maps \citep{villani2008optimal}.} Visually, we see that in Figures \ref{fig:gaussian-mixture-bar} and \ref{fig:gaussian-swiss-bar} the segments $[x,\overline{T}(x)]$ do not intersect for different $x$, which is good. 

\begin{figure*}[!t]
\centering
\begin{subfigure}[b]{0.3353\linewidth}
\centering
\includegraphics[width=0.97\linewidth]{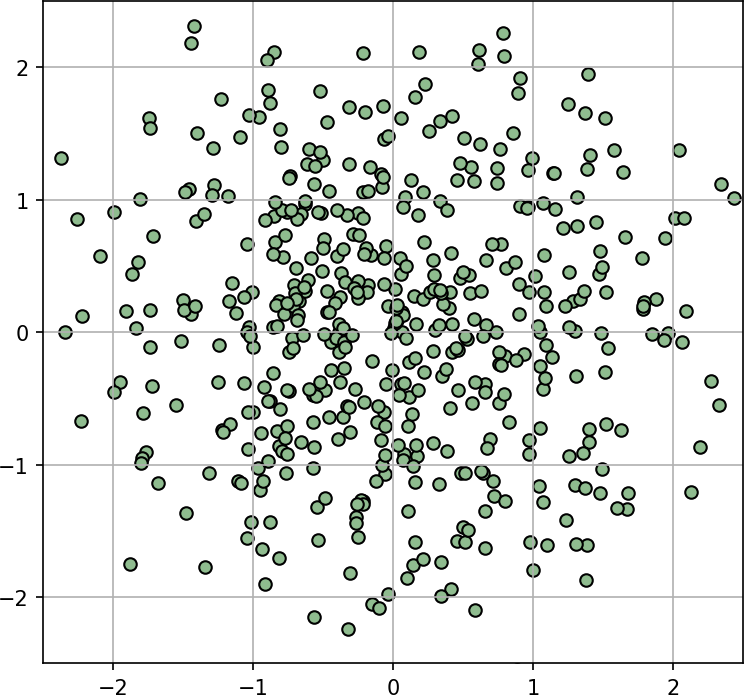}
\caption{
Input distribution $\mathbb{P}$.
}
\label{fig:gaussian-swiss-in}
\end{subfigure}
\begin{subfigure}[b]{0.32\linewidth}
\centering
\includegraphics[width=0.97\linewidth]{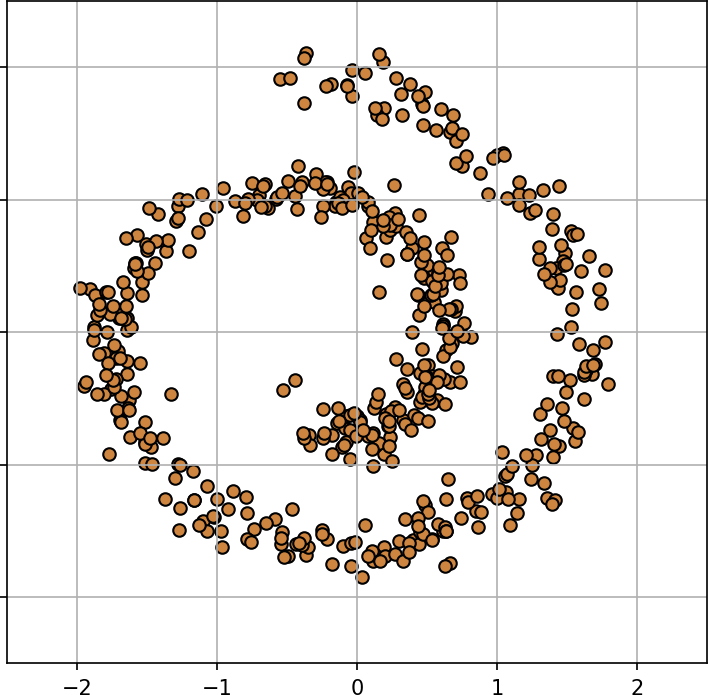}
\caption{
Target distribution $\mathbb{Q}$.
}
\label{fig:gaussian-swiss-out}
\end{subfigure}
\begin{subfigure}[b]{0.32\linewidth}
\centering
\includegraphics[width=0.97\linewidth]{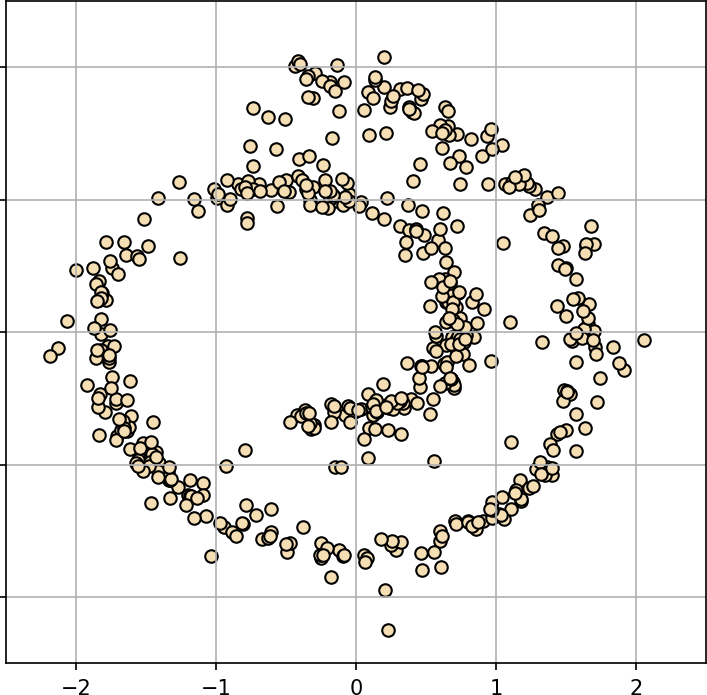}\vspace{-1mm}
\caption{
Fitted $\widehat{T}_{\#}(\mathbb{P}\times\mathbb{S})\approx \mathbb{Q}$.
}
\label{fig:gaussian-swiss-learned}
\end{subfigure}

\vspace{1mm}

\begin{subfigure}[b]{0.3353\linewidth}
\centering
\includegraphics[width=0.97\linewidth]{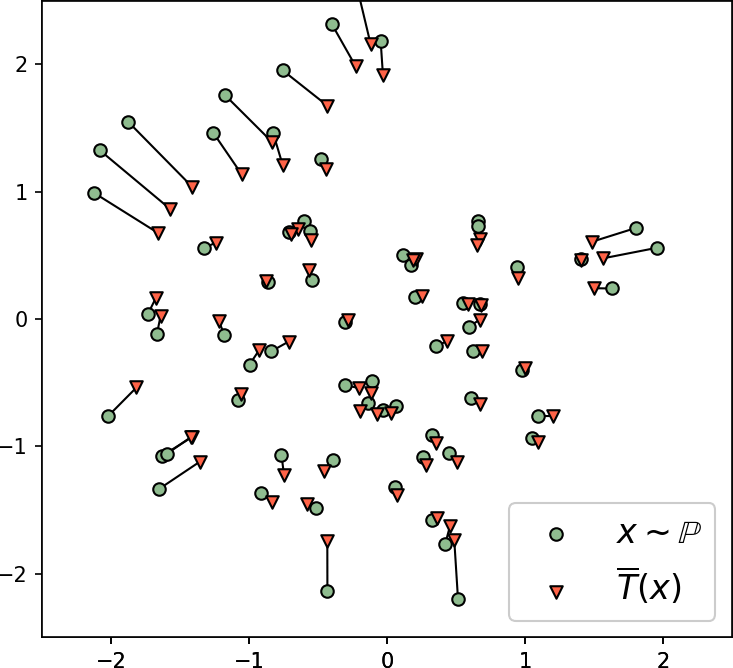}
\caption{
Map $\overline{T}(x)=\int_{\mathcal{Z}}\widehat{T}(x,z)d\mathbb{S}(z)$.
}
\label{fig:gaussian-swiss-bar}
\end{subfigure}
\begin{subfigure}[b]{0.32\linewidth}
\centering
\includegraphics[width=0.97\linewidth]{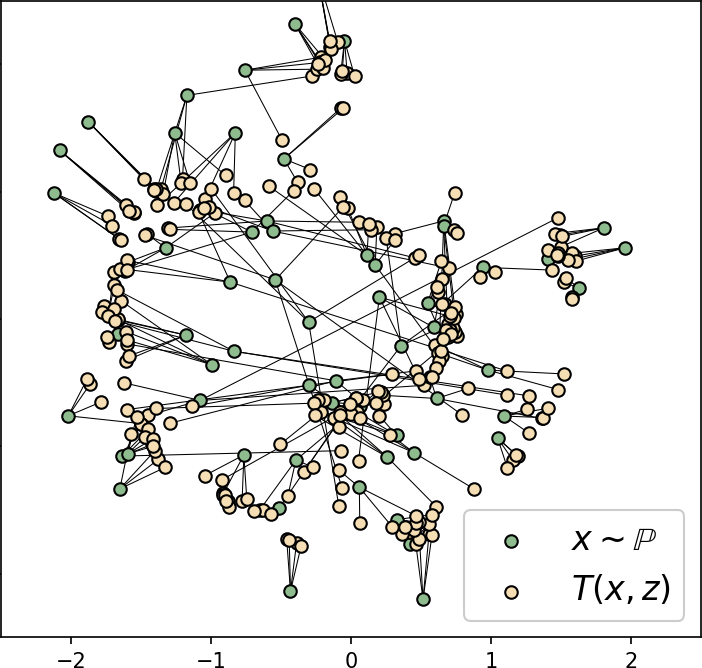}
\caption{
Learned stochastic map $\widehat{T}(x,z)$.
}
\label{fig:gaussian-swiss-smap}
\end{subfigure}
\begin{subfigure}[b]{0.32\linewidth}
\centering
\includegraphics[width=0.99\linewidth]{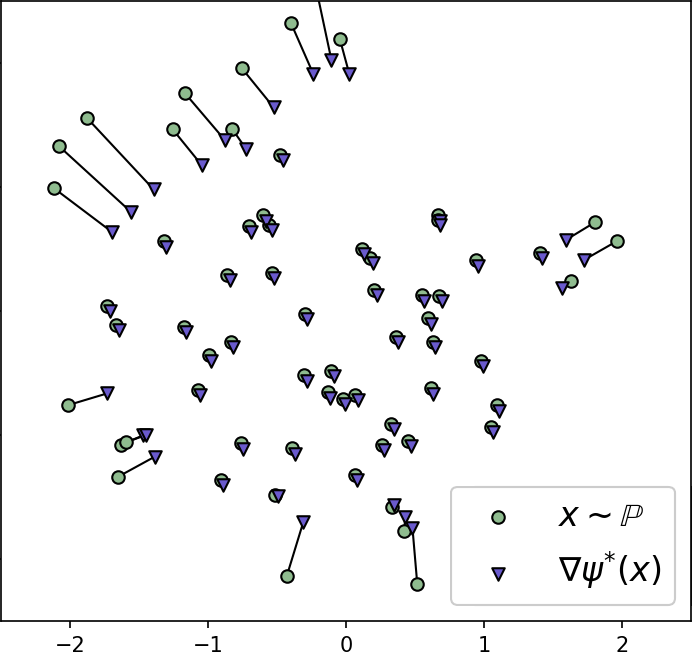}
\caption{\hspace{-0mm}Map $x\!\mapsto\!\int_{\mathcal{Y}}yd\pi^{*}(y|x)$.\hspace{-3mm}
}
\label{fig:gaussian-swiss-dot}
\end{subfigure}
\caption{\textit{Gaussian} $\rightarrow$ \textit{Swiss Roll}, learned stochastic OT map for the $1$-weak quadratic cost.}
\label{fig:gaussian-swiss}
\end{figure*}

\section{Toy 1D Experiments}
\label{sec-exp-toy-1d}

In this section, we additionally test our Algorithm \ref{algorithm-not} on toy 1D distributions $\mathbb{P},\mathbb{Q}$, i.e., $P=Q=1$. In this case, transport plans are 2D distributions and can be conveniently visualized.

We experiment with the $1$-weak quadratic cost \eqref{weak-w2-cost-to-mean}. Following the discussion in the previous section, we recall that an OT plan $\pi^{*}$ may be not unique. However, all OT plans satisfy ${\nabla\psi(x)=\int_{\mathcal{Y}}y \hspace{1mm} d\pi^{*}(y|x)}$ for some 1-smooth convex function $\psi:\mathbb{R}\rightarrow\mathbb{R}$. This simply means that $x\mapsto \nabla\psi(x)=\int_{\mathcal{Y}}y \hspace{1mm} d\pi^{*}(y|x)$ is a monotone increasing $1$-Lipschitz function $\nabla\psi:\mathbb{R}\rightarrow\mathbb{R}$. Below we check that this necessary condition holds for $\overline{T}$ \eqref{barycenteric-projection}, where $\hat{T}$ is our learned stochastic map.

\textbf{Datasets.} We test 2 pairs $\mathbb{P},\mathbb{Q}$: \textit{Gaussian} $\rightarrow$ \textit{Mix of 2 Gaussians}; \textit{Gaussian} $\rightarrow$ \textit{Mix of 3 Gaussians}.

\textbf{Neural Networks.} We use the same networks as in Appendix \ref{sec-toy-experiments}. This time, the input of the stochastic map $T_{\theta}(x,z)$ is $1+1=2$ dimensional, the input to $f_{\omega}$ -- 1-dimensional.

\textbf{Discussion.} We provide qualitative results in Figures \ref{fig:toy-1d-1to2} and \ref{fig:toy-1d-1to3}. For each case, we plot the results of 3 random restarts of our method ($\hat{\pi}$ denotes our learned OT plan). Similarly to Appendix \ref{sec-toy-experiments}, we plot the results obtained by a discrete weak OT solver (\texttt{ot.weak} from POT library). Namely, in Figures \ref{fig:toy-1d-1to2-dot}, \ref{fig:toy-1d-1to3-dot} we show its results obtained for 4 restarts with differing seeds. Note that the average maps $\overline{T}$ computed by our algorithm in both cases nearly match those computed by the discrete weak OT. This indicates that the transport cost of our computed plan $\hat{\pi}$ is since $$[\text{Cost of } \hat{\pi}]=\int_{\mathcal{X}}\frac{1}{2}\|x-\underbrace{\overline{T}(x)}_{\approx \psi(x)}\|^{2}d\mathbb{P}(x)\approx \int_{\mathcal{X}}\frac{1}{2}\|x-\nabla\psi(x)\|^{2}d\mathbb{P}(x)=\text{Cost}(\mathbb{P},\mathbb{Q}),$$
i.e., it nearly equals the optimal cost. Here we use $\overline{T}(x)\approx \nabla\psi(x)$ observed from the experiments. To conclude, wee see that the recovered plans are close to the DOT considered as the ground truth.

\begin{figure}[!t]
\begin{subfigure}[b]{0.51\linewidth}
\centering
\hspace{-7mm}\includegraphics[width=0.99\linewidth]{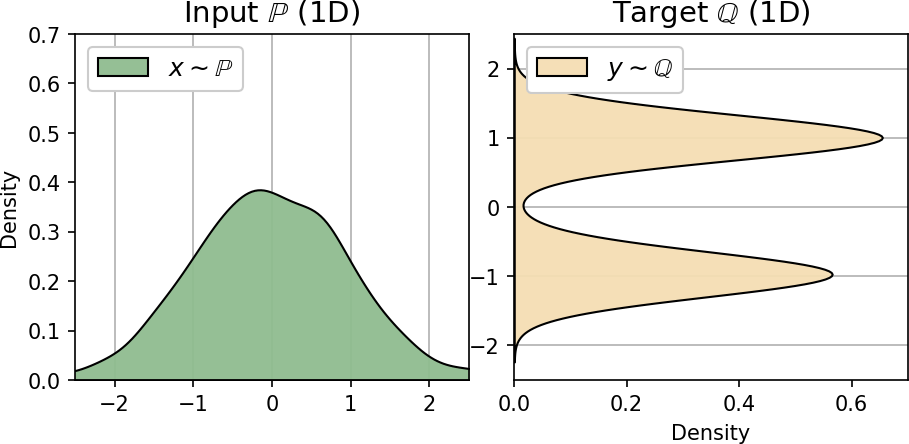}
\caption{\vspace{-1mm}\centering  Input and output distributions.}
\label{fig:toy-1d-1to2-inout}
\end{subfigure}
\begin{subfigure}[b]{0.50\linewidth}
\vspace{1mm}
\centering
\hspace{-7mm}\includegraphics[width=0.99\linewidth]{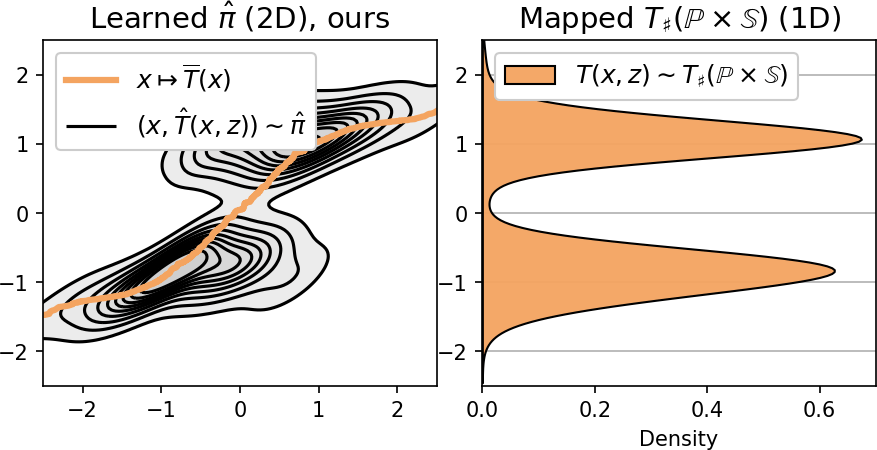}
\caption{\vspace{-1mm}\centering Learned plan $\hat{\pi}$ and marginal $\hat{T}_{\#}(\mathbb{P}\times\mathbb{S})$, test 1.}
\label{fig:toy-1d-1to2-test1}
\end{subfigure}

\vspace{3mm}
\begin{subfigure}[b]{0.50\linewidth}
\centering
\hspace{-7mm}\includegraphics[width=0.99\linewidth]{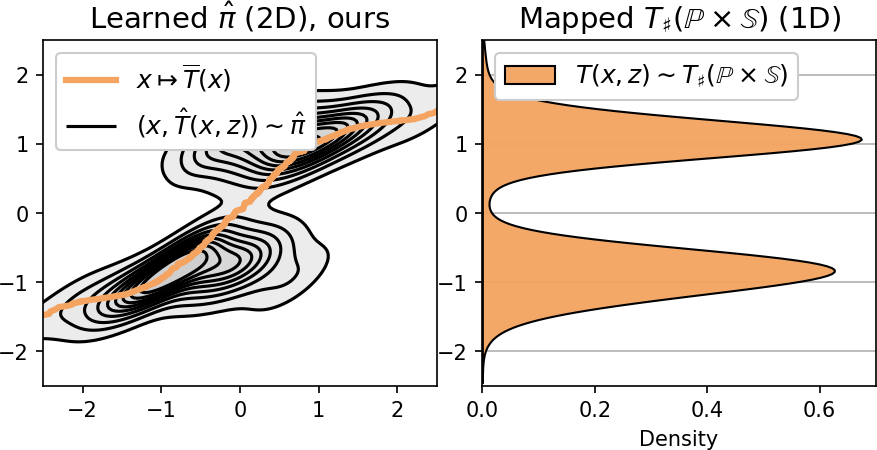}
\caption{\vspace{-1mm}\centering Learned plan $\hat{\pi}$ and marginal $\hat{T}_{\#}(\mathbb{P}\times\mathbb{S})$, test 2.}
\label{fig:toy-1d-1to2-test2}
\end{subfigure}
\begin{subfigure}[b]{0.50\linewidth}
\centering
\hspace{-7mm}\includegraphics[width=0.99\linewidth]{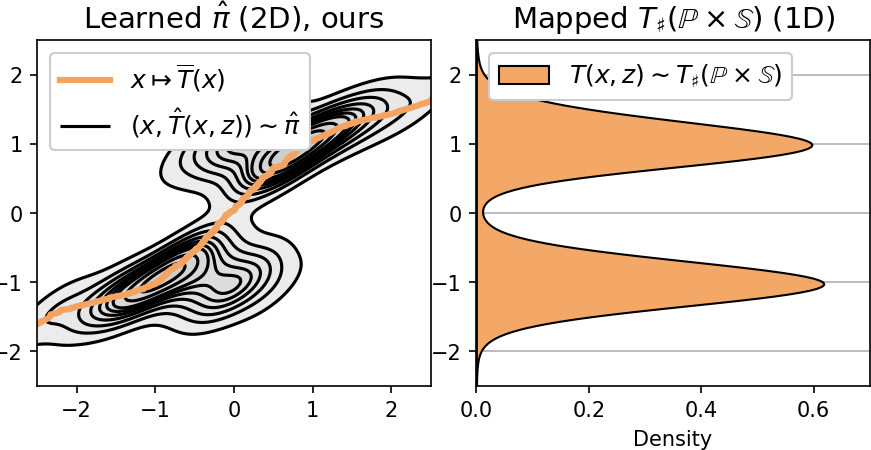}
\caption{\vspace{-1mm}\centering Learned plan $\hat{\pi}$ and marginal $\hat{T}_{\#}(\mathbb{P}\times\mathbb{S})$, test 3.}
\label{fig:toy-1d-1to2-test3}
\end{subfigure}

\vspace{2mm}
\begin{subfigure}[b]{1.02\linewidth}
\centering
\hspace{-7mm}\includegraphics[width=0.99\linewidth]{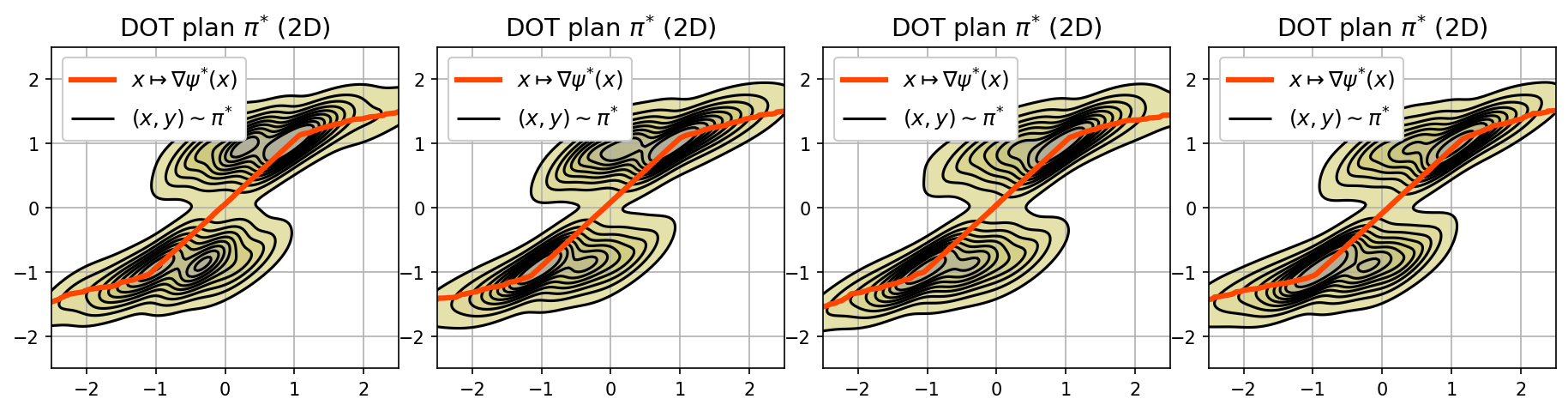}
\caption{\vspace{-1mm}\centering Various optimal plans $\pi^{*}$ learned by discrete OT (considered here as the ground truth).}
\label{fig:toy-1d-1to2-dot}
\end{subfigure}
\caption{\centering Stochastic plans between toy 1D distributions (Figure \ref{fig:toy-1d-1to2-inout}) learned by NOT (Figures \ref{fig:toy-1d-1to2-test1}, \ref{fig:toy-1d-1to2-test2}, \ref{fig:toy-1d-1to2-test3}) and discrete OT (Figure \ref{fig:toy-1d-1to2-dot}) with the 1-weak quadratic cost. The figures with the 2D transport plans also demonstrate the average map $x\mapsto \int_{\mathcal{Y}}yd\hat{\pi}(y|x)$ (conditional expectation).}
\label{fig:toy-1d-1to2}
\end{figure}

\begin{figure}[!t]
\begin{subfigure}[b]{0.51\linewidth}
\centering
\hspace{-7mm}\includegraphics[width=0.99\linewidth]{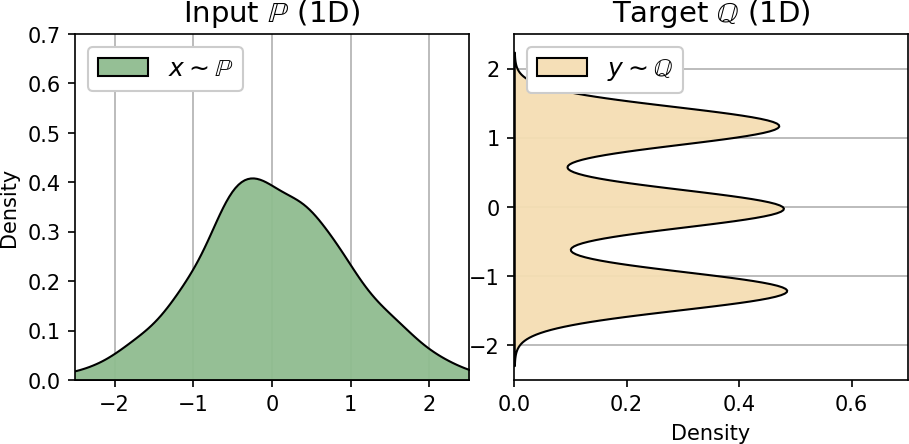}
\caption{\vspace{-1mm}\centering  Input and output distributions.}
\label{fig:toy-1d-1to3-inout}
\end{subfigure}
\begin{subfigure}[b]{0.50\linewidth}
\vspace{1mm}
\centering
\hspace{-7mm}\includegraphics[width=0.99\linewidth]{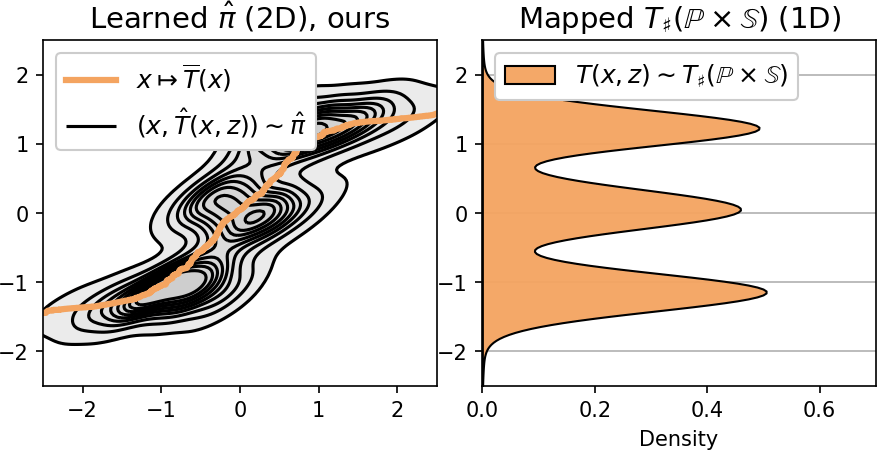}
\caption{\vspace{-1mm}\centering Learned plan $\hat{\pi}$ and marginal $\hat{T}_{\#}(\mathbb{P}\times\mathbb{S})$, test 1.}
\label{fig:toy-1d-1to3-test1}
\end{subfigure}

\vspace{3mm}
\begin{subfigure}[b]{0.50\linewidth}
\centering
\hspace{-7mm}\includegraphics[width=0.99\linewidth]{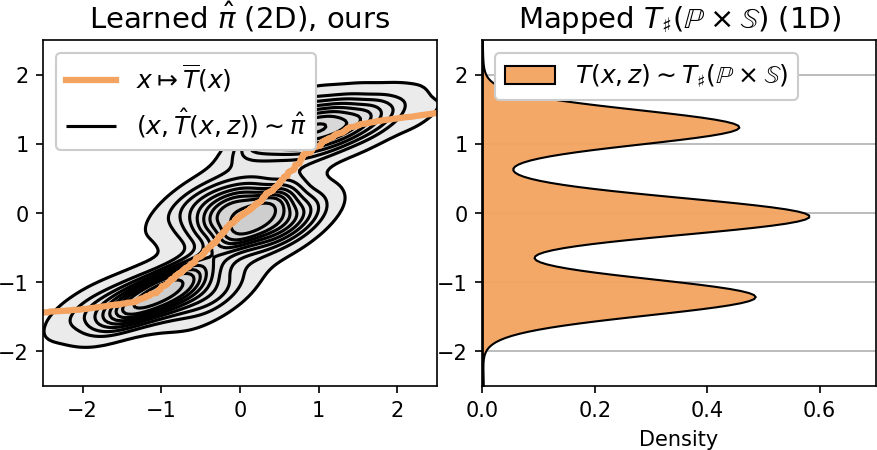}
\caption{\vspace{-1mm}\centering Learned plan $\hat{\pi}$ and marginal $\hat{T}_{\#}(\mathbb{P}\times\mathbb{S})$, test 2.}
\label{fig:toy-1d-1to3-test2}
\end{subfigure}
\begin{subfigure}[b]{0.50\linewidth}
\centering
\hspace{-7mm}\includegraphics[width=0.99\linewidth]{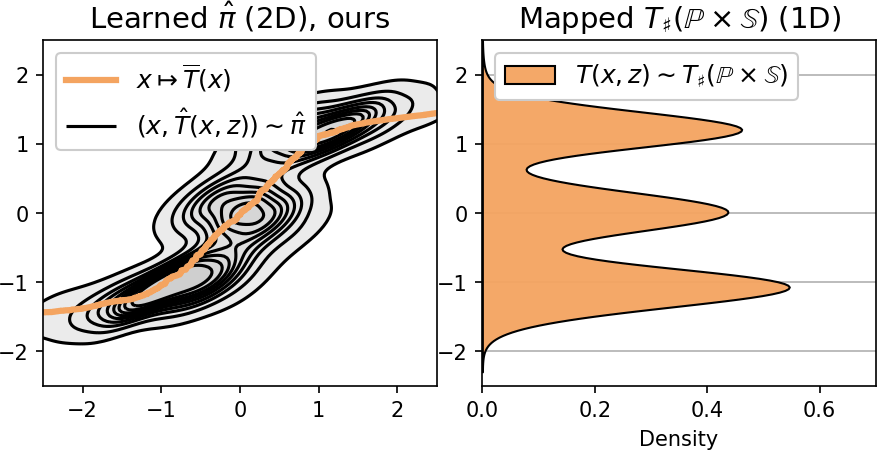}
\caption{\vspace{-1mm}\centering Learned plan $\hat{\pi}$ and marginal $\hat{T}_{\#}(\mathbb{P}\times\mathbb{S})$, test 3.}
\label{fig:toy-1d-1to3-test3}
\end{subfigure}

\vspace{2mm}
\begin{subfigure}[b]{1.02\linewidth}
\centering
\hspace{-7mm}\includegraphics[width=0.99\linewidth]{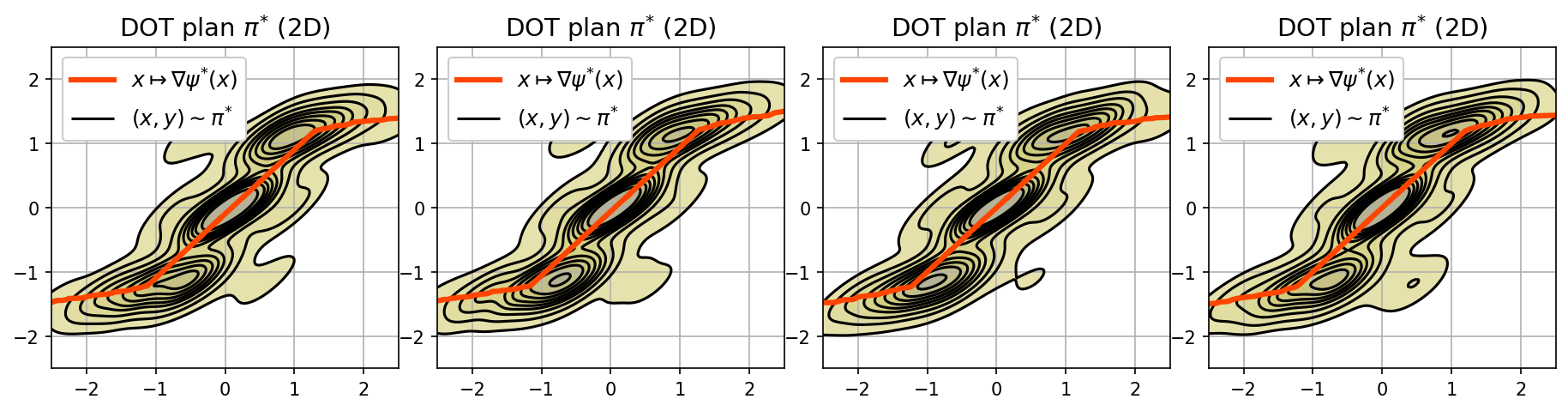}
\caption{\vspace{-1mm}\centering Various optimal plans $\pi^{*}$ learned by discrete OT (considered here as the ground truth).}
\label{fig:toy-1d-1to3-dot}
\end{subfigure}
\caption{\centering Stochastic plans between toy 1D distributions (Figure \ref{fig:toy-1d-1to3-inout}) learned by NOT (Figures \ref{fig:toy-1d-1to3-test1}, \ref{fig:toy-1d-1to3-test2}, \ref{fig:toy-1d-1to3-test3}) and discrete OT (Figure \ref{fig:toy-1d-1to3-dot}) with the 1-weak quadratic cost. The figures with the 2D transport plans also demonstrate the average map $x\mapsto \int_{\mathcal{Y}}yd\hat{\pi}(y|x)$ (conditional expectation).}
\label{fig:toy-1d-1to3}
\end{figure}

\section{Comparison with Principal Unpaired Translation Methods}
\label{sec-exp-comparison}

We compare our Algorithm \ref{algorithm-not} with popular models for unpaired translation. We consider \textit{handbags} $\rightarrow$ \textit{shoes} ($64\times 64$), \textit{celeba male} $\rightarrow$ \textit{female} ($64\times 64$), \textit{outdoor} $\rightarrow$ \textit{church} ($128\times 128$) translation. For quantitative comparison, we compute Frechet Inception Distance\footnote{\url{github.com/mseitzer/pytorch-fid}} \citep[FID]{heusel2017gans} of the mapped \textbf{test} handbags subset w.r.t. the \textbf{test} shoes subset. The scores of our method and alternatives are given in Table \ref{table-fid}. The translated images are shown in Figures \ref{fig:comparison-handbags-shoes}, \ref{fig:comparison-male-female}, \ref{fig:outdoor-church-comparison}.

\textbf{Methods.} We compare our method with one-to-one CycleGAN \footnote{\url{github.com/eriklindernoren/PyTorch-GAN/tree/master/implementations/cyclegan}}\citep{zhu2017unpaired}, DiscoGAN\footnote{\url{github.com/eriklindernoren/PyTorch-GAN/tree/master/implementations/discogan}} \citep{kim2017learning} and with one-to-many AugCycleGAN\footnote{\url{github.com/aalmah/augmented_cyclegan}} \citep{almahairi2018augmented} and MUNIT\footnote{\url{github.com/NVlabs/MUNIT}} \citep{huang2018multimodal}. We use the official or community implementations with the hyperparameters from the respective papers. We choose the above-mentioned methods for comparison because they are \textit{principal} methods for one-to-one and one-to-many translation. Recent methods (GMM-UNIT \citep{liu2020gmm}, COCO-FUNIT \citep{saito2020coco}, StarGAN \citep{choi2020stargan}) are based on them and focus on specific details/setups such as style/content separation, few-shot learning, disentanglement, multi-domain transfer, which are \textit{out of scope} of our paper.

\textbf{Discussion.} Existing one-to-one methods visually preserve the style during translation comparably to our method. Alternative one-to-many methods do not preserve the style at all. When the input and output domains are similar (\textit{handbags}$\rightarrow$\textit{shoes}, \textit{celeba male} $\rightarrow$ \textit{female}), the FID scores of all the models are comparable. However, most models are outperformed by NOT when the domains are distant (\textit{outdoor} $\rightarrow$ \textit{church}), see Figure \ref{fig:outdoor-church-comparison} and the last row in Table \ref{table-fid}. For completeness, in Table \ref{table-methods-hyperparameters} we compare the number of hyperparameters of the translation methods in view. Note that in contrast to the other methods, we optimize only $2$ neural networks -- transport map and potential.\let\thefootnote\relax\footnotetext{$^*$ For $128\times 128$ images.}

\begin{table}[!h]
\centering
\small
\begin{tabular}{|c|c|c|c|c|c|c|}
\toprule
\textbf{Type} & \multicolumn{3}{c|}{\textbf{One-to-one}} & \multicolumn{3}{c|}{\textbf{One-to-many}} \\ \midrule
\textbf{Method} & \makecell{Disco\\GAN} & \makecell{Cycle\\GAN} & \makecell{NOT\\(ours)} & \makecell{AugCycle\\GAN} & MUNIT & \makecell{NOT \\(ours)} \\ \midrule
\makecell{Handbags $\rightarrow$ shoes\\(64$\times$ 64)} & 22.42 & 16.00 & \textbf{13.77} & \makecell{18.84\\$\pm$ 0.11} & \makecell{15.76\\$\pm$ 0.11} & \makecell{\textbf{13.44}\\$\pm$ 0.12} \\ \midrule
\makecell{Celeba male $\rightarrow$ female\\(64$\times$ 64)} & 35.64  & 17.74  & \textbf{13.23} & \makecell{12.94\\$\pm$0.08}   & \makecell{17.07\\$\pm$0.11}   & \makecell{\textbf{11.96}\\$\pm$0.07} \\ \midrule
\makecell{Outdoor $\rightarrow$ church\\(128$\times$ 128)} & 75.36 & 46.39 & \textbf{25.5} & \makecell{51.42 \\$\pm$0.12} &  \makecell{31.42 \\$\pm$0.16}  & \makecell{\textbf{25.97}\\$\pm$0.14} \\ \bottomrule
\end{tabular}
\vspace{2mm}
\caption{\centering Test FID$\downarrow$ of the considered unpaired translation methods.}\label{table-fid}
\end{table}

\begin{table}[!h]
\centering
\scriptsize
\begin{tabular}{|c|c|c|c|c|c|c|}
\toprule
\textbf{Type} & \multicolumn{3}{c|}{\textbf{One-to-one}} & \multicolumn{3}{c|}{\textbf{One-to-many}} \\ \midrule
\textbf{Method} & \makecell{Disco\\GAN} & \makecell{Cycle\\GAN} & \makecell{NOT\\(ours)} & \makecell{AugCycle\\GAN} & MUNIT & \makecell{NOT \\(ours)} \\ \midrule
\makecell{Hyperparameters \\
of optimization\\ objectives} & None & \makecell{Weights of \\
cycle and \\
identity losses\\
$\lambda_{cyc}, \lambda_{id}$}  & None & \makecell{Weights of \\cycle losses \\ $\gamma_{1}, \gamma_{2}$} & \makecell{Weights of \\
reconstruction \\
losses \\
$\lambda_{x}, \lambda_{c}, \lambda_{s}$} & \makecell{Diversity\\control\\parameter $\gamma$} \\ \midrule
\makecell{Total number of \\hyperparameters} & \textbf{0} & 2 & \textbf{0} & 2 & 3 & \textbf{1}
\\
\midrule
Networks & \makecell{2 generators, \\ 2$\times$29.2M\\ \\ 2 discriminators \\ 2$\times$0.7M}  & \makecell{2 generators \\ 2$\times$11.4M \\ \\ 2 discriminators \\ 2$\times$2.8M} & \makecell{1 transport \\ 9.7M, \\ \\ 1 potential \\ 22.9M [32.4M$^{*}$]} & \makecell{2 generators \\ 2$\times$1.1M, \\ \\ 2 discriminators \\ 2$\times$2.8M, \\ \\ 2  encoders \\ 2$\times$1.4M} & \makecell{2  generators \\ 2$\times$15.0M, \\ \\ 2 discriminators \\ 2$\times$8.3M} & \makecell{1 transport map \\ 9.7M, \\ \\ 1 potential \\ 22.9M [32.4M$^{*}$]}  \\
\midrule
\makecell{Total number of \\networks and \\parameters} & \makecell{4 networks\\59.8M}  & \makecell{4 networks\\28.2M} & \makecell{\textbf{2 networks}\\32.6M [42.1M$^{*}$]} & \makecell{6 networks\\7.0M} & \makecell{4 networks\\46.6M} & \makecell{\textbf{2 networks}\\32.6M [42.1M$^{*}$]} \\
\bottomrule
\end{tabular}
\caption{\centering Comparison of the number of hyperparameters of the optimization objectives, the number of networks and their parameters for the considered unpaired translation methods for 64$\times$64 images.}\label{table-methods-hyperparameters}
\end{table}

\newpage
\begin{figure*}[!b]
\vspace{-2mm}
\begin{subfigure}[b]{1.\linewidth}
\centering
\includegraphics[width=0.97\linewidth]{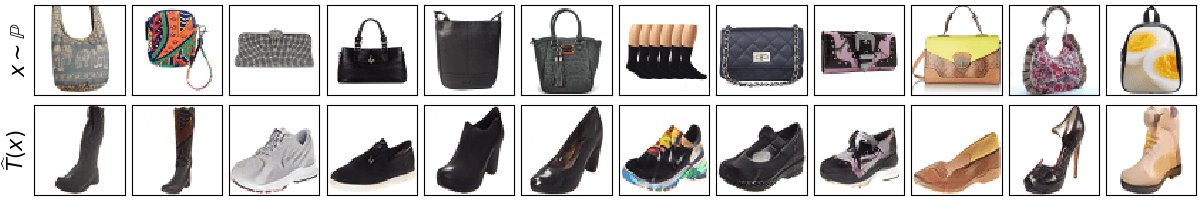}
\caption{
NOT (\underline{\textbf{ours}}, $\mathbb{W}_{2}$), one-to-one.
}
\label{fig:bags2shoes-ot-strong-64}
\end{subfigure}
\vspace{1mm}

\begin{subfigure}[b]{1.\linewidth}
\centering
\includegraphics[width=0.97\linewidth]{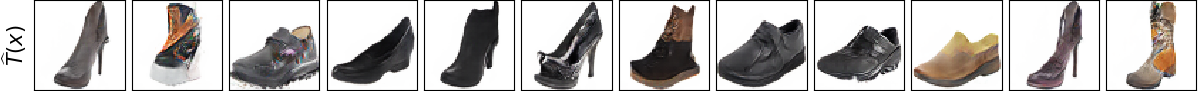}
\caption{
DiscoGAN, one-to-one.
}
\label{fig:bags2shoes-discogan-64}
\end{subfigure}
\vspace{1mm}

\begin{subfigure}[b]{1.\linewidth}
\centering
\includegraphics[width=0.98\linewidth]{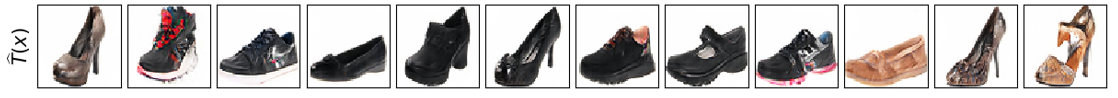}
\caption{CycleGAN, one-to-one.}
\label{fig:bags2shoes-cyclegan-64}
\end{subfigure}
\vspace{1mm}

\begin{subfigure}[b]{0.33\linewidth}
\centering
\includegraphics[width=0.97\linewidth]{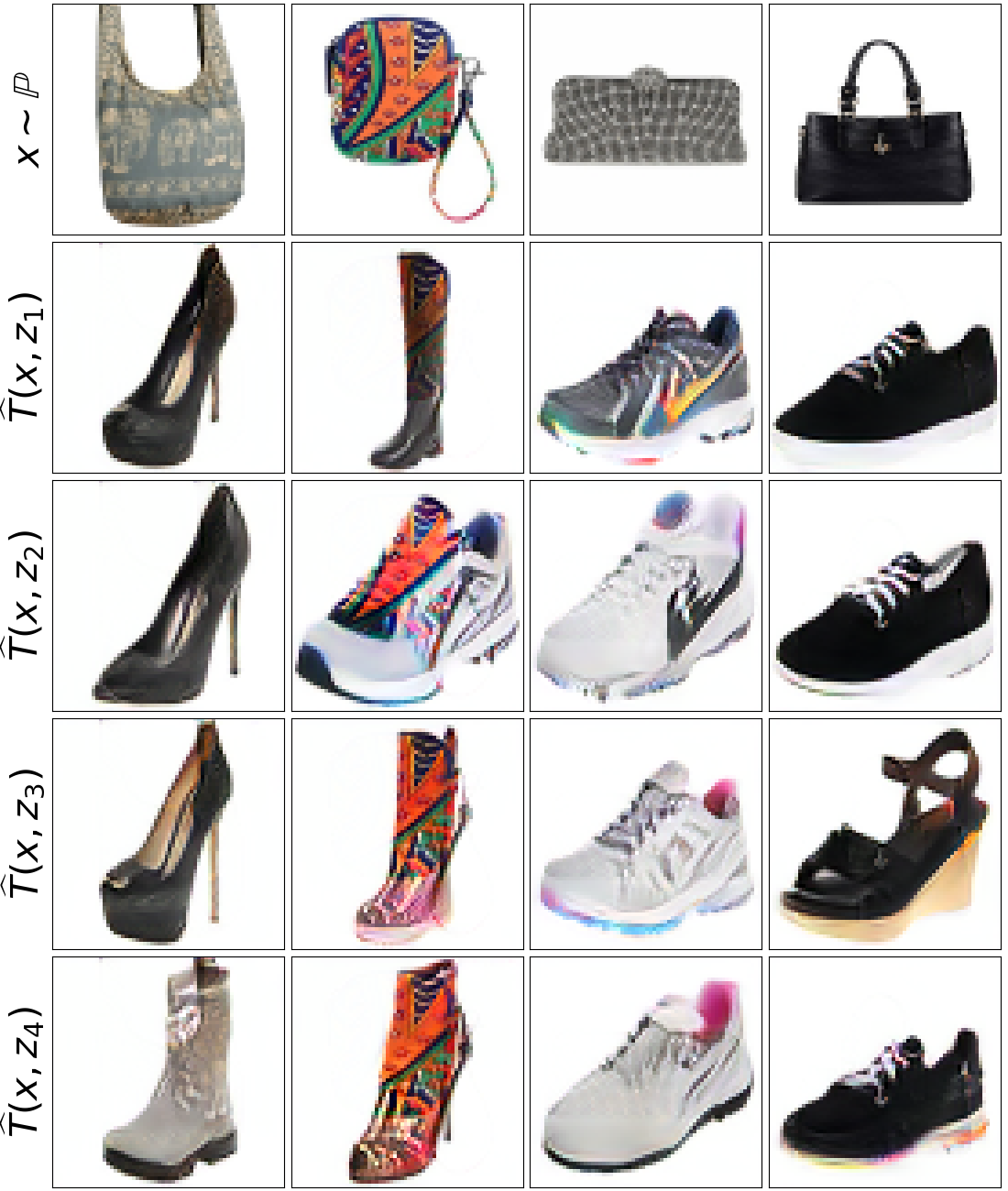}
\caption{
NOT (\underline{\textbf{ours}}, $\mathcal{W}_{2,\frac{2}{3}}$), one-to-many.
}
\label{fig:bags2shoes-ours-64}
\end{subfigure}
\begin{subfigure}[b]{0.33\linewidth}
\centering
\includegraphics[width=0.97\linewidth]{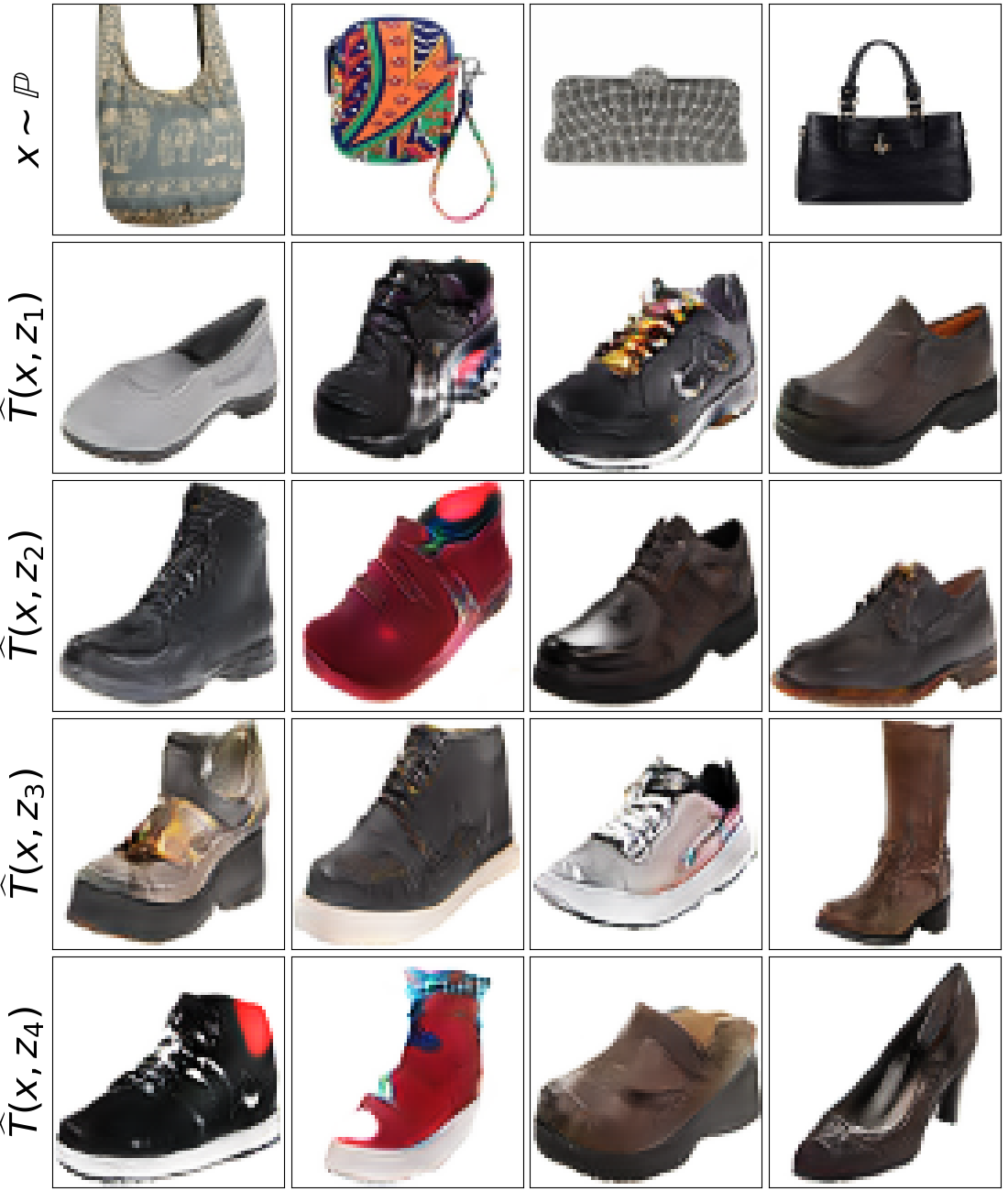}
\caption{
MUNIT, one-to-many.
}
\label{fig:bags2shoes-munit-64}
\end{subfigure}
\begin{subfigure}[b]{0.33\linewidth}
\centering
\includegraphics[width=0.97\linewidth]{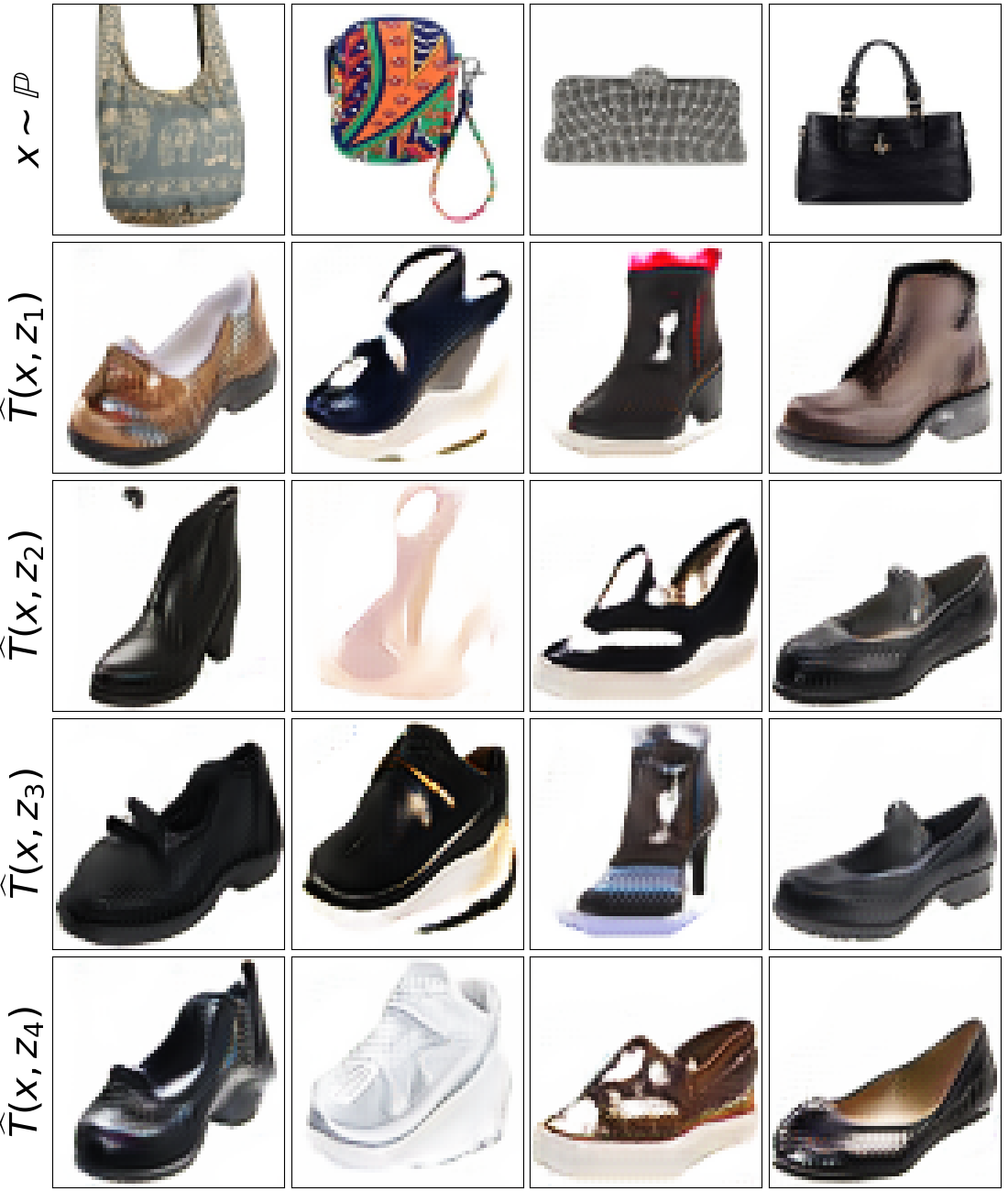}
\caption{AugCycleGAN}
\label{fig:bags2shoes-augcyclegan-64}
\end{subfigure}
\caption{\textit{Handbags} $\rightarrow$ \textit{shoes} translation ($64\times 64$) by the methods in view.}
\label{fig:comparison-handbags-shoes}
\vspace{-3mm}
\end{figure*}

\begin{figure*}[!h]
\vspace{-2mm}
\begin{subfigure}[b]{1.\linewidth}
\centering
\includegraphics[width=0.97\linewidth]{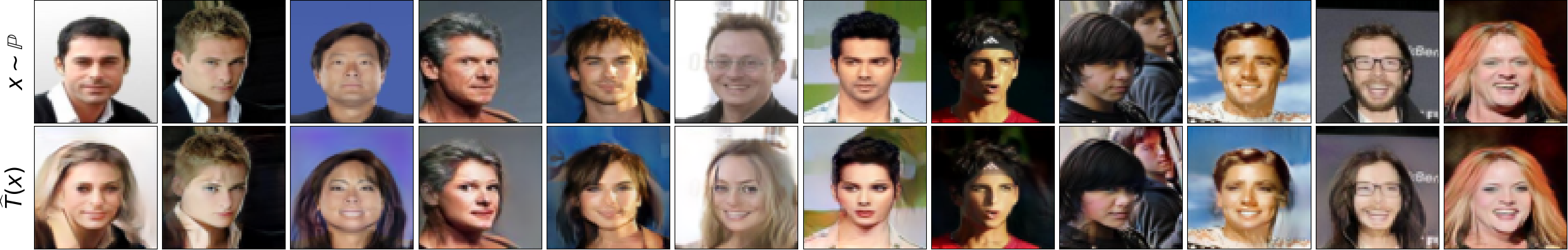}
\caption{
NOT (\underline{\textbf{ours}}, $\mathbb{W}_{2}$), one-to-one.
}
\label{fig:male2female-ot-strong-64}
\end{subfigure}
\vspace{1mm}

\begin{subfigure}[b]{1.\linewidth}
\centering
\includegraphics[width=0.97\linewidth]{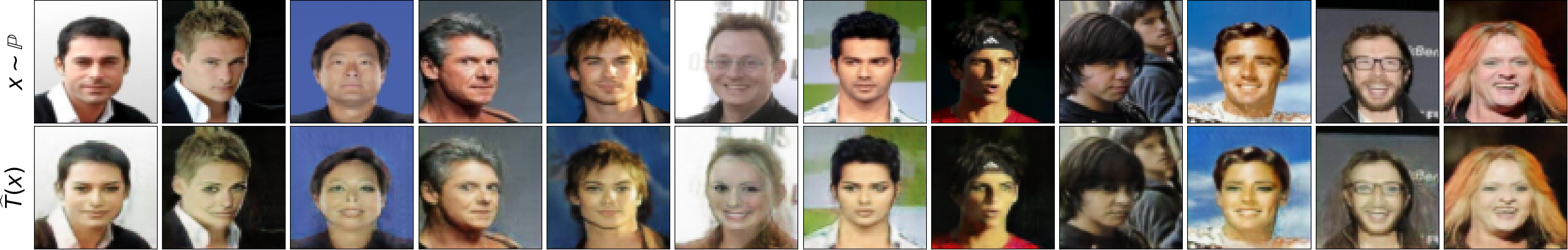}
\caption{
DiscoGAN, one-to-one.
}
\label{fig:male2female-discogan-64}
\end{subfigure}
\vspace{1mm}

\begin{subfigure}[b]{1.\linewidth}
\centering
\includegraphics[width=0.98\linewidth]{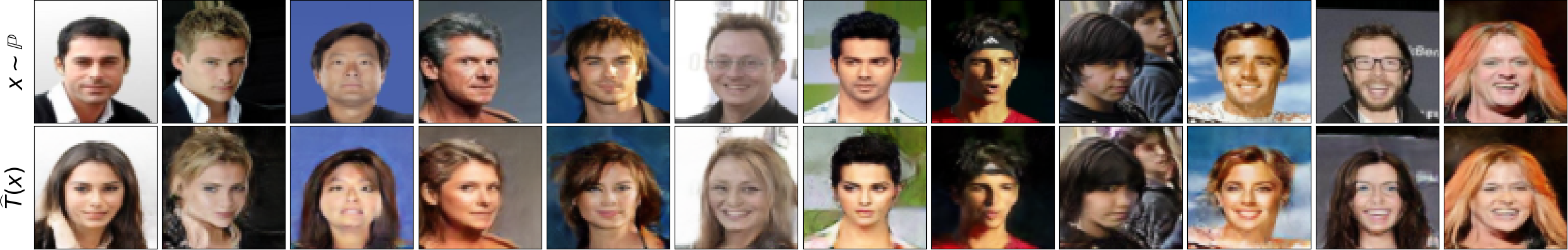}
\caption{CycleGAN, one-to-one.}
\label{fig:male2female-cyclegan-64}
\end{subfigure}
\vspace{1mm}

\begin{subfigure}[b]{0.33\linewidth}
\centering
\includegraphics[width=0.97\linewidth]{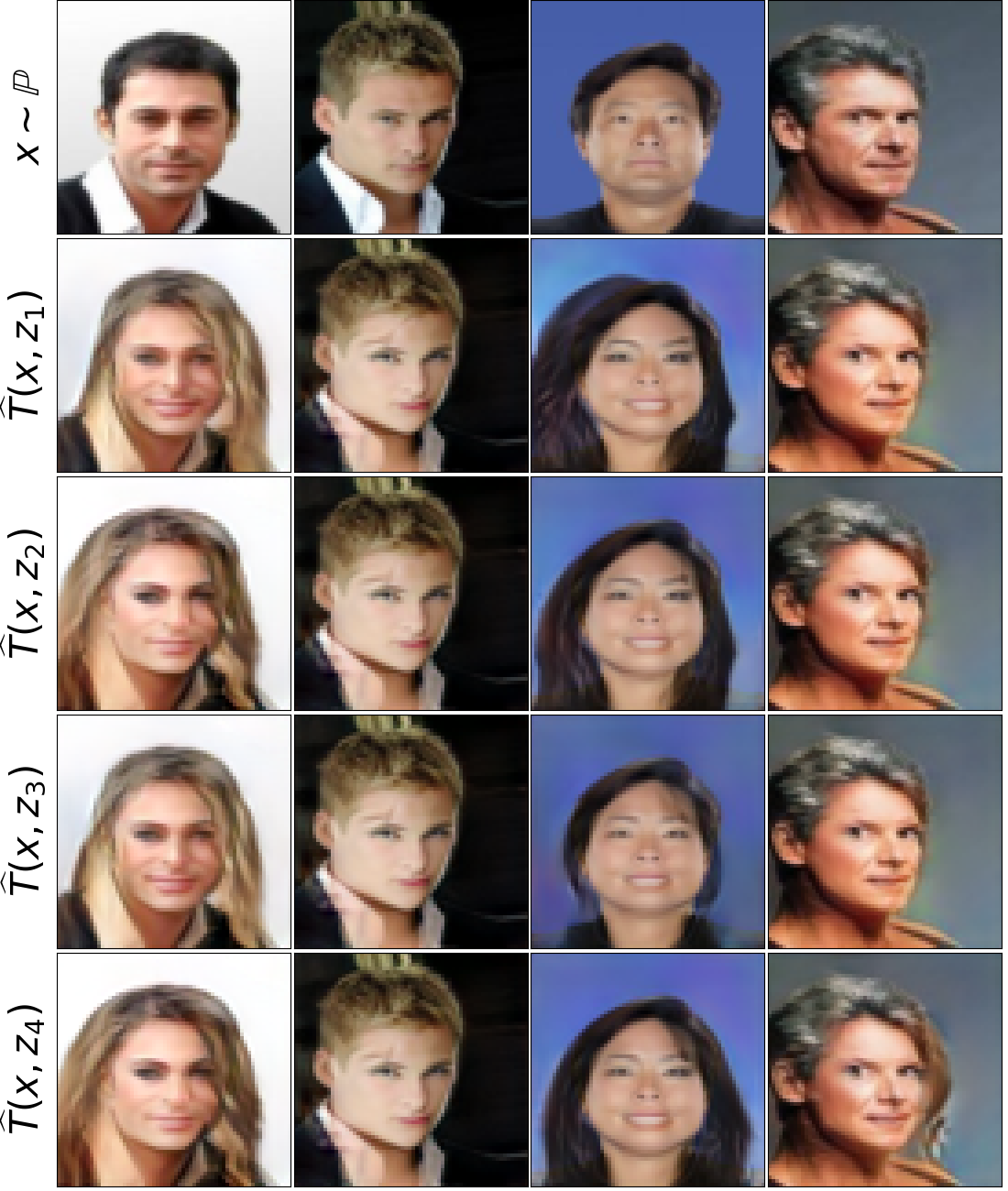}
\caption{
NOT (\underline{\textbf{ours}}, $\mathcal{W}_{2,\frac{2}{3}}$), one-to-many.
}
\label{fig:male2female-ours-64}
\end{subfigure}
\begin{subfigure}[b]{0.33\linewidth}
\centering
\includegraphics[width=0.97\linewidth]{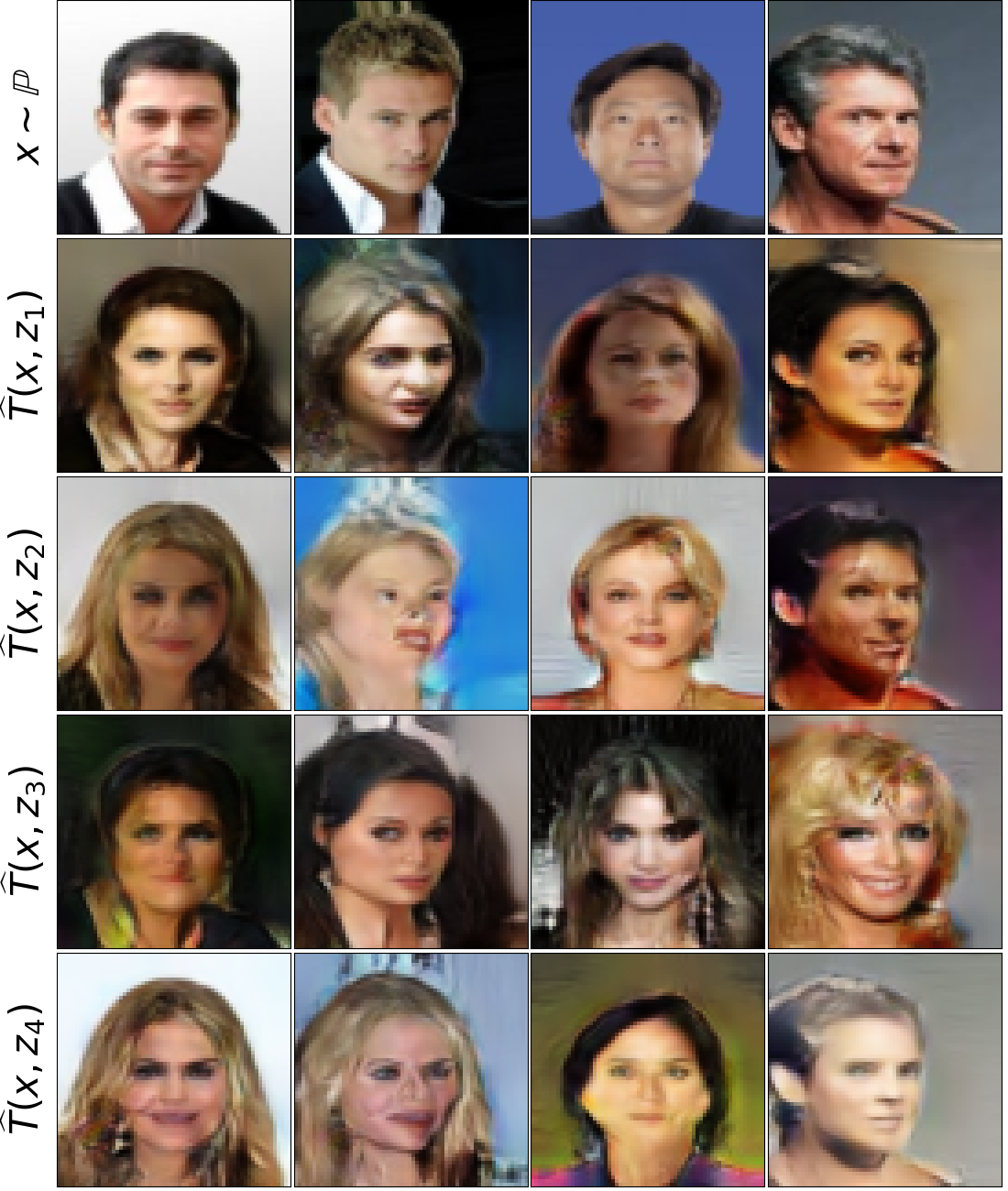}
\caption{
MUNIT, one-to-many.
}
\label{fig:male2female-munit-64}
\end{subfigure}
\begin{subfigure}[b]{0.33\linewidth}
\centering
\includegraphics[width=0.97\linewidth]{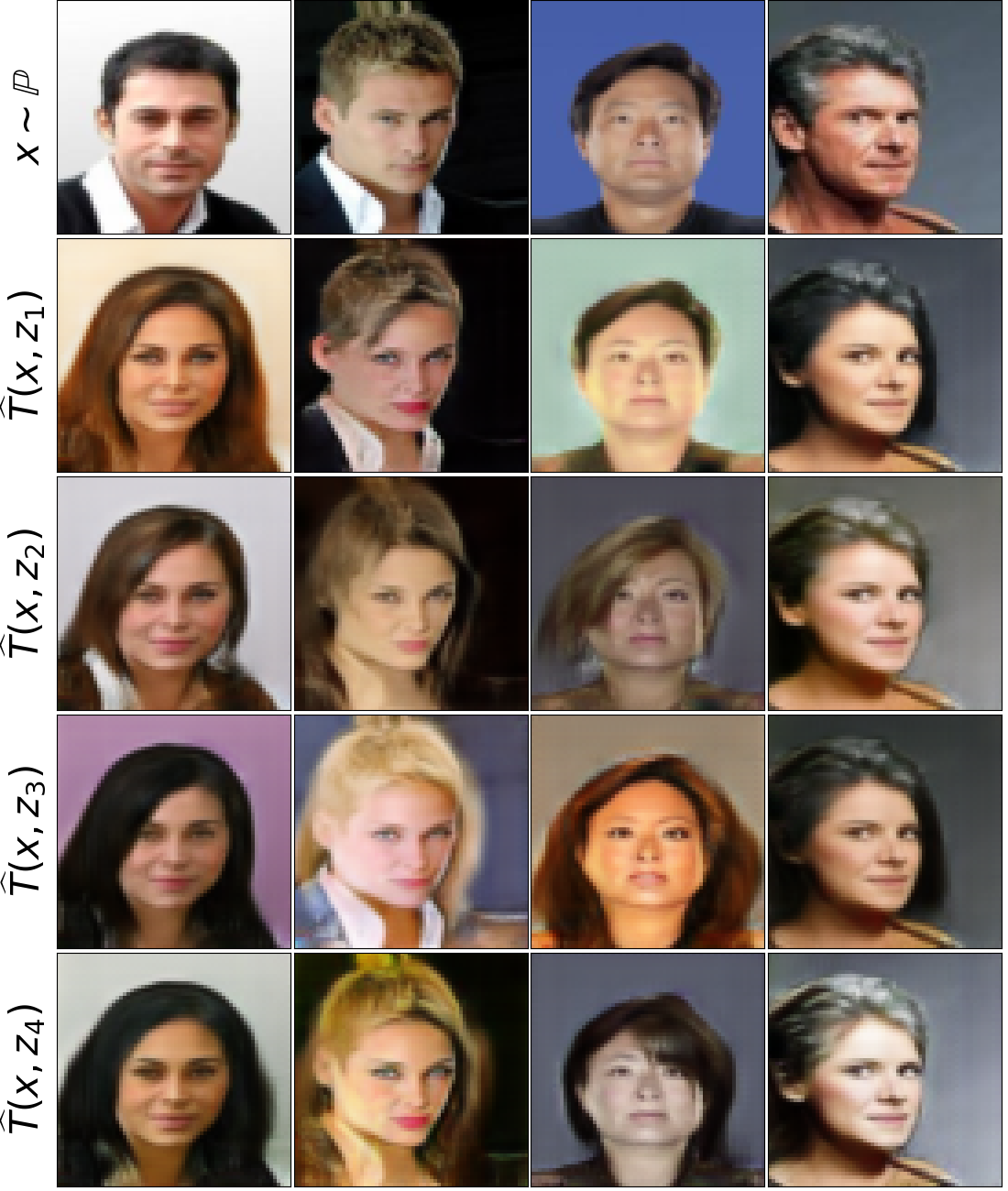}
\caption{AugCycleGAN}
\label{fig:male2female-augcyclegan-64}
\end{subfigure}
\caption{\textit{Celeba (male)} $\rightarrow$ \textit{Celeba (female)} translation ($64\times 64$) by the methods in view.}
\label{fig:comparison-male-female}
\vspace{-3mm}
\end{figure*}

\begin{figure}[!h]
\centering
\begin{subfigure}[b]{0.48\linewidth}
\centering
\includegraphics[width=0.995\linewidth]{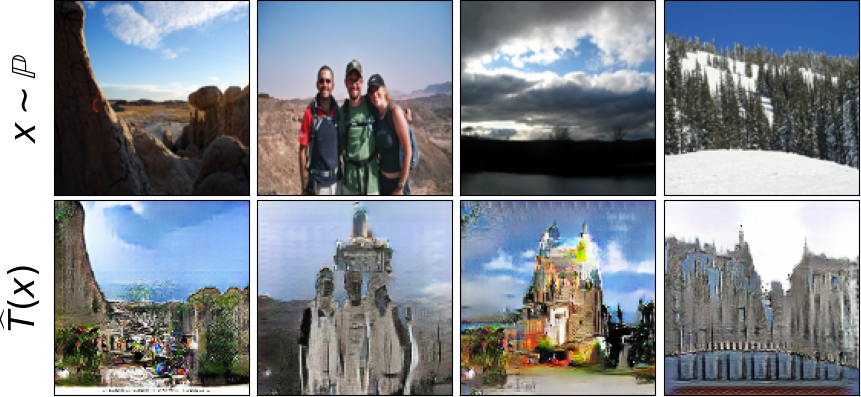}
\caption{DiscoGAN, one-to-one.}
\end{subfigure}\begin{subfigure}[b]{0.48\linewidth}
\centering
\includegraphics[width=0.995\linewidth]{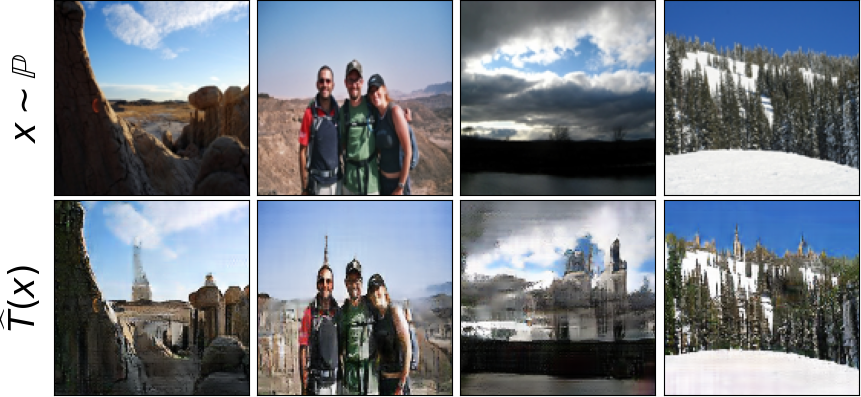}
\caption{CycleGAN, one-to-one.}
\end{subfigure}\vspace{3mm}

\begin{subfigure}[b]{0.48\linewidth}
\centering
\includegraphics[width=0.995\linewidth]{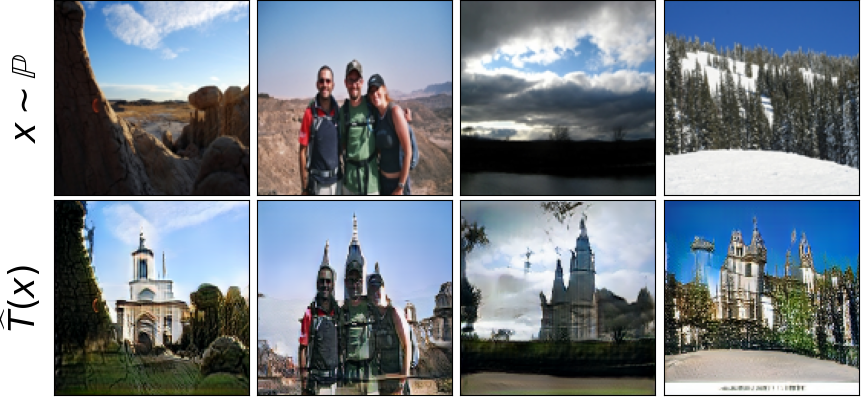}
\caption{NOT (\underline{\textbf{ours}}, $\mathbb{W}_{2}$), one-to-one.}
\end{subfigure}\vspace{3mm}

\begin{subfigure}[b]{0.48\linewidth}
\centering
\includegraphics[width=0.995\linewidth]{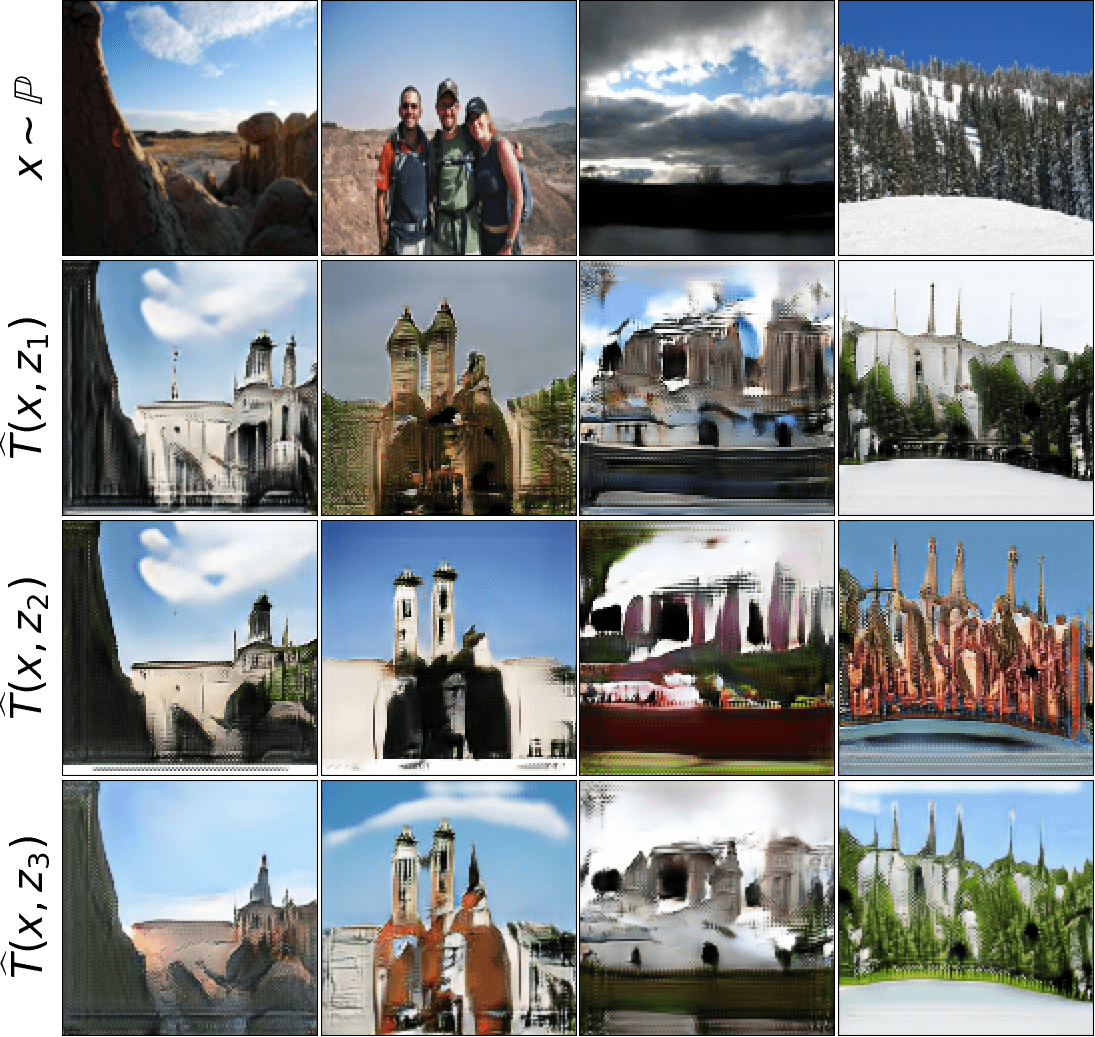}
\caption{AugCycleGAN, one-to-many.}
\end{subfigure}
\begin{subfigure}[b]{0.48\linewidth}
\centering
\includegraphics[width=0.995\linewidth]{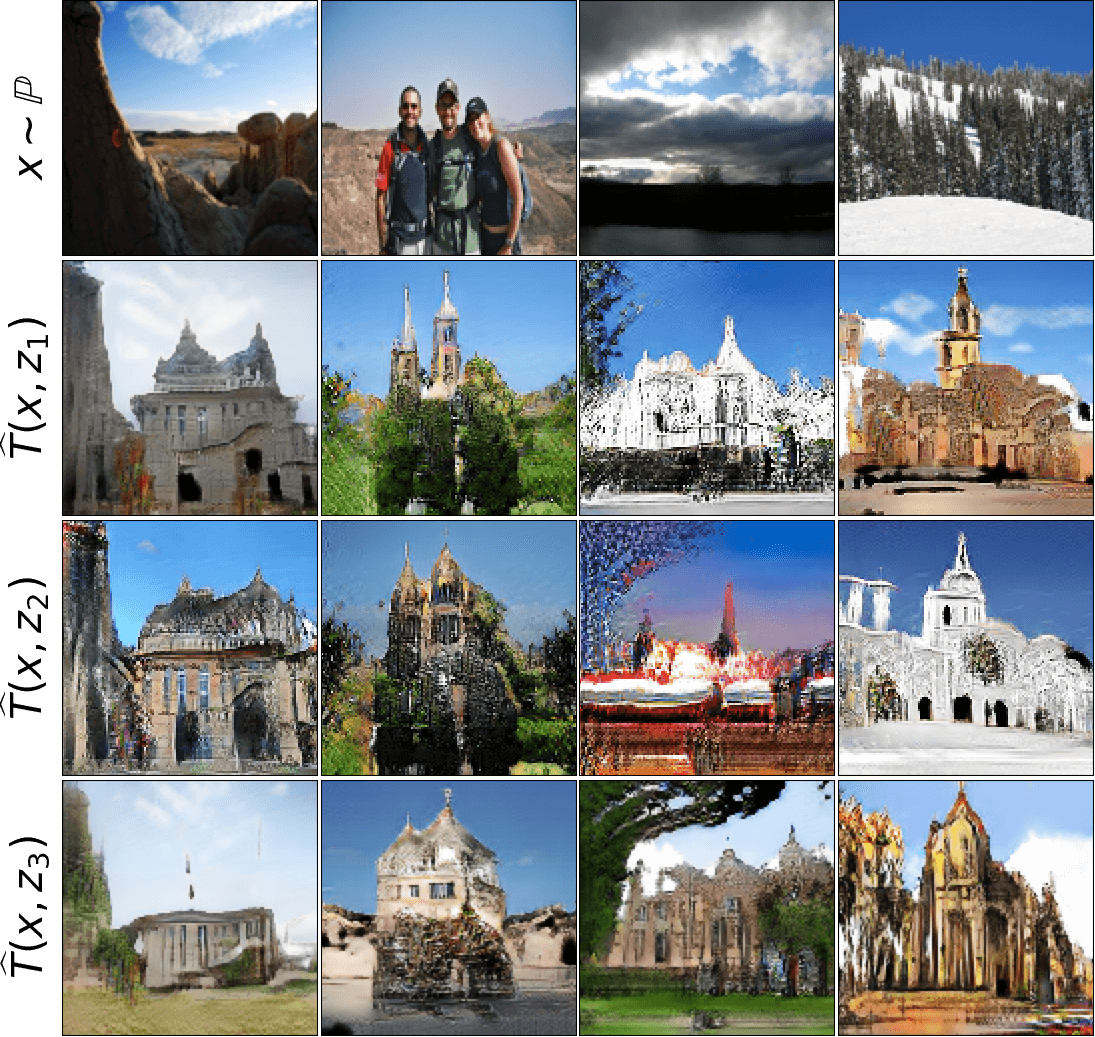}
\caption{MUNIT, one-to-many.}
\end{subfigure}\vspace{3mm}

\begin{subfigure}[b]{0.48\linewidth}
\centering
\includegraphics[width=0.995\linewidth]{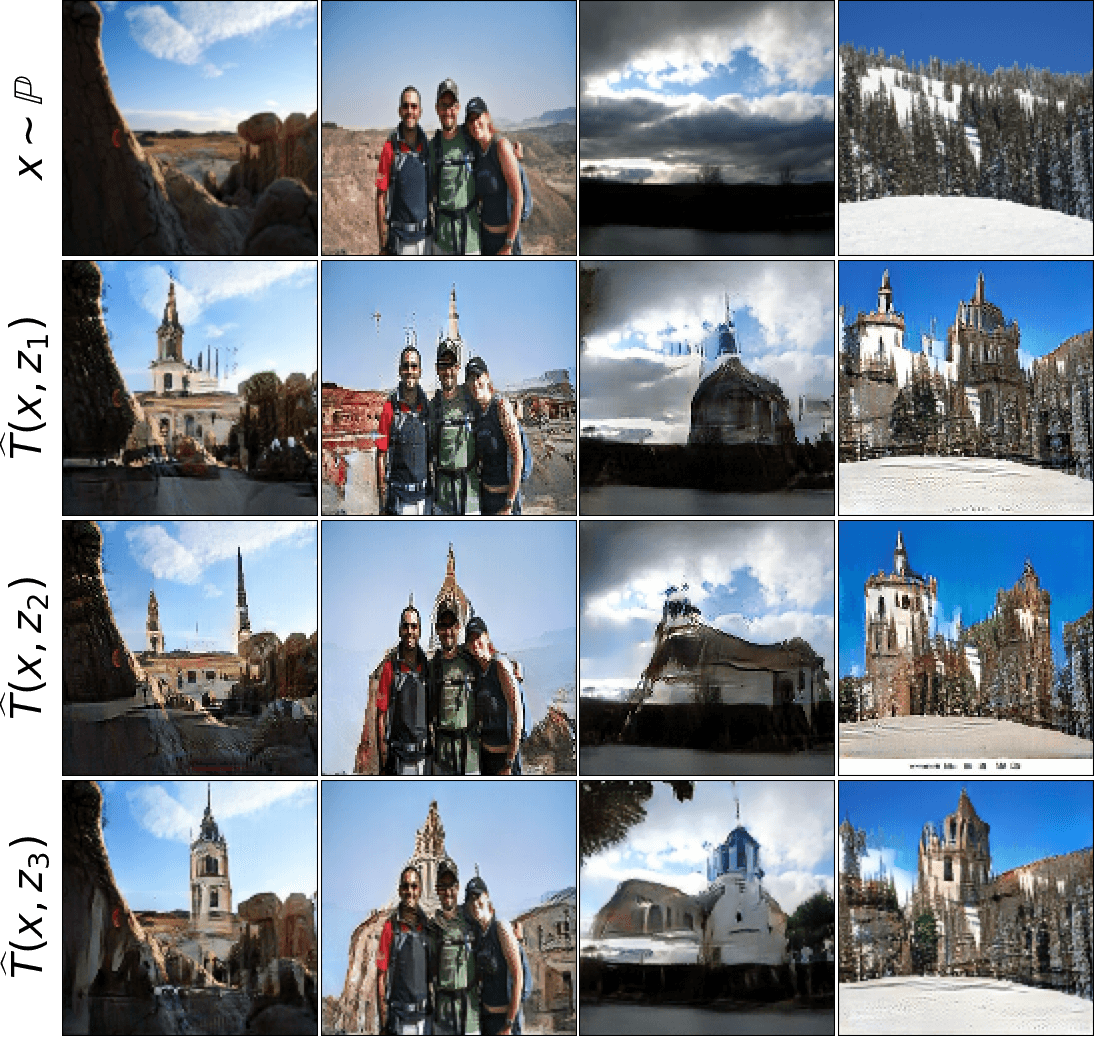}
\caption{NOT (\underline{\textbf{ours}}, $\mathcal{W}_{2,\frac{2}{3}}$), one-to-many.}
\end{subfigure}
\caption{\textit{Outdoor} $\rightarrow$ \textit{church} ($128\times 128$) translation with various methods.}
\label{fig:outdoor-church-comparison}
\end{figure}

\clearpage
\section{Experimental Details}
\label{sec-exp-details}
 
 \vspace{-0.5mm}\textbf{Pre-processing.} We beforehand rescale anime face images to $512\times 512$, and do $256\times 256$ crop with the center located $14$ pixels above the image center to get the face. Next, for all these datasets, we rescale RGB channels to $[-1,1]$ and resize images to the required size ($64\times 64$ or $128\times 128$). We do not apply any augmentations to data.
 
\vspace{-0.5mm}\textbf{Neural networks.} We use WGAN-QC discriminator's ResNet architecture \citep{liu2019wasserstein} for potential $f$. We use UNet\footnote{\url{github.com/milesial/Pytorch-UNet}} \citep{ronneberger2015u} as the stochastic transport map $T(x,z)$. The noise $z$ is simply an additional $4$th input channel (RGBZ), i.e., the dimension of the noise equals the image size ($64\times 64$ or $128\times 128$). We use high-dimensional Gaussian noise with axis-wise $\sigma=0.1$.
 
\textbf{Optimization.} We use the Adam optimizer \citep{kingma2014adam} with the default betas for both $T_{\theta}$ and $f_{\omega}$. The learning rate is $lr=1\cdot 10^{-4}$. The batch size is $|X|=64$. The number of inner iterations is $k_{T}=10$. When training with the weak cost \eqref{weak-w2-cost}, we sample $|Z_x|=4$ noise samples per each image $x$ in batch. In toy experiments, we do $10$K total iterations of $f_{\omega}$ update. In the experiments with unpaired translation, our Algorithm \ref{algorithm-not} converges in $\approx 40$K iterations for most datasets. 

\textbf{Dynamic weak cost.} In \wasyparagraph\ref{sec-one-to-many-translation}, we train the algorithm with the gradually changing $\gamma$. Starting from $\gamma=0$, we linearly increase it to the desired value ($\frac{2}{3}$ or $1$) during 25K first iterations of $f_{\omega}$.


\textbf{Stability of training.} In several cases, we noted that the optimization fluctuates around the saddle points or diverges. An analogous behavior of saddle point methods for OT has been observed in \citep{korotin2021neural}. For the $\gamma$-weak quadratic cost ($\gamma>0$), we sometimes experienced instabilities when the input $\mathbb{P}$ is notably less disperse than $\mathbb{Q}$ or when the parameter $\gamma$ is high. Studying this behaviour and improving stability/convergence of the optimization is a promising research direction.

\textbf{Computational complexity.} The time and memory complexity of training deterministic OT maps $T(x)$ is comparable to that of training usual generative models for unpaired translation. Our networks converge in 1-3 days on a Tesla V100 GPU (16 GB); wall-clock times depend on the datasets and the image sizes. Training stochastic $T(x,z)$ is harder since we sample multiple random $z$ per $x$ (we use $|Z|=4$). Thus, we learn stochastic maps on 4 $\times$ Tesla V100 GPUs.


\section{Optimality of Solutions for Strictly Convex Costs}
\label{sec-strictly-convex}

Our Lemma \ref{arginf-lemma} proves that optimal maps $T^{*}$ are contained in the $\arginf_{T}$ sets of optimal potentials $f^{*}$ but leaves the question what else may be contained in these $\arginf_{T}$ sets open. Our following result shows that for \textit{strictly} convex costs, nothing else beside OT maps is contained there.

\begin{lemma}[Solutions of the maximin problem are OT maps] Let $C(x,\mu)$ be a weak cost which is \textit{strictly} convex in $\mu$. Assume that there exists at least one potential $f^{*}$ which maximizes dual form \eqref{ot-weak-dual}. Consider any such optimal potential $f^{*}\in\argsup_{f}\inf_{T}\mathcal{L}(f,T)$. It holds that $$\hat{T}\in\arginf_{T}\mathcal{L}(f^{*},T)\Rightarrow \hat{T}\mbox{ is a stochastic OT map}.$$
\label{arginf-lemma-inverse}
\end{lemma}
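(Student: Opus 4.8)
The plan is to compare the plan realized by $\hat{T}$ with a genuinely optimal plan, using the optimal potential $f^{*}$ as a bridge and invoking strict convexity to force the two to coincide. Write $\hat{\mu}_{x}\stackrel{\text{def}}{=}\hat{T}(x,\cdot)_{\#}\mathbb{S}$ for the conditional distributions produced by $\hat{T}$, let $\hat{\pi}$ be the associated plan (first marginal $\mathbb{P}$, conditionals $\hat{\mu}_{x}$), and denote its second marginal by $\hat{\mathbb{Q}}=\hat{T}_{\#}(\mathbb{P}\times\mathbb{S})$. I do \emph{not} assume a priori that $\hat{\mathbb{Q}}=\mathbb{Q}$; this will come out as a byproduct.

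First I would record the consequences of $f^{*}$ being optimal. Since $f^{*}\in\argsup_{f}\inf_{T}\mathcal{L}(f,T)$ and, by the Corollary, $\sup_{f}\inf_{T}\mathcal{L}(f,T)=\text{Cost}(\mathbb{P},\mathbb{Q})$, we have $\inf_{T}\mathcal{L}(f^{*},T)=\text{Cost}(\mathbb{P},\mathbb{Q})$, hence $\mathcal{L}(f^{*},\hat{T})=\text{Cost}(\mathbb{P},\mathbb{Q})$ for any $\hat{T}$ in the $\arginf$. Expanding $\mathcal{L}(f^{*},\hat{T})$ via \eqref{L-def} and applying the change of variables $y=\hat{T}(x,z)$ in the last term gives
\[
\mathcal{L}(f^{*},\hat{T})=\int_{\mathcal{Y}}f^{*}d\mathbb{Q}+\int_{\mathcal{X}}\Big(C(x,\hat{\mu}_{x})-\int_{\mathcal{Y}}f^{*}d\hat{\mu}_{x}\Big)d\mathbb{P}(x).
\]
On the other hand, the optimal dual value is $\text{Cost}(\mathbb{P},\mathbb{Q})=\int_{\mathcal{X}}f^{*C}d\mathbb{P}+\int_{\mathcal{Y}}f^{*}d\mathbb{Q}$ by \eqref{ot-weak-dual}. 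Cancelling $\int f^{*}d\mathbb{Q}$ and using $\mathcal{L}(f^{*},\hat{T})=\text{Cost}(\mathbb{P},\mathbb{Q})$ yields $\int_{\mathcal{X}}\big(C(x,\hat{\mu}_{x})-\int f^{*}d\hat{\mu}_{x}\big)d\mathbb{P}=\int_{\mathcal{X}}f^{*C}d\mathbb{P}$. Since the definition \eqref{weak-c-transform} supplies the pointwise inequality $C(x,\hat{\mu}_{x})-\int f^{*}d\hat{\mu}_{x}\geq f^{*C}(x)$, equality of the integrals forces pointwise equality for $\mathbb{P}$-a.e.\ $x$. Thus $\hat{\mu}_{x}$ attains the infimum in \eqref{weak-c-transform} for $\mathbb{P}$-a.e.\ $x$.

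Next I would bring in a truly optimal plan. By the existence result of \citet{backhoff2019existence} quoted in the preliminaries, a minimizer $\pi^{*}$ of \eqref{ot-primal-form-weak} exists; write $\mu^{*}_{x}=\pi^{*}(\cdot|x)$. Complementary slackness, i.e.\ the fact that the primal value $\int_{\mathcal{X}}C(x,\mu^{*}_{x})d\mathbb{P}$ equals the dual value $\int f^{*C}d\mathbb{P}+\int f^{*}d\mathbb{Q}$, combined with $\int f^{*}d\mathbb{Q}=\int_{\mathcal{X}}\int_{\mathcal{Y}}f^{*}d\mu^{*}_{x}\,d\mathbb{P}$ and the same pointwise inequality, shows in identical fashion that $\mu^{*}_{x}$ also attains the infimum in \eqref{weak-c-transform} for $\mathbb{P}$-a.e.\ $x$. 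Now strict convexity enters decisively: for each $x$ the functional $\mu\mapsto C(x,\mu)-\int_{\mathcal{Y}}f^{*}d\mu$ is strictly convex on the convex set $\mathcal{P}(\mathcal{Y})$ (the linear term preserves strict convexity), so it has at most one minimizer. Hence $\hat{\mu}_{x}=\mu^{*}_{x}$ for $\mathbb{P}$-a.e.\ $x$. Because a plan is determined by its first marginal together with its disintegration, this forces $\hat{\pi}=\pi^{*}$; in particular $\hat{\mathbb{Q}}=\mathbb{Q}$ and $\hat{\pi}$ is optimal. Therefore $\hat{T}$ realizes the optimal plan $\pi^{*}$, i.e.\ $\hat{T}$ is a stochastic OT map.

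The step I expect to be the main obstacle is making the two ``integral equality $\Rightarrow$ pointwise a.e.\ equality'' arguments fully rigorous: they require the $C$-transform inequality to hold genuinely pointwise and the relevant integrands to be well-defined and integrable, which is where the growth conditions on $f$ from \eqref{ot-weak-dual} and the restriction to $\mathcal{P}_{p}(\mathcal{Y})$ must be invoked. The accompanying delicate point is justifying uniqueness of the minimizer from strict convexity on the infinite-dimensional convex domain $\mathcal{P}(\mathcal{Y})$. By contrast, the change of variables and the duality identities used above are routine consequences of Lemmas \ref{lemma-t}--\ref{arginf-lemma} already established.
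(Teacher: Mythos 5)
Your proof is correct (up to the integrability and measurability caveats you yourself flag, which the paper glosses over as well), but it takes a genuinely different route from the paper's. The paper argues globally at the level of $\mathcal{L}$: it takes a stochastic OT map $T^{*}$, forms the midpoint conditionals $\mu^{1}_{x}=\frac{1}{2}(\mu^{*}_{x}+\hat{\mu}_{x})$, realizes them by an auxiliary map $T^{1}$ via Lemma \ref{lemma-existence-transport}, and squeezes: optimality of $f^{*}$ forces $\mathcal{L}(f^{*},T^{1})\geq\text{Cost}(\mathbb{P},\mathbb{Q})$, while convexity of $C(x,\cdot)$ gives $\mathcal{L}(f^{*},T^{1})\leq\frac{1}{2}\mathcal{L}(f^{*},T^{*})+\frac{1}{2}\mathcal{L}(f^{*},\hat{T})=\text{Cost}(\mathbb{P},\mathbb{Q})$ (the inequality signs as printed in the paper are flipped, but this squeeze is plainly the intended argument), so strict convexity turns the forced equality into $\hat{\mu}_{x}=\mu^{*}_{x}$ for $\mathbb{P}$-a.e.\ $x$. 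You instead work pointwise in $x$: from $\mathcal{L}(f^{*},\hat{T})=\text{Cost}(\mathbb{P},\mathbb{Q})$ and strong duality you extract complementary-slackness conditions stating that both $\hat{\mu}_{x}$ and $\mu^{*}_{x}$ attain the infimum defining $f^{*C}(x)$ for $\mathbb{P}$-a.e.\ $x$, and then conclude by uniqueness of the minimizer of the strictly convex functional $\mu\mapsto C(x,\mu)-\int_{\mathcal{Y}}f^{*}d\mu$. The ingredients coincide --- comparison against an optimal plan, and strict convexity exploited through a midpoint --- but your factorization buys two things: you never need to construct the auxiliary map $T^{1}$, thereby sidestepping the measurable-selection issue (implicit in the paper) of choosing $T^{1}(x,\cdot)$ jointly measurably in $x$; and you obtain a sharper intermediate statement, namely that every element of $\arginf_{T}\mathcal{L}(f^{*},T)$ has conditionals solving the $C$-transform problem pointwise a.e., which characterizes the whole $\arginf$ set rather than just settling the optimality of $\hat{T}$. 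What the paper's version buys in exchange is that it stays entirely within the $\mathcal{L}$-formalism of \wasyparagraph\ref{sec-derivation} and needs only an existence argument (one map realizing the midpoint conditionals), never the dual value identity $\int_{\mathcal{X}}f^{*C}d\mathbb{P}+\int_{\mathcal{Y}}f^{*}d\mathbb{Q}=\text{Cost}(\mathbb{P},\mathbb{Q})$ in explicit form. Both proofs ultimately rest on the same ``nonnegative integrand with zero integral vanishes $\mathbb{P}$-a.e.'' step.
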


\begin{proof}[Proof of Lemma \eqref{arginf-lemma-inverse}.] By the definition of $f^{*}$, we have $$\mathcal{L}(f^{*},\hat{T})=\sup_{f}\inf_{T}\mathcal{L}(f,T)=\mbox{Cost}(\mathbb{P},\mathbb{Q}),$$
i.e., $\hat{T}$ attains the optimal cost. It remains to check that it satisfies $\hat{T}\sharp(\mathbb{P}\times\mathbb{S})=\mathbb{Q}$, i.e., $\hat{T}$ generates $\mathbb{Q}$ from $\mathbb{P}$. Let $T^{*}$ be any true stochastic OT map. We denote $\mu_x^{*}=T^{*}(x,\cdot)\sharp\mathbb{S}$ and $\hat{\mu}_x=\hat{T}(x,\cdot)\sharp\mathbb{S}$ for all $x\in\mathcal{X}$ and define $\mu^{1}_{x}= \frac{1}{2}(\mu^{*}_{x}+\hat{\mu}_{x})$. Let $T^{1}:\mathcal{X}\times\mathcal{Z}\rightarrow\mathcal{Y}$ be any stochastic map which satisfies $T^{1}(x,\cdot)\sharp\mathbb{S}=\mu_{x}^{1}$ for all $x\in\mathcal{X}$ (\wasyparagraph\ref{sec-derivation}). By using the change of variables, we derive
\begin{equation}\mbox{Cost}(\mathbb{P},\mathbb{Q})\geq  \mathcal{L}(f^{*},T^{1})=\int_{\mathcal{X}}C(x,\mu_{x}^{1})d\mathbb{P}(x)-\int_{\mathcal{X}}\big[\int_{\mathcal{Y}}f^{*}(y)d\mu_{x}^{1}(y)\big]d\mathbb{P}(x)+\int_{\mathcal{Y}}f^{*}(y)\mathbb{Q}(y).
\label{lower-bound-saddle}
\end{equation}
Since $C$ is convex in the second argument, we have 
\begin{equation}C(x,\mu_{1}^{x})=C\big(x,\frac{1}{2}(\mu^{*}_{x}+\hat{\mu}_{x})\big)\geq \frac{1}{2}C(x,\mu^{*}_{x})+\frac{1}{2}C(x,\hat{\mu}_{x}).
\label{strict-convexity-weak}
\end{equation}
Since $C$ is strictly convex, the equality in \eqref{strict-convexity-weak} is possible only when $\mu_{x}^{*}=\hat{\mu}_{x}$. We also note that 
$$\int_{\mathcal{Y}}f^{*}(y)d\mu_{x}^{1}(y)=\int_{\mathcal{Y}}f^{*}(y)d\frac{(\mu^{*}_{x}+\hat{\mu}_{x})}{2}(y)=\frac{1}{2}\int_{\mathcal{Y}}f^{*}(y)d\mu^{*}_{x}(y)+\frac{1}{2}\int_{\mathcal{Y}}f^{*}(y)d\hat{\mu}_{x}(y).$$
We substitute these findings to $\mathcal{L}(f^{*},T^{1})$ and get
\begin{equation}\mbox{Cost}(\mathbb{P},\mathbb{Q})\geq\mathcal{L}(f^{*},T^{1})\geq \frac{1}{2}\mathcal{L}(f^{*},T^{*})+\frac{1}{2}\mathcal{L}(f^{*},\hat{T})=
\nonumber
\\
\frac{1}{2}\mbox{Cost}(\mathbb{P},\mathbb{Q})+\frac{1}{2}\mbox{Cost}(\mathbb{P},\mathbb{Q})=\mbox{Cost}(\mathbb{P},\mathbb{Q}).
\nonumber
\end{equation}
Thus, \eqref{lower-bound-saddle} is an equality $\mathbb{P}$-almost surely for all $x\in\mathcal{X}$ and $\mu_{x}^{*}=\hat{\mu}_{x}$ holds $\mathbb{P}$-almost surely. This means that $T^{*}$ and $\hat{T}$ generate the same distribution from $\mathbb{P}\times\mathbb{S}$, i.e., $\hat{T}$ is a stochastic OT map.
\end{proof}

Our generic framework allows learning stochastic transport maps (Lemma \ref{arginf-lemma}). For strictly convex costs, all the solutions of our objective \eqref{main-objective} are guaranteed to be stochastic OT maps (Lemma \ref{arginf-lemma-inverse}). In the experiments, we focus on strong and weak \textit{quadratic} costs, which are not strictly convex but still provide promising performance in the downstream task of unpaired image-to-image translation (\wasyparagraph\ref{sec-evaluation}). Developing strictly convex costs is a promising research avenue for the future work.


\section{Relation to Prior Works in Unbalanced Optimal Transport}
\label{sec-unbalanced-ot}

In the context of OT, \citep{yang2018scalable} employ a stochastic generator to learn a transport plan $\pi$ in the unbalanced OT problem \citep{chizat2017unbalanced}. Due to this, their optimization objective slightly resembles our objective \eqref{L-def}. However, this similarity is deceptive. Unlike strong \eqref{ot-primal-form} or weak \eqref{ot-primal-form-weak} OT, the unbalanced OT is an \textbf{unconstrained} problem, i.e., there is no need to satisfy $\pi\in\Pi(\mathbb{P},\mathbb{Q})$. This makes unbalanced OT easier to handle: to optimize it one just has to parametrize the plan $\pi$ and backprop through the loss. The challenging part with which the authors deal is the estimation of the $\phi$-divergence terms in the unbalanced OT objective. These terms can be interpreted as a \textit{soft relaxation} of the constraints $\pi\in\Pi(\mathbb{P},\mathbb{Q})$, i.e., penalization for disobeying the constraints. The authors compute these terms by employing the variational (dual) formula from $f$-GAN \citep{nowozin2016f}. This yields a GAN-style optimization problem $\min_{T}\max_{f}$ which is \textbf{similar} to other problems in the generative adversarial framework. The problem we tackle is strong \eqref{ot-primal-form} and weak \eqref{ot-primal-form-weak} OT
which requires enforcing of the constraint $\pi\in\Pi(\mathbb{P},\mathbb{Q})$. We reformulate the dual (weak) OT problem \eqref{ot-weak-dual} into maximin problem \eqref{L-def} which can be used to recover the OT plan (via the stochastic map $T$). Our approach can be viewed as a \textit{hard enforcement} of the constraints. Our $\max_{f}\min_{T}$ saddle point problem \eqref{L-def} is \textbf{atypical} for the traditional generative adversarial framework. 

\clearpage
\section{Additional Experimental Results}
\label{sec-additional-exp-results}

\begin{figure*}[!h]\vspace{-3mm}
\begin{subfigure}[b]{0.99\linewidth}
\centering
\includegraphics[width=0.995\linewidth]{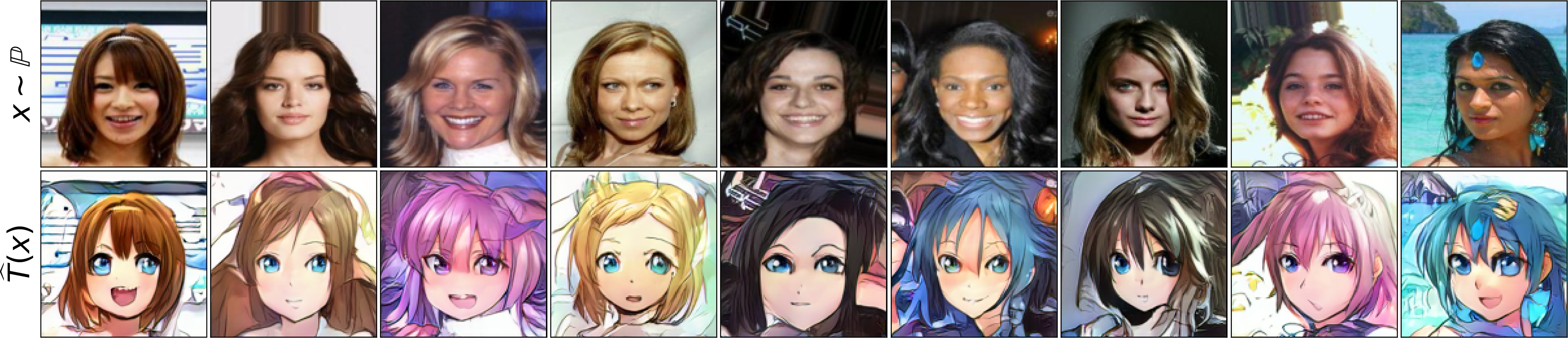}
\caption{\centering Celeba (female) $\rightarrow$ anime translation, $128\times 128$.}
\label{fig:celeba-female-anime-128-ext}
\end{subfigure}\vspace{1mm}
\begin{subfigure}[b]{0.99\linewidth}
\centering
\includegraphics[width=0.995\linewidth]{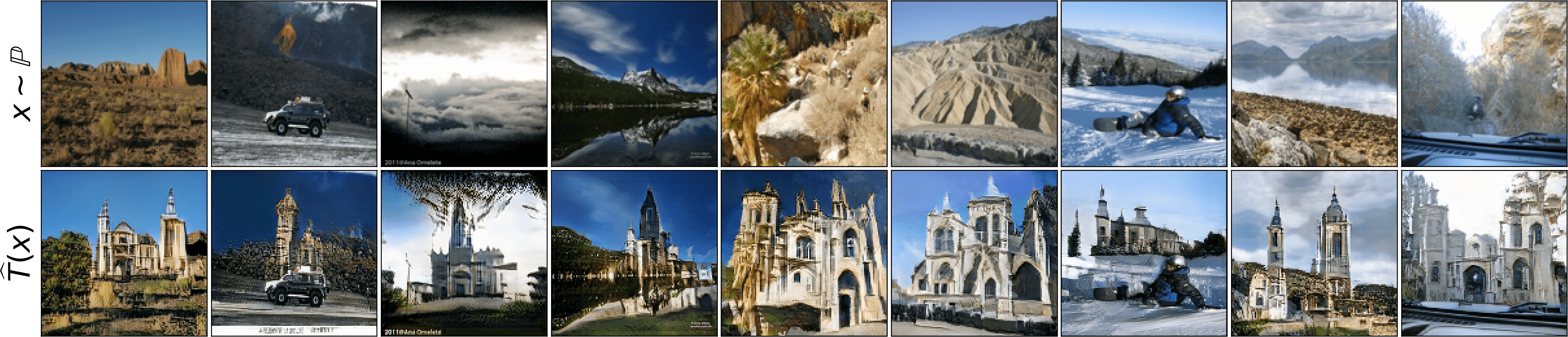}
\caption{\centering Outdoor $\rightarrow$ church, $128\times 128$.}
\label{fig:outdoor-church-128-ext}
\end{subfigure}\vspace{1mm}
\begin{subfigure}[b]{0.99\linewidth}
\centering
\includegraphics[width=0.995\linewidth]{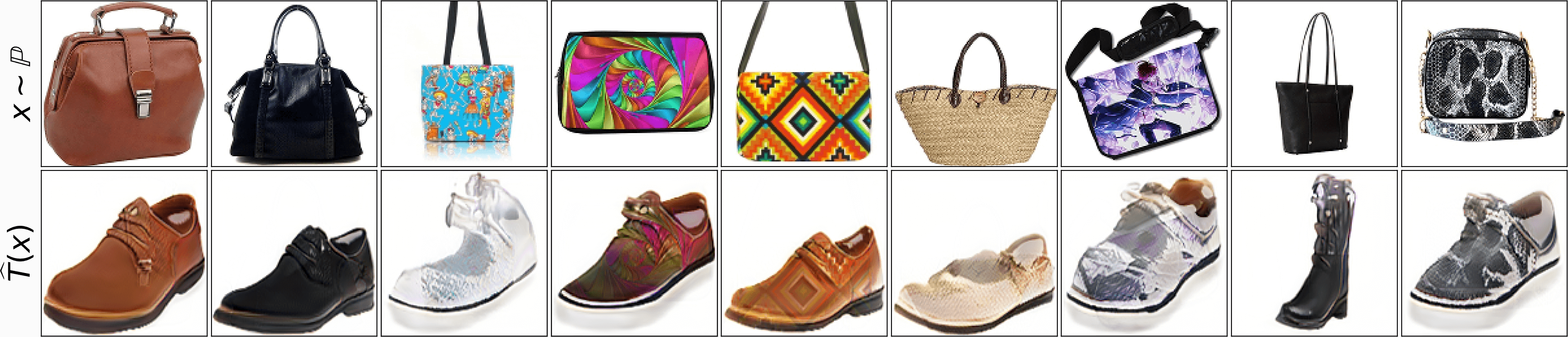}
\caption{\centering Handbags $\rightarrow$ shoes, $128\times 128$.}
\label{fig:handbag-shoes-128-ext}
\end{subfigure}\vspace{1mm}
\begin{subfigure}[b]{0.99\linewidth}
\centering
\includegraphics[width=0.995\linewidth]{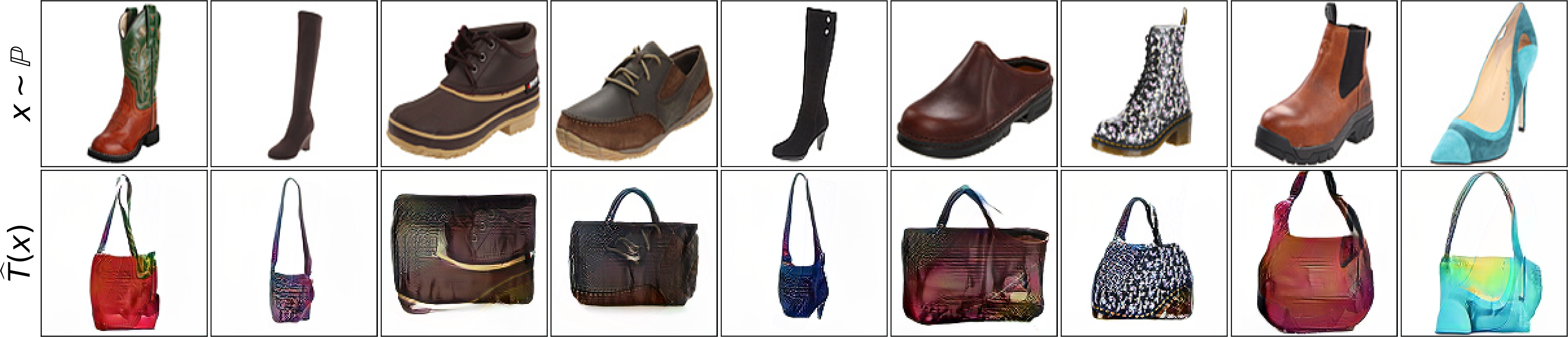}
\caption{\centering Shoes $\rightarrow$ handbags, $128\times 128$.}
\label{fig:shoes-handbag-128-ext}
\end{subfigure}
\vspace{-2mm}
\caption{\centering Unpaired translation with OT maps ($\mathbb{W}_{2}$). Additional examples (part 1).}
\label{fig:style-translation-128-ext}
\vspace{-20mm}
\end{figure*}

\begin{figure*}[!h]
\begin{subfigure}[b]{0.99\linewidth}
\centering
\includegraphics[width=0.995\linewidth]{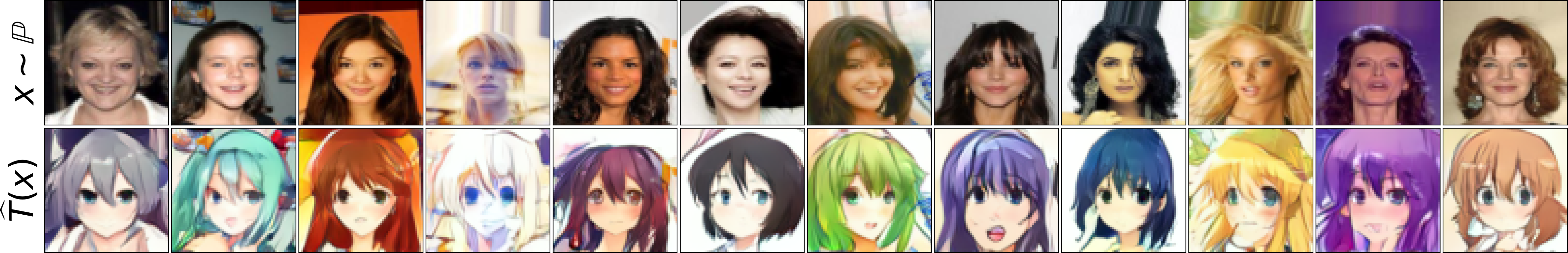}
\caption{\centering Celeba (female) $\rightarrow$ anime translation, $64\times 64$.}
\label{fig:celeba-female-anime-64-ext}
\end{subfigure}\vspace{2mm}
\begin{subfigure}[b]{0.99\linewidth}
\centering
\includegraphics[width=0.995\linewidth]{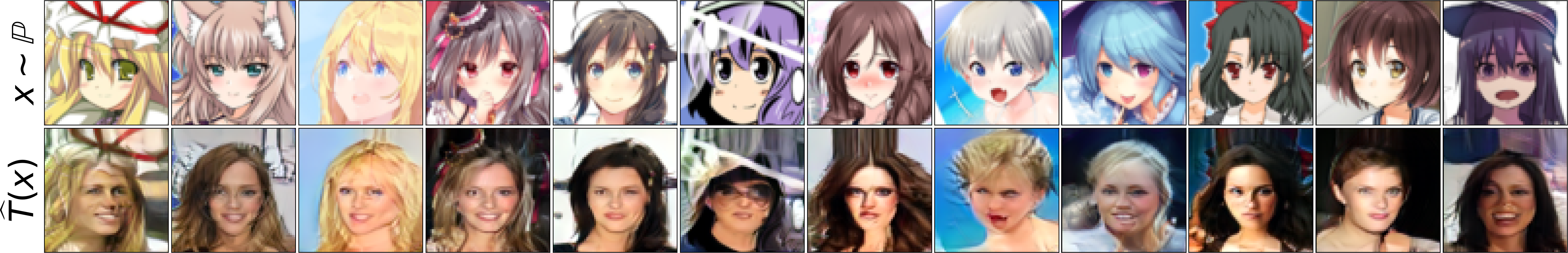}
\caption{\centering Anime $\rightarrow$ celeba (female) translation, $64\times 64$.}
\label{fig:anime-celeba-female-64-ext}
\end{subfigure}\vspace{2mm}
\begin{subfigure}[b]{0.99\linewidth}
\centering
\includegraphics[width=0.995\linewidth]{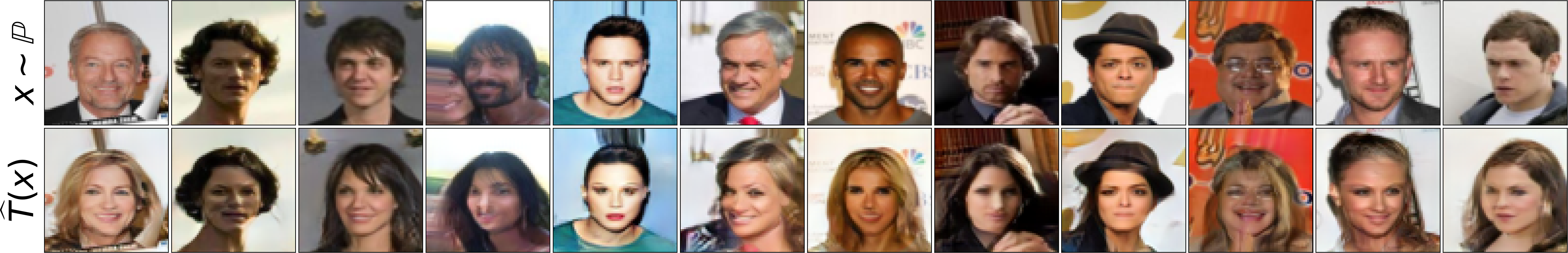}
\caption{\centering Celeba (male) $\rightarrow$ celeba (female) translation, $64\times 64$.}
\label{fig:celeba-male-celeba-female-64-ext}
\end{subfigure}\vspace{2mm}
\begin{subfigure}[b]{0.99\linewidth}
\centering
\includegraphics[width=0.995\linewidth]{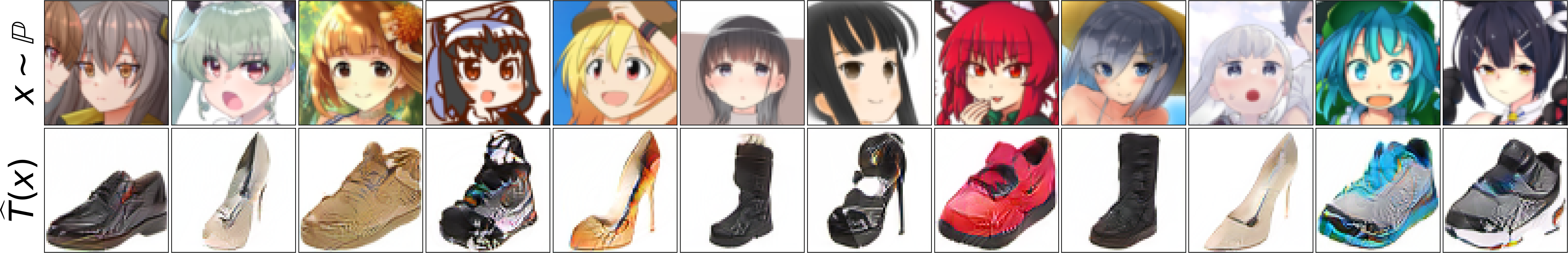}
\caption{\centering Anime $\rightarrow$ shoes translation, $64\times 64$.}
\label{fig:anime-shoes-64-ext}
\end{subfigure}
\caption{\centering Unpaired translation with OT maps ($\mathbb{W}_{2}$). Additional examples (part 2).}
\label{fig:style-translation-64-ext}
\end{figure*}

\clearpage


\begin{figure*}[!h]
\begin{subfigure}[b]{0.99\linewidth}
\centering
\includegraphics[width=0.995\linewidth]{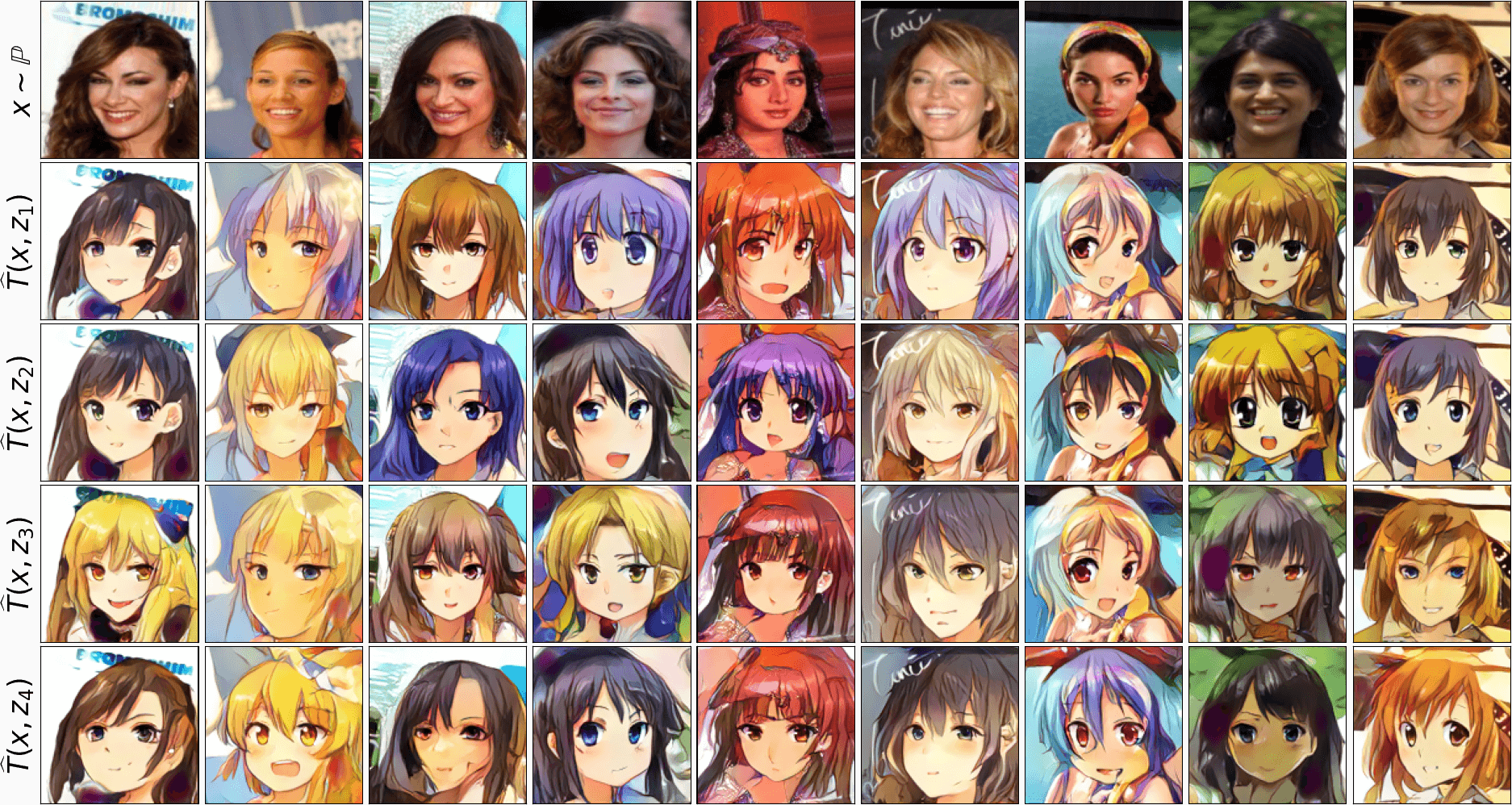}
\caption{\centering Input images $x$ and random translated examples $T(x,z)$.}
\label{fig:celeba-anime-128-ext}
\end{subfigure}\vspace{2mm}

\begin{subfigure}[b]{0.99\linewidth}
\centering
\includegraphics[width=0.995\linewidth]{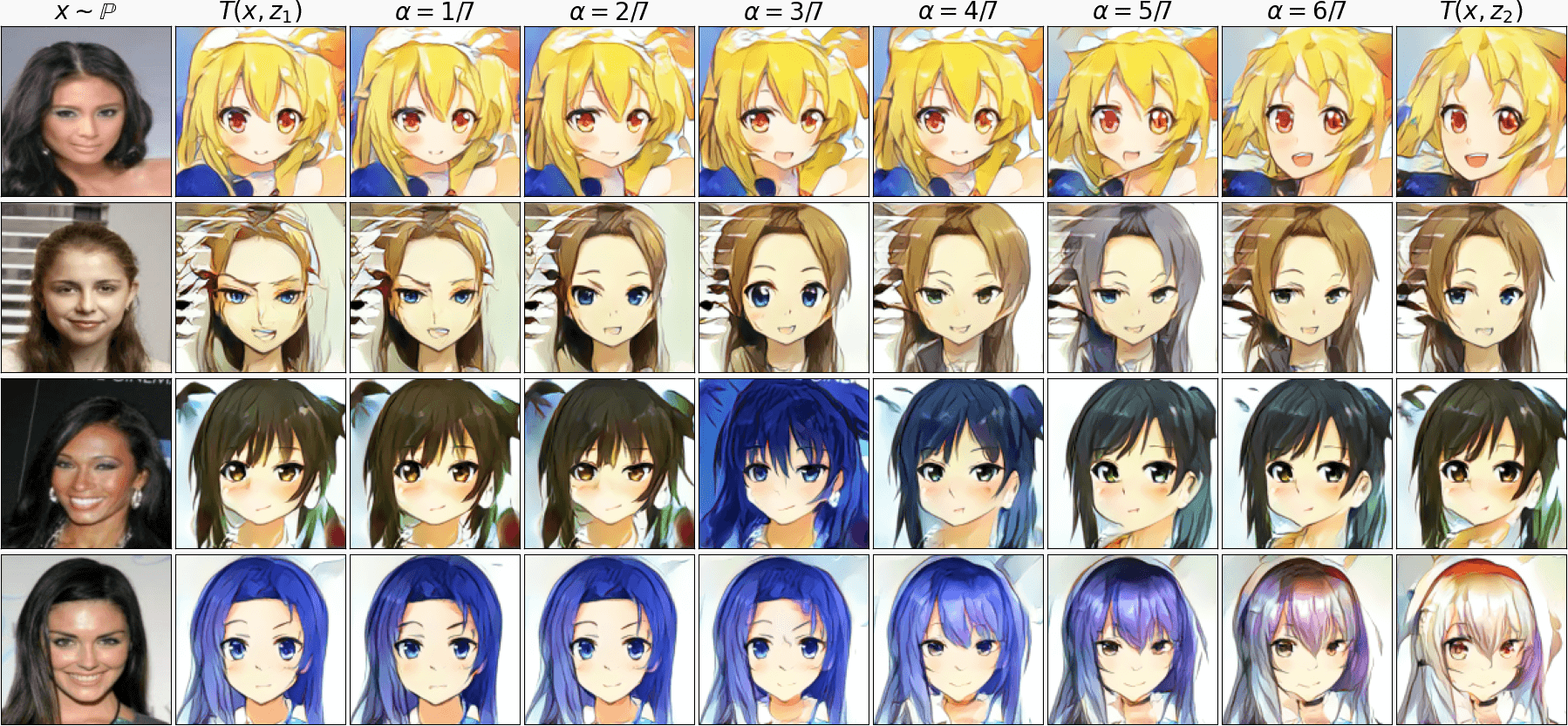}
\caption{\centering Interpolation in the conditional latent space, $z=(1-\alpha)z_{1}+\alpha z_{2}$.}
\label{fig:celeba-anime-128-interpolation}
\end{subfigure}
\caption{\centering Celeba (female) $\rightarrow$ anime, $128\times 128$, stochastic. Additional examples.}
\label{fig:celeba-anime-128-weak-interp}
\end{figure*}

\begin{figure*}[!h]
\begin{subfigure}[b]{0.99\linewidth}
\centering
\includegraphics[width=0.995\linewidth]{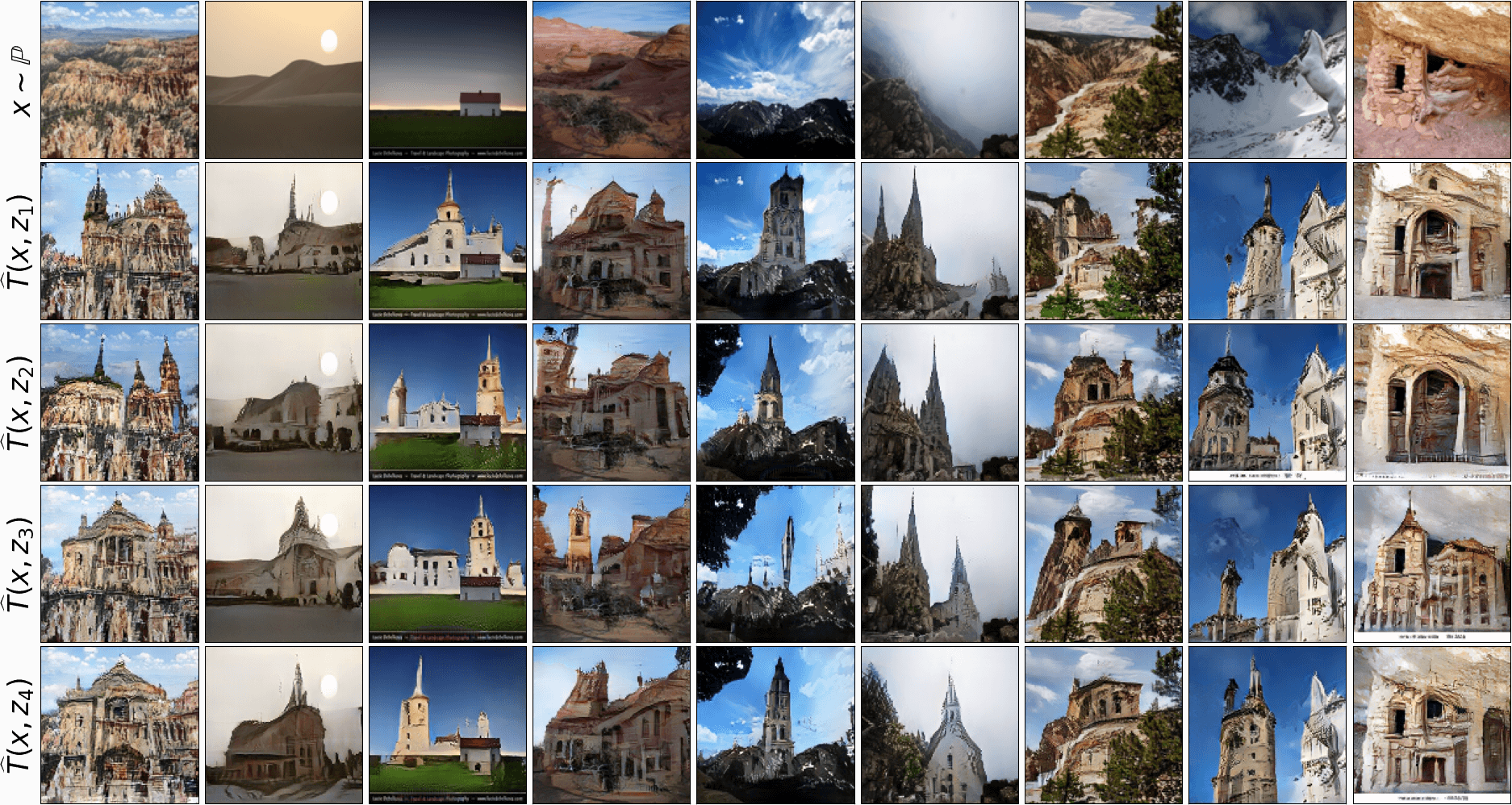}
\caption{\centering Input images $x$ and random translated examples $T(x,z)$.}
\label{fig:outdoor-church-128-ext}
\end{subfigure}\vspace{2mm}

\begin{subfigure}[b]{0.99\linewidth}
\centering
\includegraphics[width=0.995\linewidth]{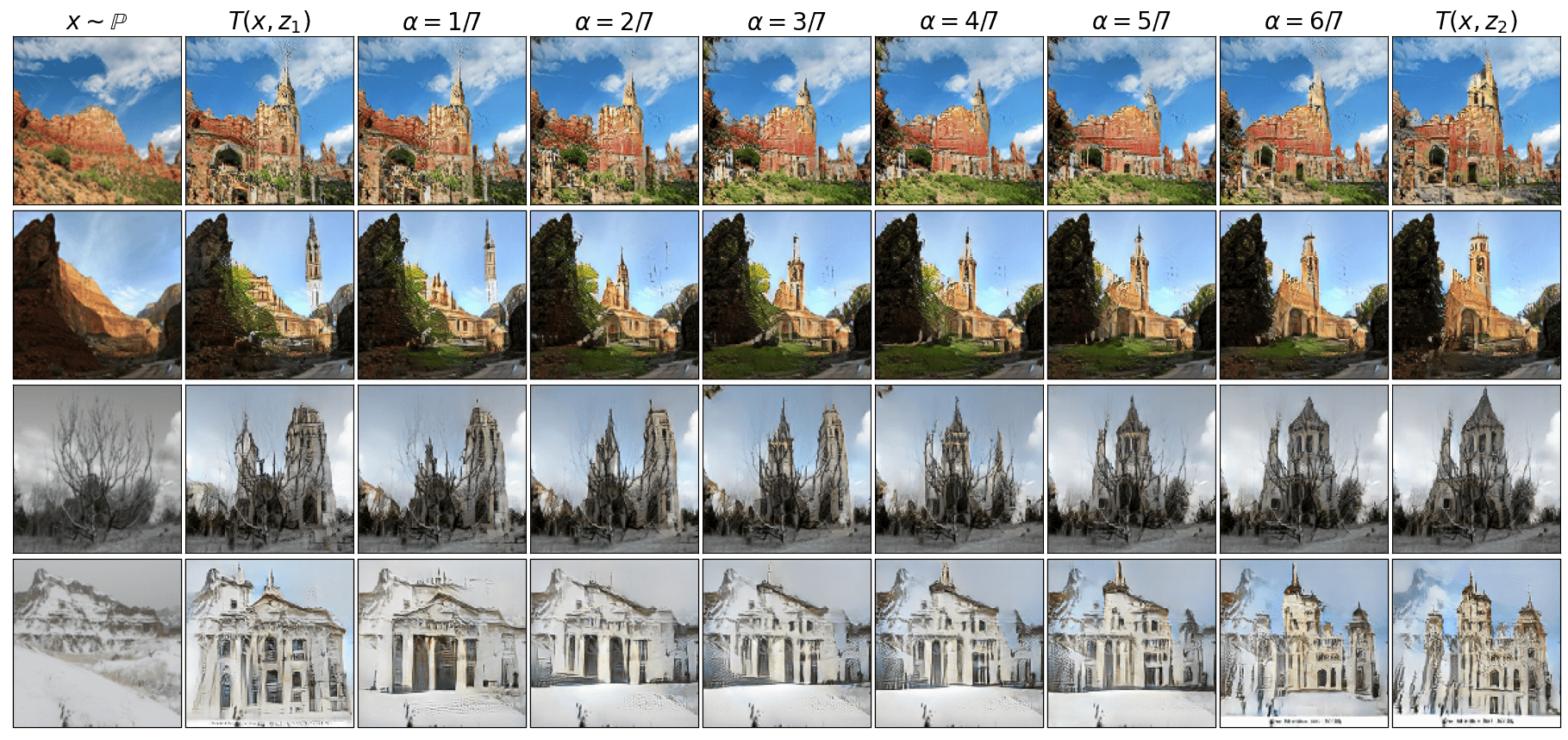}
\caption{\centering Interpolation in the conditional latent space, $z=(1-\alpha)z_{1}+\alpha z_{2}$.}
\label{fig:outdoor-church-128-interpolation}
\end{subfigure}
\caption{\centering Outdoor $\rightarrow$ church, $128\times 128$, stochastic. Additional examples.}
\label{fig:outdoor-church-128-weak-interp}
\end{figure*}

\begin{figure*}[!h]
\begin{subfigure}[b]{0.99\linewidth}
\centering
\includegraphics[width=0.995\linewidth]{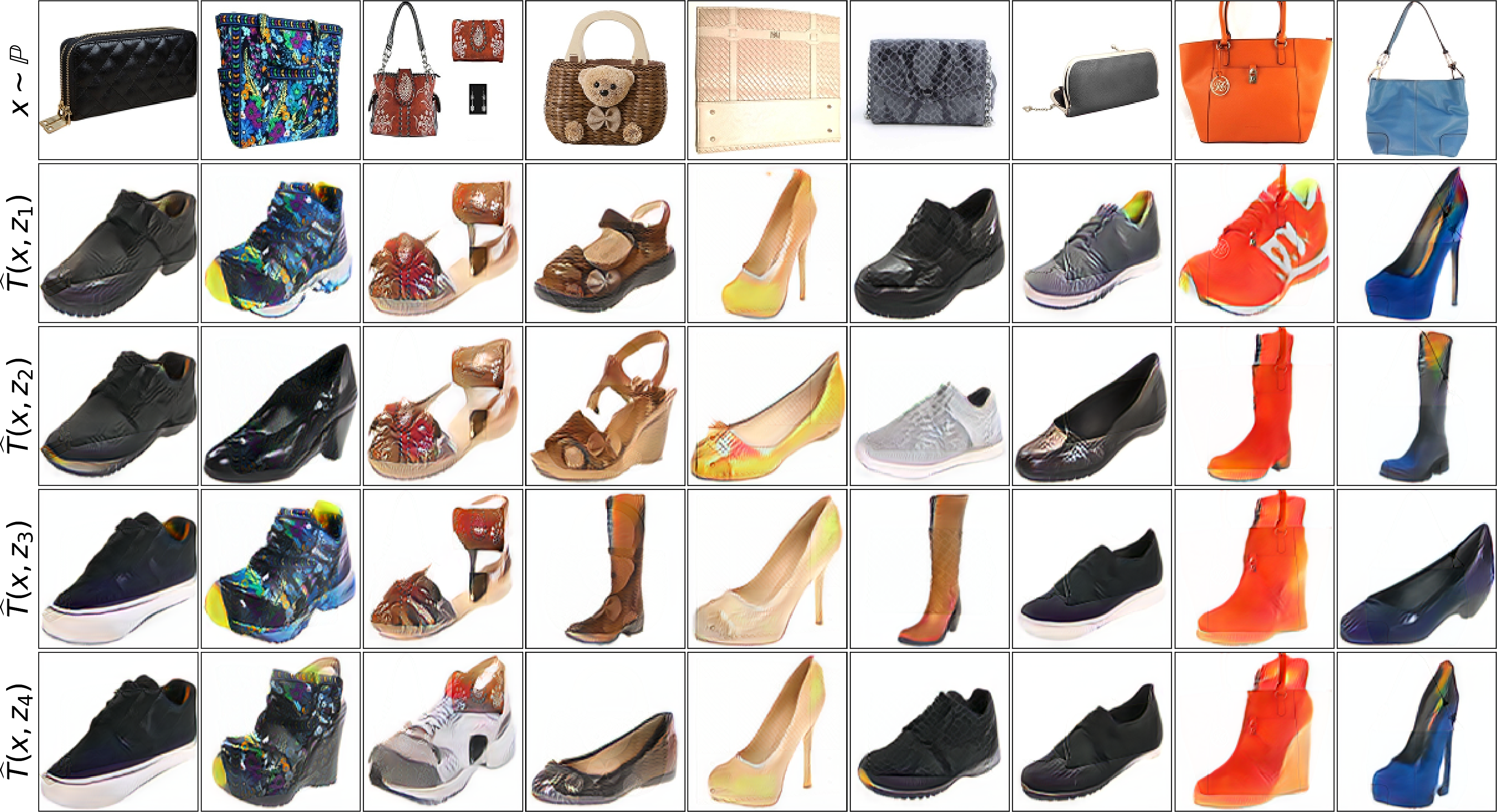}
\caption{\centering Input images $x$ and random translated examples $T(x,z)$.}
\label{fig:handbag-shoe-128-ext}
\end{subfigure}\vspace{2mm}

\begin{subfigure}[b]{0.99\linewidth}
\centering
\includegraphics[width=0.995\linewidth]{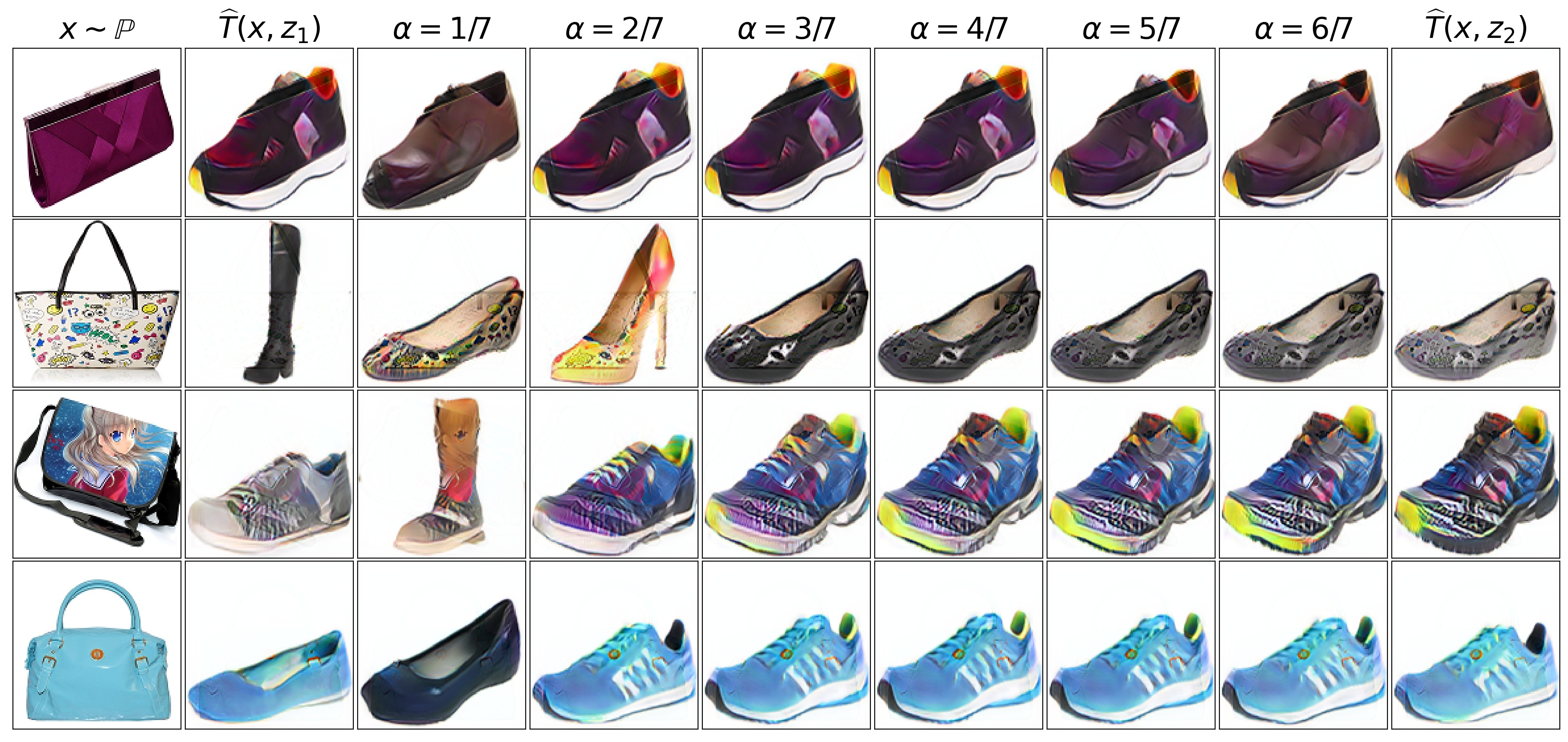}
\caption{\centering Interpolation in the conditional latent space, $z=(1-\alpha)z_{1}+\alpha z_{2}$.}
\label{fig:handbag-shoes-128-interpolation}
\end{subfigure}
\caption{\centering Handbags $\rightarrow$ shoes translation, $128\times 128$, stochastic. Additional examples.}
\label{fig:style-translation-128-interp}
\end{figure*}

\begin{figure*}[!h]
\begin{subfigure}[b]{0.99\linewidth}
\centering
\includegraphics[width=0.995\linewidth]{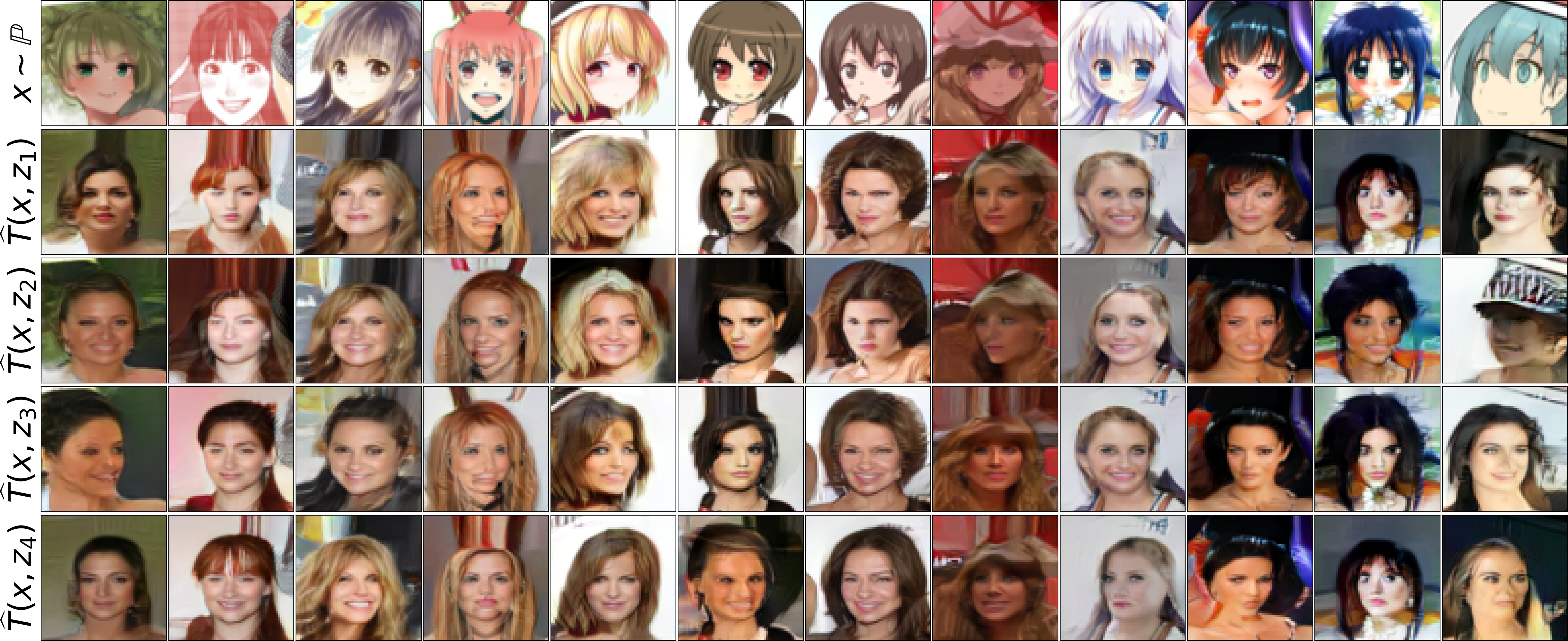}
\caption{\centering Input images $x$ and random translated examples $T(x,z)$.}
\label{fig:anime-celeba-woman-64-ext}
\end{subfigure}\vspace{2mm}

\begin{subfigure}[b]{0.99\linewidth}
\centering
\includegraphics[width=0.995\linewidth]{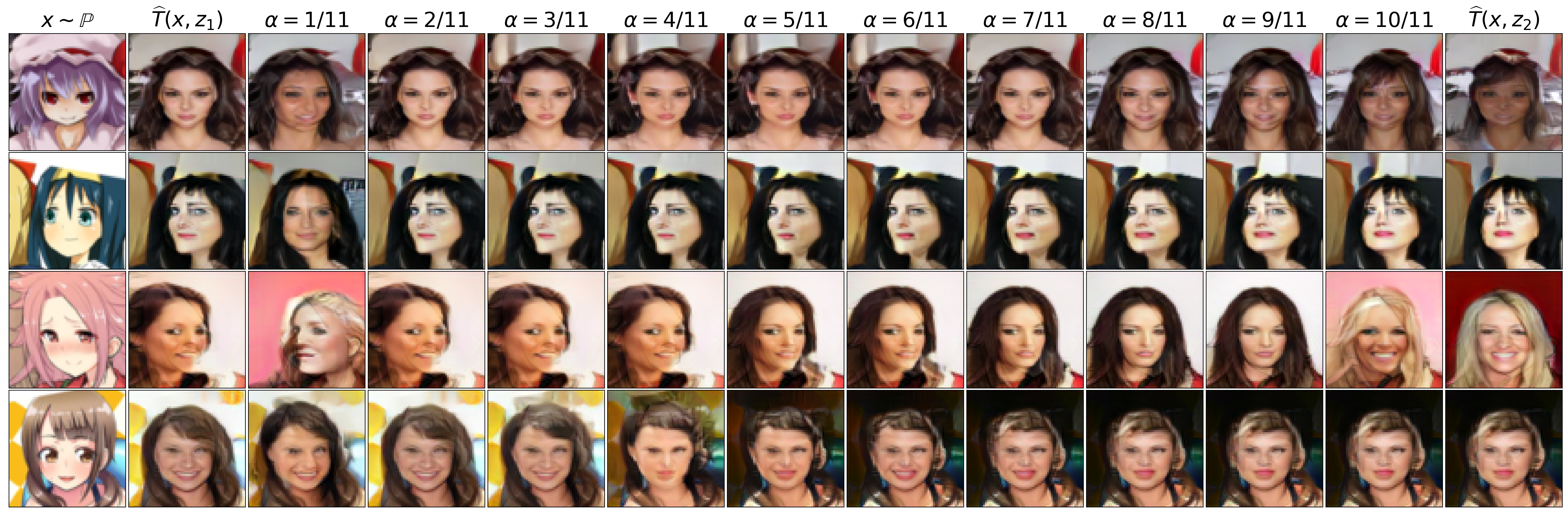}
\caption{\centering Interpolation in the conditional latent space, $z=(1-\alpha)z_{1}+\alpha z_{2}$.}
\label{fig:anime-celeba-64-interpolation}
\end{subfigure}
\caption{\centering Anime $\rightarrow$ celeba (female) translation, $64\times 64$, stochastic. Additional examples.}
\label{fig:anime-celeba-woman-64-interp}
\end{figure*}

\begin{figure*}[!h]
\begin{subfigure}[b]{0.99\linewidth}
\centering
\includegraphics[width=0.995\linewidth]{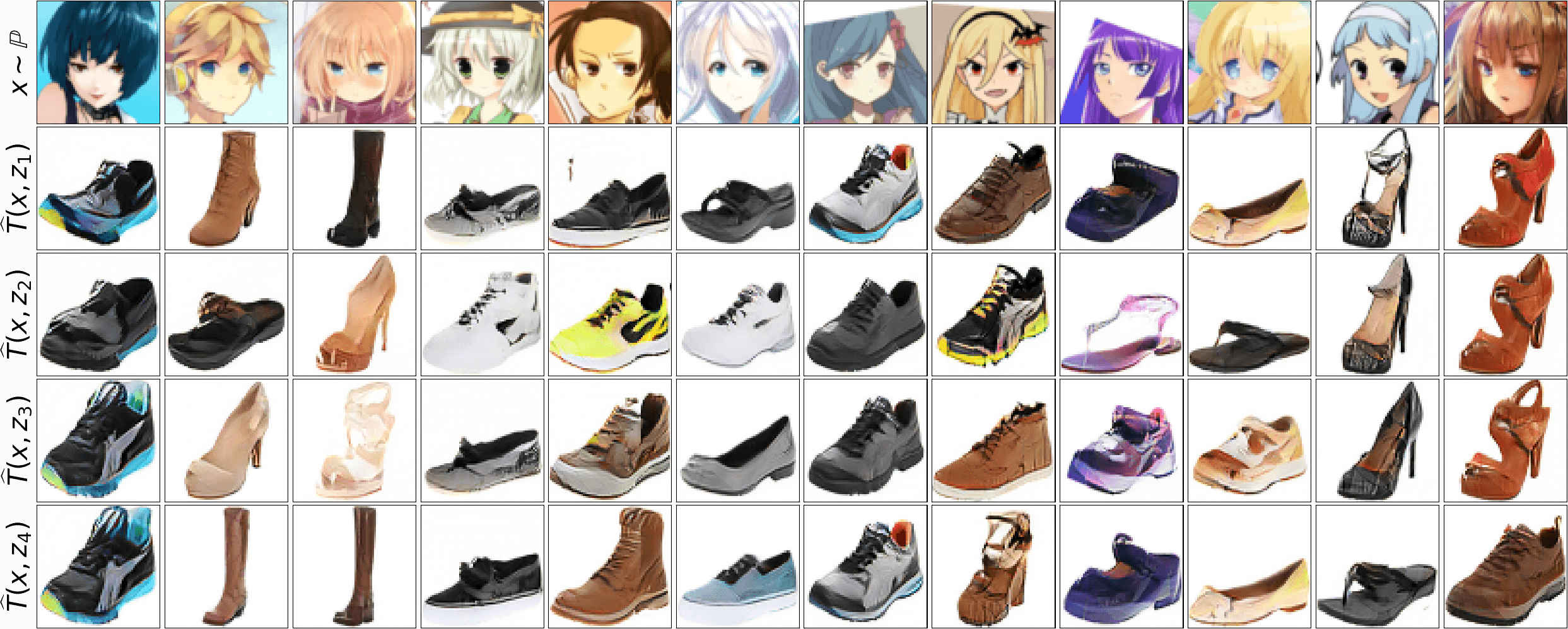}
\caption{\centering Input images $x$ and random translated examples $T(x,z)$.}
\label{fig:anime-shoes-64-ext}
\end{subfigure}\vspace{2mm}

\begin{subfigure}[b]{0.99\linewidth}
\centering
\includegraphics[width=0.995\linewidth]{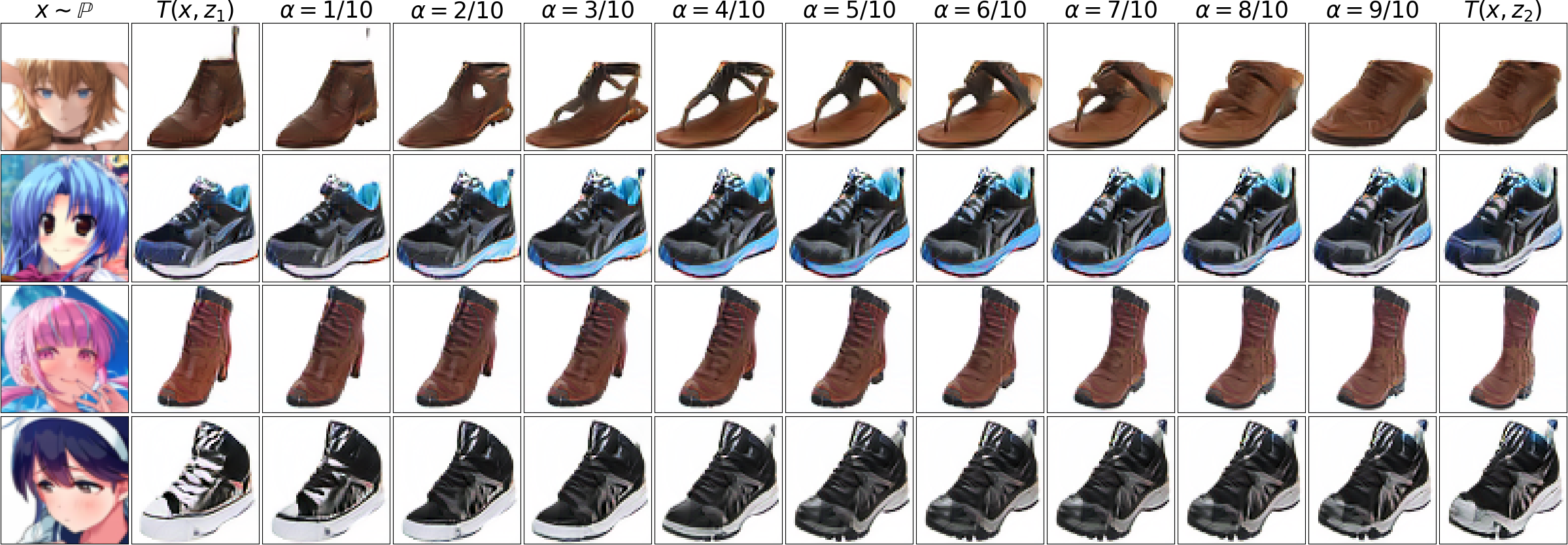}
\caption{\centering Interpolation in the conditional latent space, $z=(1-\alpha)z_{1}+\alpha z_{2}$.}
\label{fig:anime-shoes-64-interpolation}
\end{subfigure}
\caption{\centering Anime $\rightarrow$ shoes translation, $64\times 64$, stochastic. Additional examples.}
\label{fig:anime-shoes-64-interp}
\end{figure*}

\begin{figure*}[!h]
\begin{subfigure}[b]{0.99\linewidth}
\centering
\includegraphics[width=0.995\linewidth]{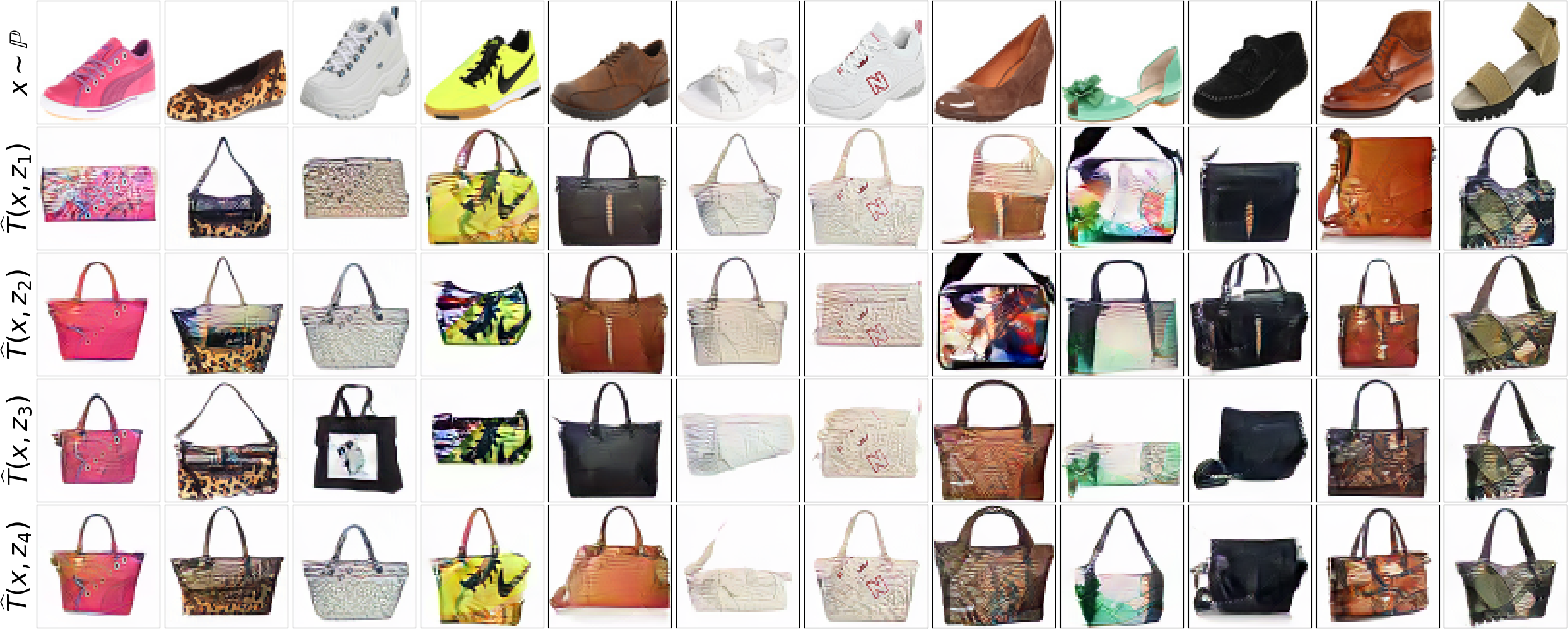}
\caption{\centering Input images $x$ and random translated examples $T(x,z)$.}
\label{fig:shoes-handbags-64-ext}
\end{subfigure}\vspace{2mm}

\begin{subfigure}[b]{0.99\linewidth}
\centering
\includegraphics[width=0.995\linewidth]{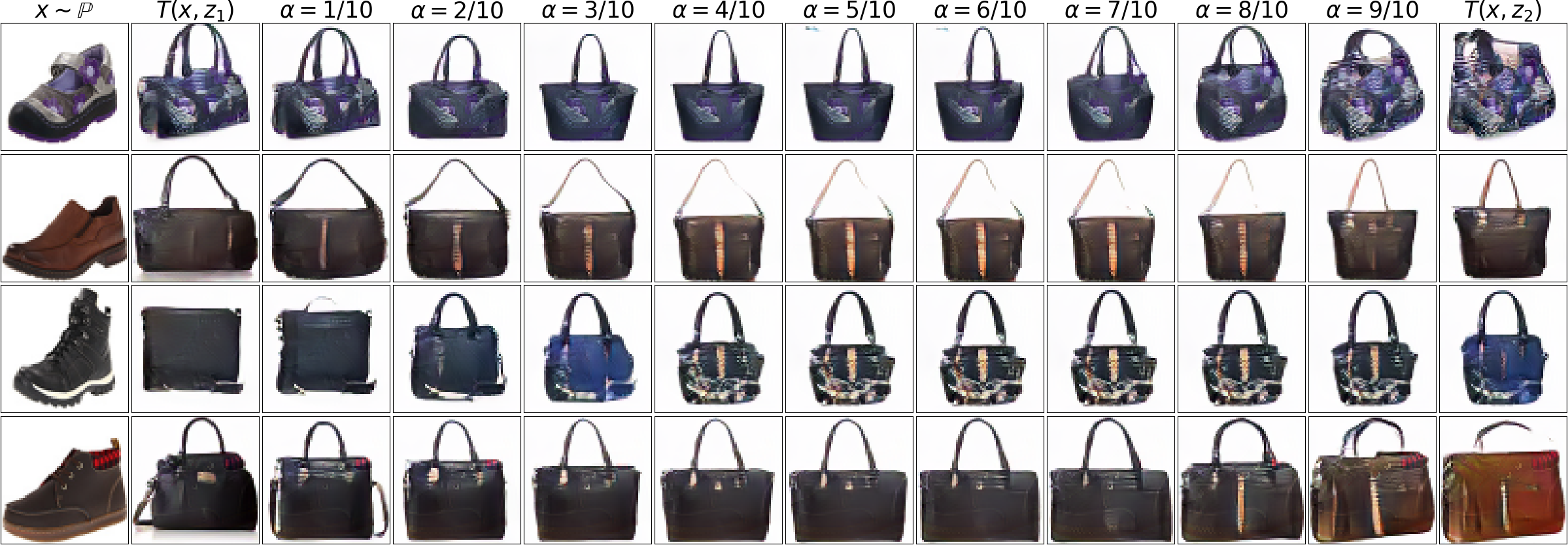}
\caption{\centering Interpolation in the conditional latent space, $z=(1-\alpha)z_{1}+\alpha z_{2}$.}
\label{fig:shoes-handbags-64-interpolation}
\end{subfigure}
\caption{\centering Shoes $\rightarrow$ handbags, $64\times 64$, stochastic. Additional examples.}
\label{fig:shoes-handbag-64-interp}
\end{figure*}

\clearpage

\section{Examples with the Synchronized Noise}
\label{sec-synchronized}

In this section, for \textit{handbags}$\rightarrow$\textit{shoes} (64$\times$64) and \textit{outdoor}$\rightarrow$\textit{church} (128$\times$128) datasets, we pick a batch of input data $x_{1},\dots,x_{N}\sim\mathbb{P}$ and noise $z_{1},\dots,z_{K}\sim\mathbb{S}$ to plot the $N\times K$ matrix of generated images $T_{\theta}(x_n,z_k)$. Our goal is to assess whether using the same $z_k$ for different $x_n$ leads to some shared effects such as the same form a generated shoe or church.

The images results are given in Figures \ref{fig:bags-shoes-64-synchronized} and \ref{fig:outdoor2church-128-synchronized}. Qualitatively, we do not find any close relation between images produced with the same noise vectors for different input images $x_{n}$.

\begin{figure*}[!h]
\centering
\includegraphics[width=0.995\linewidth]{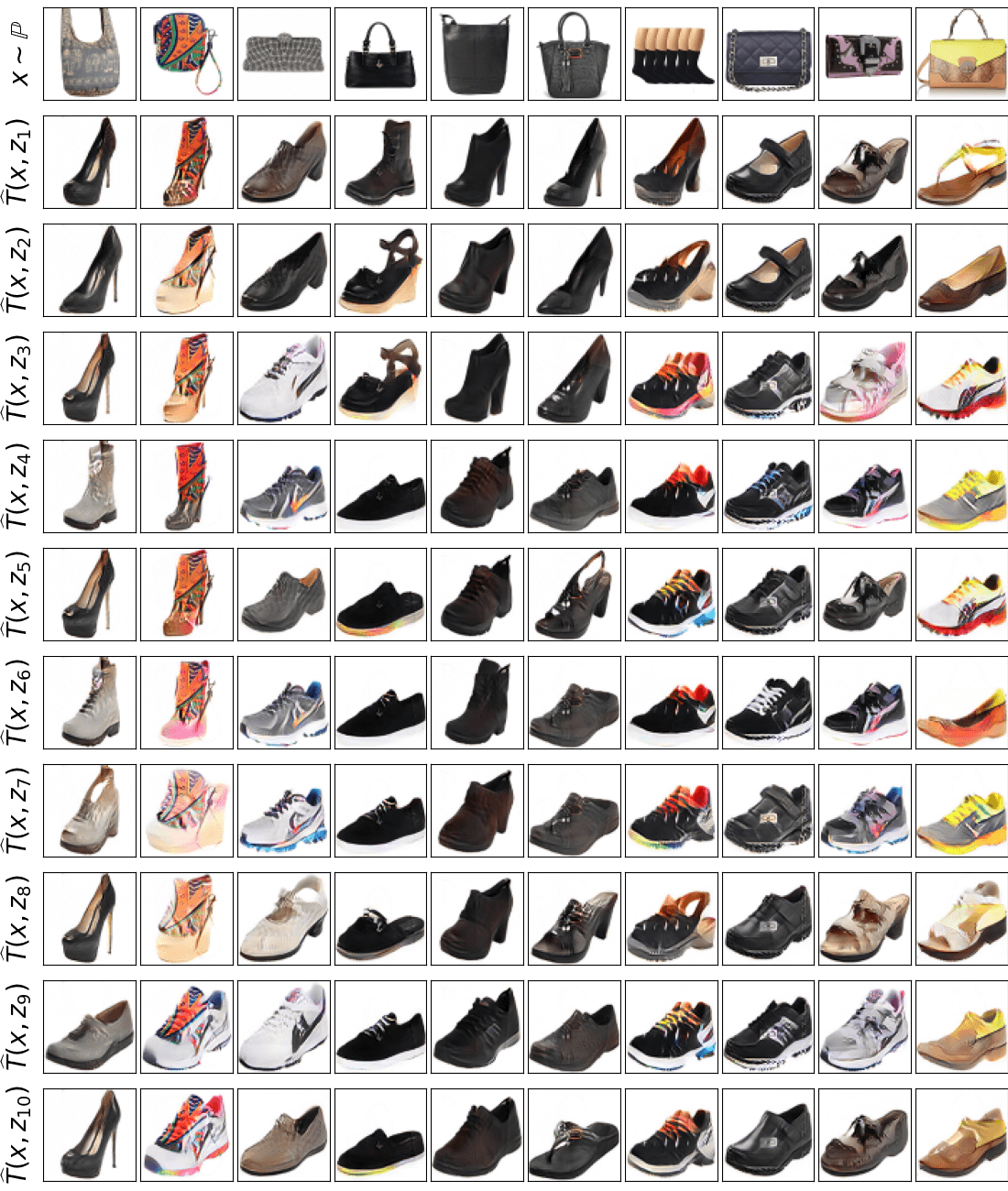}
\caption{\centering Input images $x$ and random translated examples $T(x,z)$.\protect\linebreak In this example, noise inputs $z$ are synchronized between different input images $x$.}
\label{fig:bags-shoes-64-synchronized}
\end{figure*}

\begin{figure*}[!h]
\centering
\includegraphics[width=0.995\linewidth]{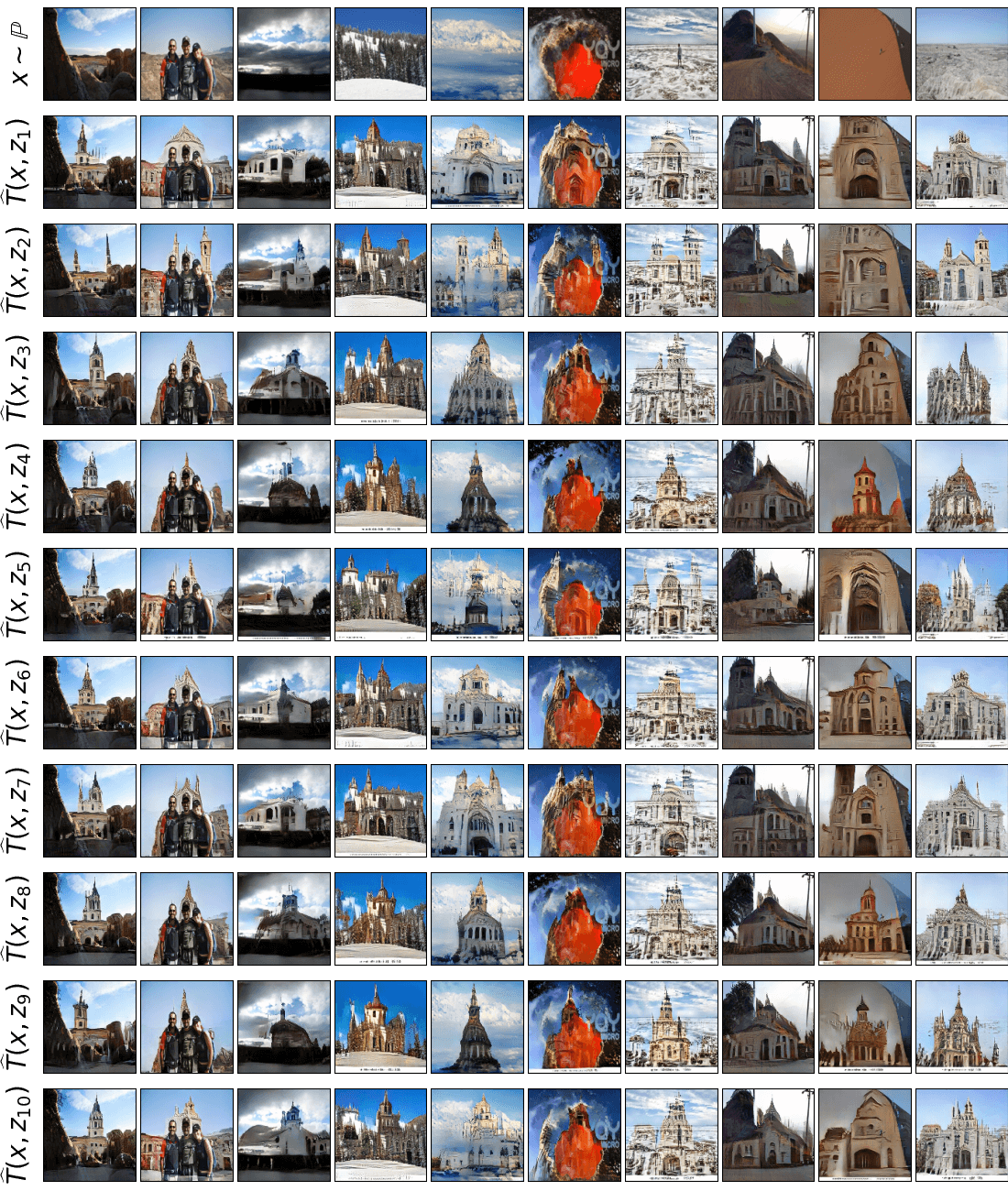}
\caption{\centering Input images $x$ and random translated examples $T(x,z)$.\protect\linebreak In this example, noise inputs $z$ are synchronized between different input images $x$.}
\label{fig:outdoor2church-128-synchronized}
\end{figure*}

\end{document}